\definecolor{Mycolor}{rgb}{.2,.2,.8}
\newtheorem{definition}{Definition}[section]
\newtheorem{assumption}{Assumption}[section]
\newtheorem{theorem}{Theorem}[section]
\newtheorem{proposition}{Proposition}[section]
\newtheorem{corollary}{Corollary}[theorem]
\newtheorem{lemma}[theorem]{Lemma}
\newcommand{\Ac}{\mathcal{A}}
\newcommand{\Mc}{\mathcal{M}}
\newcommand{\Oc}{\mathcal{O}}
\newcommand{\Rc}{\mathcal{R}}
\newcommand{\Sc}{\mathcal{S}}
\newcommand{\av}{\boldsymbol{a}}
\newcommand{\Eb}{\mathbb{E}}
\newcommand{\Rb}{\mathbb{R}}
\newcommand{\argmax}{\arg\max}
\newcommand{\change}[1]{{\color{black}#1}} 
\title{Provably Fast Convergence of Independent Natural Policy Gradient for Markov Potential Games}
\author{%
  Youbang Sun\thanks{The first two authors contributed equally.} \\
  Northeastern University\\
  \texttt{sun.youb@northeastern.edu} \\
  \And
  Tao Liu$^*$ \\
  Texas A\&M University \\
  \texttt{tliu@tamu.edu} \\
    \And
  Ruida Zhou \\
  Texas A\&M University \\
  \texttt{ruida@tamu.edu} \\
  \AND
  P. R. Kumar \\
  Texas A\&M University \\
  \texttt{prk@tamu.edu} \\
  \And
  Shahin Shahrampour \\
  Northeastern University\\
  \texttt{s.shahrampour@northeastern.edu} 
}
\begin{document}

\maketitle

\begin{abstract}

This work studies an independent natural policy gradient (NPG) algorithm for the multi-agent reinforcement learning problem in Markov potential games. It is shown that, under mild technical assumptions and the introduction of the \textit{suboptimality gap}, the independent NPG method with an oracle providing exact policy evaluation asymptotically reaches an $\epsilon$-Nash Equilibrium (NE) within $\mathcal{O}(1/\epsilon)$ iterations. This improves upon the previous best result of $\mathcal{O}(1/\epsilon^2)$ iterations and is of the same order, $\mathcal{O}(1/\epsilon)$, that is achievable for the single-agent case. Empirical results for a synthetic potential game and a congestion game are presented to verify the theoretical bounds. 

\end{abstract}

\section{Introduction}

Reinforcement learning (RL) is often impacted by the presence and interactions of several agents in a multi-agent system.
This challenge has motivated recent studies of multi-agent reinforcement learning (MARL) in stochastic games \cite{zhang2021multi,busoniu2008comprehensive}. Applications of MARL include robotics \cite{sartoretti2019distributed}, modern production systems \cite{bakakeu2021multi}, economic decision making \cite{trott2021building}, and autonomous driving \cite{shalev2016safe}. Among the various types of stochastic games, we focus on a commonly studied model for MARL, known as Markov Potential Games (MPGs). MPGs are seen as a generalization of canonical Markov Decision Processes (MDPs) in the multi-agent setting. In MPGs, there exists a potential function that can track the value changes of all agents. Unlike single-agent systems, where the goal is to find the optimal policy, the objective in this paper is to find a global policy, formed by the joint product of a set of local policies, that leads the system to reach a Nash equilibrium (NE) \cite{nash1950equilibrium}, which is precisely defined in Section \ref{formulation}.

A major challenge in the analysis of multi-agent systems is the restriction on joint policies of agents. For single-agent RL, policy updates are designed to increase the probability of selecting the action with the highest reward. However, in multi-agent systems, the global policy is constructed by taking the product of local agents' policies, which makes MARL algorithms suffer a greater risk of being trapped near undesirable stationary points. Consequently, finding a NE in MARL is more challenging than finding the global optimum in the single-agent case, and it is therefore difficult for MPGs to recover the convergence rates of single-agent Markov decision processes (MDPs).

Additionally, the global action space in MPGs scales exponentially with the number of agents within the system, making it crucial to find an algorithm that scales well for a large number of agents.  Recent studies \cite{ding2022independent, jin2021v} addressed the issue by an approach called independent learning, where each agent performs policy update based on local information without regard to policies of the other agents. 
Independent learning algorithms scale only linearly with respect to the number of agents and are therefore preferred for large-scale {multi-agent problems}.

Using algorithms such as policy gradient (PG) and natural policy gradient (NPG), single-agent RL can provably converge to the global optimal policy \cite{agarwal2021theory}. However, extending these algorithms from single-agent to multi-agent settings presents natural challenges as discussed above. Multiple recent works have analyzed PG and NPG in multi-agent systems. However, due to the unique geometry of the problem and complex relationships among the agents, the theoretical understanding of MARL is still limited, with most works showing slower convergence rates when compared to their single-agent counterparts (see Table \ref{tab:compare}).

\begin{table}[t]\label{comparison}
\caption{Convergence rate results for policy gradient-based methods in Markov potential games. Some results have been modified to ensure better comparison. }
\label{tab:compare}
    \begin{center}
    \begin{small}
    \begin{sc}
    \begin{tabular}{c|c}
    \toprule
    Algorithm & Iteration Complexity \tablefootnote{$c, M$ and $\delta^*$ will be formally defined in Section \ref{analysis}. Note that \cite{ding2022independent} has a different definition of $M$. For a fair comparison, we convert the results of \cite{ding2022independent} into the current definition of $M$.}\\
    \midrule
    PG + direct \cite{zhang2021gradient, leonardos2022global} & $\order{\frac{\sum_{i=1}^n |\Ac_i|  M^2}{(1-\gamma)^4 \epsilon^2}}$ \\
    PG + softmax \cite{zhangglobal} & $\order{\frac{n \max_i|\Ac_i| M^2}{(1-\gamma)^4 c^2 \epsilon^2}}$\\
    NPG + softmax \cite{zhangglobal} & $\order{\frac{n M}{(1-\gamma)^3 c \epsilon^2}}$ \\
    \makecell{PG + softmax  + log-barrier reg. \cite{zhangglobal} }& $\order{\frac{n \max_i |\Ac_i|^2  M^2}{(1-\gamma)^4 \epsilon^2}}$ \\
    \makecell{NPG + softmax + log-barrier reg. \cite{zhangglobal}} & $\order{\frac{n \max_i |\Ac_i|  M^2}{(1-\gamma)^4 \epsilon^2}}$ \\
    Projected Q ascent \cite{ding2022independent} & \makecell{$\order{\frac{n^2 \max_i |\Ac_i|^2  M^2}{(1-\gamma)^7 \epsilon^4}}$ 
    or $\order{\frac{n \max_i |\Ac_i|  M^4}{(1-\gamma)^2 \epsilon^2}}$}\\
    \makecell{Projected Q ascent  (fully coop) \cite{ding2022independent}}  & $\order{\frac{n \max_i |\Ac_i| M}{(1-\gamma)^3 \epsilon^2}}$ \\
    \midrule
    (Ours) NPG + softmax & $\order{\frac{\sqrt{n}  M}{(1-\gamma)^2 c \delta^* \epsilon}} $ \tablefootnote{This iteration complexity holds after a finite threshold.} \\
    \bottomrule
    \end{tabular}
    \end{sc}
    \end{small}
    \end{center}
\end{table}

\paragraph{Contributions}
We study the independent NPG algorithm in multi-agent systems and provide a novel technical analysis that guarantees a provably fast convergence rate. 
We start our analysis with potential games in Section \ref{PG_Analysis} and then generalize the findings and provide a convergence guarantee for Markov potential games in Section \ref{MPG_Analysis}. 
We show that under mild assumptions, the ergodic (i.e., temporal average of) NE-gap converges with iteration complexity of $\order{1/\epsilon}$ after a finite threshold (Theorem \ref{NPG_thm}). This result provides a substantial improvement over the best known rate of $\order{1/\epsilon^2}$ in \cite{zhangglobal}.
Our main theorem also reveals mild or improved dependence on multiple critical factors, including the number of agents $n$, the initialization dependent factor $c$, the distribution mismatch coefficient $M$, and the discount factor $\gamma$, discussed in Section \ref{analysis}.
We dedicate Section \ref{suboptimality} to discuss the impact of the asymptotic suboptimality gap $\delta^*$, which is a new factor in this work. 
    
In addition to our theoretical results, two numerical experiments are also conducted in Section \ref{numerical} for verification of the analysis. We consider a synthetic potential game similar to \cite{zhangglobal} and a congestion game from \cite{leonardos2022global}. The omitted proofs of our theoretical results can be found in the appendix.

\subsection{Related Literature}
\label{related}
\paragraph{Markov Potential Games}
Since many properties of single-agent RL do not hold in MARL, the analysis of MARL presents several challenges.
Various settings have been addressed for MARL in recent works. A major distinction between these works stems from whether the agents are competitive or cooperative \cite{gronauer2022multi}. In this paper we consider MPGs introduced in stochastic control \cite{dechert2006stochastic}. MPGs are a generalized formulation of identical-reward cooperative games.  Markov cooperative games have been studied in the early work of \cite{wang2002reinforcement}, and more recently by \cite{lowe2017multi, macua2018learning, song2021can, fox2022independent}. The work \cite{yu2022surprising} also offered extensive empirical results for cooperative games using multi-agent proximal policy optimization. 
The work by \cite{mguni2021learning} established polynomial convergence for MPGs under both Q-update as well as actor-critic update.  \cite{maheshwari2022independent} studied an independent Q-update in MPGs with perturbation, which converges to a stationary point with probability one. 
Conversely, \cite{kleinberg2009multiplicative, heliou2017learning} studied potential games, a special static case of simplified MPGs with no state transition.

\paragraph{Policy Gradient in Games}
Policy gradient methods for centralized MDPs have drawn much attention thanks to recent advancements in RL theory \cite{agarwal2021theory, mei2020global}.
The extension of PG methods to multi-agent settings is quite natural. \cite{daskalakis2020independent, wei2021last, cen2022faster} studied two-player zero-sum competitive games. The general-sum linear-quadratic game was studied in \cite{hambly2021policy}. 
\cite{jin2021v} studied general-sum Markov games and provided convergence of V-learning in two-player zero-sum games.

Of particular relevance to our work are the works \cite{leonardos2022global, zhang2021gradient, zhangglobal,ding2022independent} which focus on the MPG setting and propose adaptations of PG and NPG-based methods from single-agent problems to the MARL setting. Table \ref{tab:compare} provides a detailed comparison between these works. 
The previous theoretical results in multi-agent systems have provided convergence rates dependent on different parameters of the system. However, the best-known iteration complexity to {reach an $\epsilon$-NE} in MARL is $\order{{1}/{\epsilon^2}}$. Therefore, there still exists a rate discrepancy between MARL methods where $\order{1/\epsilon}$ complexity has been established in centralized RL algorithms \cite{agarwal2021theory}. {Our main contribution is to close this gap by establishing an iteration complexity of  $\order{1/\epsilon}$ in this work.}

\section{Problem Formulation}
\label{formulation}
We consider a stochastic game $\mathcal{M} = (n, \Sc, \mathcal{A}, P, \{r_i\}_{i \in [n]}, \gamma, \rho)$ consisting of $n$ agents denoted by a set $[n] = \{1, ..., n\}$.  The global action space $\mathcal{A}= \mathcal{A}_1 \times ... \times \mathcal{A}_n$ is the product of individual action spaces, with the global action defined as $\av := (a_1,..., a_n)$. The global state space is represented by $\Sc$, and the system transition model is captured by $P: \Sc \times \Ac \to \Delta(\Sc)$. Furthermore, each agent is equipped with an individual reward function $r_i: \Sc \times \Ac \to [0, 1]$. We use $\gamma \in (0,1)$ to denote the discount factor and $\rho \in \Delta(\Sc)$ to denote the initial state distribution.

The system policy is denoted by $\pi:\mathcal{S} \rightarrow  \Delta(\Ac_1) \times \dots \times \Delta(\Ac_n) \subset \Delta(\Ac)$, where $\Delta(\Ac)$ is the probability simplex over the global action space.
In the multi-agent setting, all agents make decisions independently given the observed state, often referred to as a \textit{decentralized} stochastic policy \cite{zhangglobal}. Under this setup, we have $\pi(\av|s) = \prod_{i\in [n]} \pi_i(a_i|s)$,
where $\pi_i:\mathcal{S}\rightarrow \Delta(\mathcal{A}_i)$ is the local policy for agent $i$.
For the ease of notation, we denote the joint policy over the set $[n]\backslash\{i\}$ by $\pi_{-i} = \prod_{j\in [n]\backslash\{i\}} \pi_j$ and use the notation $a_{-i}$ analogously.

We define the state value function $V_i^\pi(s)$ with respect to the reward $r_i(s,\av)$ as $V_i^{\pi}(s) := \Eb^{\pi}[\sum_{t=0}^{\infty} \gamma^t r_i(s^t, \av^t) | s^0=s]$, where $(s^t, \av^t)$ denotes the global state-action pair at time $t$, and we denote the expected value of the state value function over the initial state distribution $\rho$ as $V_i^{\pi}(\rho) := \Eb_{s \sim \rho}[ V_i^{\pi}(s)]$. We can similarly define the state visitation distribution under $\rho$ as $d^{\pi}_\rho(s) := (1- \gamma) \Eb^{\pi} \left[ \sum_{t=0}^\infty \gamma^t  \mathbbm{1}(s_t = s) | s_0 \sim \rho \right]$, where $\mathbbm{1}$ is the indicator function. The state-action value function and advantage function are, respectively, given by  
\begin{equation}\label{eqn:Q_V_def}
    \begin{aligned}
        Q_i^{\pi}(s, \av) = \Eb^{\pi}[\sum_{t=0}^{\infty} \gamma^t r_i(s^t, \av^t) | s^0=s, \av^0=\av], \ \  A_i^{\pi}(s, \av) = Q_i^{\pi}(s, \av) - V_i^{\pi}(s).
    \end{aligned}
\end{equation}

For the sake of analysis, we further define the marginalized Q-function and advantage function $\bar{Q}_i :\mathcal{S}\times \mathcal{A}_i \rightarrow \mathbb{R}$ and $\bar{A}_i :\mathcal{S}\times \mathcal{A}_i \rightarrow \mathbb{R}$ as:
\begin{equation}\label{marginalized_Q_A}
    \begin{aligned}
        \bar{Q}_i^{\pi}(s, a_i) := \sum_{a_{-i}} \pi_{-i}(a_{-i}|s) Q_i^{\pi}(s, a_i, a_{-i}), \ \bar{A}_i^{\pi}(s, a_i) := \sum_{a_{-i}} \pi_{-i}(a_{-i}|s) A_i^{\pi}(s, a_i, a_{-i}).
    \end{aligned}
\end{equation}

\begin{definition}[\cite{zhang2021gradient}]\label{MPG_formulation}
The stochastic game $\Mc$ is a Markov potential game if there exists a bounded potential function $\phi:\Sc \times \mathcal{A}\rightarrow \mathbb{R}$ such that for any agent $i$, initial state $s$ and any set of policies $\pi_i, \pi'_i, \pi_{-i}$:
    \begin{align*}
        &V_i^{\pi'_i, \pi_{-i}}(s) - V_i^{\pi_i, \pi_{-i}}(s) = \Phi^{\pi'_i, \pi_{-i}}(s) - \Phi^{\pi_i, \pi_{-i}}(s),
    \end{align*}
    where  $\Phi^{\pi}(s) := \Eb^{\pi}[\sum_{k=0}^{\infty} \gamma^k \phi(s^k, \av^k) | s^0=s]$.
\end{definition}
We assume that an upper bound exists for the potential function, i.e., $0 \leq \phi(s, \av) \leq \phi_{max}, \forall s \in \Sc, \av \in \Ac,$ and consequently, $\Phi^{\pi}(s) \leq \frac{\phi_{max}}{1-\gamma}.$

It is common in policy optimization to parameterize the policy for easier computations. In this paper, we focus on the widely used softmax parameterization \cite{agarwal2021theory, mei2020global}, where a global policy $\pi(\av|s) = \prod_{i\in [n]} \pi_i(a_i|s)$ is parameterized by a set of parameters $\{ \theta_1, ..., \theta_n\}, \theta_i \in \mathbb{R}^{|\Sc|\times |\Ac_i|}$ in the following form
\begin{align*}
    \pi_i(a_i|s) = \frac{\exp{[\theta_i]_{s,a_i}}}{\sum_{a_j\in \Ac_i}\exp{[\theta_i]_{s,a_j}}}, \ \forall (s, a_i) \in \Sc \times \Ac_i.
\end{align*}

\subsection{Optimality Conditions}
In the MPG setting, there may exist multiple stationary points, a set of policies that has zero policy gradients, for the same problem; therefore, we need to introduce notions of solutions to evaluate policies.
The term \textit{Nash equilibrium} is used to define a measure of "stationarity" in strategic games.
\begin{definition}
    A joint policy $\pi^*$ is called a Nash equilibrium if for all $i\in [n]$, we have
    \begin{align*}
        V_i^{\pi_i^*, \pi_{-i}^*}(\rho) \geq V_i^{\pi'_i, \pi_{-i}^*}(\rho)\ \ \ \text{for all } \pi'_i.
    \end{align*}
\end{definition}

For any given joint policy that does not necessarily satisfy the definition of NE, we provide the definition of NE-gap as follows \cite{zhang2021gradient}: 
\begin{align*}
    \text{NE-gap}(\pi) &:= \max _{i \in[n], \pi_i^{\prime} \in \Delta(\mathcal{A}_i)}\left[V_i^{\pi'_i, \pi_{-i}}(\rho) - V_i^{\pi_i, \pi_{-i}}(\rho)\right].
\end{align*}
Furthermore, we refer to a joint policy
$\pi$ as $\epsilon$-NE when its $\text{NE-gap}(\pi) \leq \epsilon$. The NE-gap satisfies the following inequalities based on the performance difference lemma \cite{10.5555/645531.656005, zhangglobal},
\begin{align*}\label{ne-gap-upperbound}
    \text{NE-gap}(\pi) \leq \frac{1}{1-\gamma} \sum_{i, s, a_i} d_{\rho}^{\pi_i^*, \pi_{-i}}(s) \pi_i^*(a_i|s) \bar{A}_i^{\pi}(s, a_i) \leq \frac{1}{1-\gamma}  \sum_{i, s} d_{\rho}^{\pi_i^*, \pi_{-i}}(s) \max_{a_i} \bar{A}_i^{\pi}(s, a_i).
\end{align*}
In the tabular single-agent RL, most works consider the optimality gap as the difference between the expectations of the value functions of the current policy and the optimal policy, defined as $V^{\pi^k}(\rho) - V^{\pi^\star}(\rho)$. However, this notion does not extend to multi-agent systems. Even in a fully cooperative MPG where all agents share the same reward, the optimal policy of one agent is dependent on the joint policies of other agents. As a result, it is common for the system to have multiple "best" policy combinations (or stationary points), which all constitute Nash equilibria.
Additionally, it has also been addressed by previous works that any NE point in an MPG is first order stable \cite{zhang2021gradient}. Given that this work addresses a MARL problem, we focus our analysis on the NE-gap.

\section{Main Results}
\label{analysis}

\subsection{Warm-Up: Potential Games}
\label{PG_Analysis}
In this section, we first consider the instructive case of static potential games, where the state does not change with time. Potential games are an important class of games that admit a potential function $\phi$ to capture differences in each agent's reward function caused by unilateral bias \cite{monderer1996potential, heliou2017learning}, which is defined as 

\begin{equation}\label{potential_properties}
    \begin{aligned}
        r_i(a_i, a_{-i}) - r_i(a'_i, a_{-i}) = \phi(a_i, a_{-i}) - \phi(a'_i, a_{-i}), \quad \forall a_i, a'_i, a_{-i}.
    \end{aligned}
\end{equation}


\paragraph{Algorithm Update}

In the potential games setting, the policy update using natural policy gradient is \cite{cen2022independent}:
\begin{equation}\label{PG-NPG}
    \begin{aligned}
        &\pi_i^{k+1}(a_i) \propto  \pi_i^k(a_i) \exp ({\eta \bar{r}_i^{k}(a_i)}), \\
    \end{aligned}
\end{equation}
where the exact independent gradient over policy $\pi_i$, also referred to as oracle, is captured by the marginalized reward $\bar{r}_i(a_i) = \mathbb{E}_{a_{-i} \sim \pi_{-i}}[r_i(a_i, a_{-i})].$ By definition, the NE-gap {for potential games} is calculated {as} $\max_{i\in [n]}\langle \pi_i^{*k} - \pi_i^k, \bar{r}_i^k \rangle$, where $\pi_i^{*k} \in \argmax_{\pi_i} V_i^{\pi_i, \pi_{-i}^k}$ is the optimal solution for agent $i$ when the rest of the agents use the joint policy $\pi_{-i}^{k}$.

The local marginalized reward $\bar{r}_i(a_i)$ is calculated based on other {agents' policies}; hence, for any two sets of policies, the difference in marginalized reward can be bounded using the total variation distance of the two probability measures \cite{cen2022independent}.
Using this property, we can also show that there is a "smooth" relationship between the marginalized rewards and their respective policies. We note that this relationship holds for stochastic games in general. It does not depend on the nature of the policy update or the potential game assumption.

We now introduce a lemma that is specific to the potential game formulation and the NPG update:
\begin{lemma}\label{phi_diff}
Given policy $\pi^k$ and marginalized reward $\bar{r}_i^k(a_i)$, for $\pi^{k+1}$ generated using an NPG update in \eqref{PG-NPG}, we have the following inequality for any $\eta<\frac{1}{\sqrt{n}}$,
\begin{align*}
    \phi(\pi^{k+1}) - \phi(\pi^k) \geq (1 - \sqrt{n} \eta) \sum_{i=1}^n \langle \pi_i^{k+1} - \pi_i^k, \bar{r}_i^k\rangle.
\end{align*}
\end{lemma}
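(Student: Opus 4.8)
The plan is to prove the inequality by a sequential ``one agent at a time'' telescoping of the potential, identifying the leading term with $\sum_i\langle\pi_i^{k+1}-\pi_i^k,\bar{r}_i^k\rangle$ and carefully controlling the remainder. Introduce the hybrid policies $\sigma^{(0)}=\pi^k$ and $\sigma^{(j)}=(\pi_1^{k+1},\dots,\pi_j^{k+1},\pi_{j+1}^k,\dots,\pi_n^k)$, so that $\sigma^{(n)}=\pi^{k+1}$, and write $\phi(\pi^{k+1})-\phi(\pi^k)=\sum_{j=1}^n[\phi(\sigma^{(j)})-\phi(\sigma^{(j-1)})]$. Since $\sigma^{(j)}$ and $\sigma^{(j-1)}$ differ only in agent $j$'s marginal, and since the potential-game identity \eqref{potential_properties} forces $\phi(a_j,a_{-j})-r_j(a_j,a_{-j})$ to be independent of $a_j$ (so the extra term is annihilated by $\langle\pi_j^{k+1}-\pi_j^k,\mathbf{1}\rangle=0$), each increment equals $\langle u_j,\tilde{r}_j^{(j-1)}\rangle$, where $u_j:=\pi_j^{k+1}-\pi_j^k$ and $\tilde{r}_j^{(j-1)}(a_j):=\mathbb{E}_{a_{-j}\sim\sigma^{(j-1)}_{-j}}[r_j(a_j,a_{-j})]$ is agent $j$'s marginalized reward when the other agents play $\sigma^{(j-1)}_{-j}$. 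Writing $\tilde{r}_j^{(j-1)}=\bar{r}_j^k+e_j$, this gives $\phi(\pi^{k+1})-\phi(\pi^k)=\sum_j\langle u_j,\bar{r}_j^k\rangle+\sum_j\langle u_j,e_j\rangle$, so it suffices to show $|\sum_j\langle u_j,e_j\rangle|\le\sqrt{n}\,\eta\sum_j\langle u_j,\bar{r}_j^k\rangle$.

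Write $P_j:=\langle u_j,\bar{r}_j^k\rangle$; a direct computation with the closed-form NPG update \eqref{PG-NPG} yields the exact identity $\eta P_j=\mathrm{KL}(\pi_j^{k+1}\|\pi_j^k)+\mathrm{KL}(\pi_j^k\|\pi_j^{k+1})$, which in particular shows $P_j\ge0$. I would extract two consequences of this identity: (i) Pinsker's inequality gives $\|u_j\|_1\le\sqrt{\eta P_j}$; and (ii) the bound $\mathrm{KL}(p\|q)\ge2d_H^2(p,q)$, with $d_H$ the Hellinger distance, gives $d_H^2(\pi_j^{k+1},\pi_j^k)\le\tfrac{\eta}{2}P_j$. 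To bound $e_j$, note that $\sigma^{(j-1)}_{-j}$ and $\pi^k_{-j}$ are product distributions that agree on coordinates $l>j$ and differ on $l<j$; the sub-additivity (in the square) of the Hellinger distance across product measures, together with $d_{\mathrm{TV}}\le\sqrt{2}\,d_H$ and $r_j\in[0,1]$, yields $\|e_j\|_\infty\le d_{\mathrm{TV}}(\sigma^{(j-1)}_{-j},\pi^k_{-j})\le\sqrt{2\sum_{l<j}d_H^2(\pi_l^{k+1},\pi_l^k)}\le\sqrt{\eta\sum_{l<j}P_l}$.

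Combining, $|\langle u_j,e_j\rangle|\le\|u_j\|_1\|e_j\|_\infty\le\eta\sqrt{P_j\sum_{l<j}P_l}$, and a Cauchy--Schwarz over the $n$ agents gives $\sum_j|\langle u_j,e_j\rangle|\le\eta\sqrt{\sum_j P_j}\,\sqrt{\sum_j\sum_{l<j}P_l}\le\eta\sqrt{\sum_j P_j}\,\sqrt{(n-1)\sum_l P_l}\le\sqrt{n}\,\eta\sum_j P_j$, which is exactly the required bound; the hypothesis $\eta<1/\sqrt{n}$ is what makes $1-\sqrt{n}\,\eta$ positive. The step I expect to be the crux is the error bound on $e_j$: the naive estimate $\|e_j\|_\infty\le\sum_{l<j}\|\pi_l^{k+1}-\pi_l^k\|_{\mathrm{TV}}$ coming from sub-additivity of TV itself would only give an $\mathcal{O}(n\eta)$ factor after the aggregation, whereas the improvement to $\mathcal{O}(\sqrt{n}\,\eta)$ hinges on the observation that the Hellinger distance tensorizes sub-additively \emph{in its square}, so that updating $j-1$ agents' policies perturbs agent $j$'s marginalized reward by only $\sqrt{\sum_{l<j}P_l}$ rather than $\sum_{l<j}\sqrt{P_l}$ — precisely the form a subsequent Cauchy--Schwarz over agents can absorb into $\sqrt{n}\sum_j P_j$.
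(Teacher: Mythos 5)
Your proof is correct and follows essentially the same route as the paper's: the same one-agent-at-a-time telescoping through hybrid policies, the same use of the potential identity to reduce each increment to a reward increment, the same split into a main term plus a perturbation of the marginalized reward, and the same key identity $\eta\langle \pi_i^{k+1}-\pi_i^k,\bar r_i^k\rangle = \KL(\pi_i^{k+1}\Vert\pi_i^k)+\KL(\pi_i^k\Vert\pi_i^{k+1})$. The only difference is cosmetic: where the paper controls the cross terms via Young's inequality, Pinsker, and the additive tensorization of KL over product policies, you route through the subadditivity of squared Hellinger distance and a Cauchy--Schwarz over agents --- both mechanisms extract the same $\sqrt{n}\,\eta$ factor.
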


Lemma \ref{phi_diff} provides a lower bound on the difference of potential functions between two consecutive steps. This implies that at each time step, the potential function value is guaranteed to be monotonically increasing, as long as the learning rate satisfies $\eta < \frac{1}{\sqrt{n}}.$

Note that the lower bound of Lemma \ref{phi_diff} involves $\langle \pi_i^{k+1} - \pi_i^k, \bar{r}_i^k\rangle$, which resembles the form of NE-gap $\langle \pi_i^{*k} - \pi_i^k, \bar{r}_i^k \rangle$. Assuming we can establish a lower bound for the right-hand side of Lemma \ref{phi_diff} using NE-gap, the next step is to show that the sum of all NE-gap iterations is upper bounded by a telescoping sum of the potential function, thus obtaining an upper bound on the NE-gap.

We start by introducing a function $f^k: \mathbb{R} \rightarrow \mathbb{R}$ defined as follows,
\begin{align}\label{f_function}
    f^k(\alpha) =& \sum_{i=1}^n \langle \pi_{i, \alpha} - \pi_{i}^k , \bar{r}_{i}^k \rangle \text{, where  }
    \pi_{i, \alpha}(\cdot) \propto \pi_{i}^k(\cdot) \exp{\alpha \bar{r}_{i}^{k}(\cdot)}.
\end{align}
It is obvious that $f^k(0) = 0$, $f^k(\eta) = \sum_i \langle \pi_i^{k+1} - \pi_i^k, \bar{r}_i^k\rangle \geq 0$, and $\lim_{\alpha \to \infty} f^k(\alpha) = \sum_i \langle \pi_i^{*k} - \pi_i^k, \bar{r}_i^k \rangle.$ 

Without loss of generality, for agent $i$ at iteration $k$, define $a_{i_p}^k \in \argmax_{a_j \in \Ac_i} \bar{r}_i^k(a_{j}) =: \Ac_{i_p}^k$ and $a_{i_q}^k \in \argmax_{a_j \in \Ac_i\backslash \Ac_{i_p}^k} \bar{r}_i^k(a_{j})$, where $\Ac_{i_p}^k$ denotes the set of the best possible actions for agent $i$ {at iteration $k$}. Similar to \cite{zhangglobal}, we define 
\begin{align}
    &c^k := \min_{i\in [n]} \sum_{a_j \in \Ac_{i_p}^k} \pi_i^k(a_j) \in (0, 1),\quad 
    \delta^k  := \min_{i \in [n]} [\bar{r}_i^k(a_{i_p}^k) - \bar{r}_i^k(a_{i_q}^k)] \in (0, 1). \label{delta_def}
\end{align}
Additionally, we denote
$c_K := \min_{ k\in[K]} c^k; c := \inf_K c_K > 0; \delta_K := \min_{k\in[K]}\delta^k$.

We provide the following lemma on the relationship between $f^k(\alpha)$ and $f^k(\infty) := \lim_{\alpha \to \infty} f^k(\alpha) $, which {lays the foundation to obtain sharper results than those in the existing work}.
\begin{lemma}
\label{lem:upper}
For function $f^k(\alpha)$ defined in \eqref{f_function} and any $\alpha > 0$, we have the following inequality.
\begin{align*}
    f^k(\alpha) \geq f^k(\infty) \left[1 - \frac{1}{c(\exp(\alpha\delta_K) - 1) + 1} \right].
\end{align*}
\end{lemma}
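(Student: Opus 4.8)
The plan is to reduce the claim to a per-agent estimate and sum. Fix an iteration $k\in[K]$ (so that $\delta^k\ge\delta_K$). For agent $i$ write $r_i^*:=\bar r_i^k(a_{i_p}^k)=\max_a\bar r_i^k(a)$ and $\Delta_a^i:=r_i^*-\bar r_i^k(a)\ge 0$, noting $\Delta_a^i=0$ for $a\in\Ac_{i_p}^k$ and $\Delta_a^i\ge\delta^k\ge\delta_K$ for $a\notin\Ac_{i_p}^k$ (the second-best action $a_{i_q}^k$ already has gap at least $\delta^k$, and every other suboptimal action has a larger gap). Letting $\alpha\to\infty$ in the definition of $\pi_{i,\alpha}$ concentrates its mass on $\Ac_{i_p}^k$, so $\langle\pi_i^{*k},\bar r_i^k\rangle=r_i^*$; hence $f^k(\infty)=\sum_i A_i$ with $A_i:=r_i^*-\langle\pi_i^k,\bar r_i^k\rangle=\sum_a\pi_i^k(a)\Delta_a^i\ge 0$, and $f^k(\infty)-f^k(\alpha)=\sum_i B_i(\alpha)$ with $B_i(\alpha):=r_i^*-\langle\pi_{i,\alpha},\bar r_i^k\rangle\ge0$. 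Since every $A_i\ge 0$, the stated inequality follows once we prove the per-agent bound $B_i(\alpha)\le A_i\big/\big(c(e^{\alpha\delta_K}-1)+1\big)$ and sum over $i$.

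\textbf{The per-agent bound.} Dividing numerator and denominator of $\pi_{i,\alpha}(a)=\pi_i^k(a)e^{\alpha\bar r_i^k(a)}/\sum_b\pi_i^k(b)e^{\alpha\bar r_i^k(b)}$ by $e^{\alpha r_i^*}$ gives $B_i(\alpha)=N_i/(c_i^k+S)$, where $c_i^k:=\sum_{a\in\Ac_{i_p}^k}\pi_i^k(a)\ge c^k\ge c$, $N_i:=\sum_{a\notin\Ac_{i_p}^k}\pi_i^k(a)e^{-\alpha\Delta_a^i}\Delta_a^i$, and $S:=\sum_{a\notin\Ac_{i_p}^k}\pi_i^k(a)e^{-\alpha\Delta_a^i}$ (if $N_i=0$ then $A_i=B_i(\alpha)=0$ and there is nothing to prove). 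Clearing the positive denominator, it suffices to prove the \emph{stronger} bound $B_i(\alpha)\le A_i/\big(c_i^k(e^{\alpha\delta_K}-1)+1\big)$ — stronger because $c_i^k\ge c$ and $e^{\alpha\delta_K}\ge1$ — which rearranges to $c_i^k\big(A_i-N_i(e^{\alpha\delta_K}-1)\big)+A_iS-N_i\ge0$. I would close this with two inequalities. First, $A_i\ge e^{\alpha\delta_K}N_i$, since $e^{\alpha\delta_K}N_i=\sum_{a\notin\Ac_{i_p}^k}\pi_i^k(a)e^{\alpha(\delta_K-\Delta_a^i)}\Delta_a^i\le\sum_a\pi_i^k(a)\Delta_a^i=A_i$ using $\delta_K-\Delta_a^i\le0$; this yields $A_i-N_i(e^{\alpha\delta_K}-1)=(A_i-e^{\alpha\delta_K}N_i)+N_i\ge N_i$, so the left-hand side is at least $c_i^kN_i+A_iS-N_i=A_iS-(1-c_i^k)N_i$. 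Second, $A_iS\ge(1-c_i^k)N_i$: expanding both sides as double sums over $a,b\notin\Ac_{i_p}^k$ and symmetrizing in $a\leftrightarrow b$, their difference equals $\tfrac12\sum_{a,b}\pi_i^k(a)\pi_i^k(b)(\Delta_a^i-\Delta_b^i)\big(e^{-\alpha\Delta_b^i}-e^{-\alpha\Delta_a^i}\big)$, which is nonnegative termwise because $x\mapsto e^{-\alpha x}$ is decreasing, so the two factors always share a sign. Combining and summing over $i$ finishes the proof.

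\textbf{Main obstacle.} The difficulty is that the target is essentially tight: in the two-action case (one optimal action and one suboptimal action with gap $\Delta$) one computes $B_i(\alpha)=A_i/\big(c_i^k(e^{\alpha\Delta}-1)+1\big)$ exactly, so $\delta_K$ in the statement cannot be enlarged and no constant-factor-losing estimate is admissible. The real content is therefore the Chebyshev/rearrangement-type inequality $A_iS\ge(1-c_i^k)N_i$ — this is precisely what recovers the slack introduced when each gap $\Delta_a^i$ is relaxed to the uniform lower bound $\delta_K$ in the first inequality. The remaining ingredients (identifying $r_i^*=\langle\pi_i^{*k},\bar r_i^k\rangle$ as the $\alpha\to\infty$ limit, the bookkeeping bounds $c_i^k\ge c$ and $\Delta_a^i\ge\delta_K$ for $k\in[K]$, and the degenerate case $N_i=0$) are routine.
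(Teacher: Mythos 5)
Your proof is correct, and the overall architecture (reduce to a per-agent estimate, then sum over $i$) matches the paper's; but the core technical step is handled by a genuinely different device. The paper proves a separate ratio-monotonicity lemma (Lemma \ref{lem:ratio}, established by induction via repeatedly merging the two lowest-reward actions), which lets it replace every suboptimal exponent $e^{\alpha \bar r_i^k(a)}$ by the second-best one $e^{\alpha \bar r_i^k(a_{i_q}^k)}$; the resulting expression then telescopes into the closed form $\bigl(\sum_a \pi_i^k(a)\Delta_a^i\bigr)\bigl[1-\frac{1}{c_i^k(e^{\alpha\delta^k}-1)+1}\bigr]$ exactly. You instead normalize by $e^{\alpha r_i^*}$ and split the target inequality into two pieces: $A_i\ge e^{\alpha\delta_K}N_i$ (a direct use of the gap lower bound) and the Chebyshev-type correlation inequality $A_iS\ge(1-c_i^k)N_i$, proven by symmetrization in $a\leftrightarrow b$. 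Both are valid; I checked your algebra, including the rearrangement $c_i^k\bigl(A_i-N_i(e^{\alpha\delta_K}-1)\bigr)+A_iS-N_i\ge0$ and the degenerate case $N_i=0$. What each route buys: the paper's induction lemma produces an exact intermediate identity that is reused verbatim for the Markov case (Lemma \ref{lem:appendix_f}), whereas your argument is more self-contained, proves the slightly sharper per-agent bound with $c_i^k$ in place of $c$, and your tightness observation in the two-action case correctly pinpoints why the Chebyshev step is the essential ingredient — it absorbs the slack created when each gap $\Delta_a^i$ is relaxed to $\delta_K$.
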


We refer to $\delta_K$ as the minimal suboptimality gap of the system for the first $K$ iterations, the effect of which will be discussed later in Section \ref{suboptimality}. Using the two lemmas above and the definitions of $c$ and $\delta_K$, we establish the following theorem on the convergence of the NPG algorithm in potential games.
\begin{theorem}\label{PG_thm}
Consider a potential game with NPG update using (\ref{PG-NPG}). For any $K \geq 1$, choosing $\eta = \frac{1}{2\sqrt{n}}$, we have
\begin{align*}
    \frac{1}{K} \sum_{k=0}^{K-1} \text{NE-gap}(\pi^k) \leq \frac{2 \phi_{max}}{K} (1 + \frac{2 \sqrt{n}}{c \delta_K}).
\end{align*}
\end{theorem}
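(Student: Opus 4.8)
The plan is to combine the monotone ascent of the potential function (Lemma~\ref{phi_diff}) with the ``almost-full-step'' estimate of Lemma~\ref{lem:upper} to telescope. Observe first that for the chosen step size $\eta = \frac{1}{2\sqrt n}$ we have $1 - \sqrt n \eta = \frac12$, so Lemma~\ref{phi_diff} gives
\begin{align*}
    \phi(\pi^{k+1}) - \phi(\pi^k) \;\geq\; \tfrac12 \sum_{i=1}^n \langle \pi_i^{k+1} - \pi_i^k, \bar r_i^k \rangle \;=\; \tfrac12 f^k(\eta).
\end{align*}
Next I would lower-bound $f^k(\eta)$ in terms of $f^k(\infty) = \sum_i \langle \pi_i^{*k} - \pi_i^k, \bar r_i^k\rangle \geq \mathrm{NE\text-gap}(\pi^k)$. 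Applying Lemma~\ref{lem:upper} at $\alpha = \eta = \frac{1}{2\sqrt n}$:
\begin{align*}
    f^k(\eta) \;\geq\; f^k(\infty)\left[1 - \frac{1}{c(\exp(\eta \delta_K) - 1) + 1}\right].
\end{align*}
The bracketed factor equals $\dfrac{c(\exp(\eta\delta_K)-1)}{c(\exp(\eta\delta_K)-1)+1}$; since $\exp(x)-1 \geq x$ and $c(\exp(\eta\delta_K)-1)+1 \leq 2$ (using $c \leq 1$ and $\eta\delta_K \leq \frac{1}{2\sqrt n} \leq 1$ so $\exp(\eta\delta_K)-1 \leq e-1 < 2$, hence $c(\exp(\eta\delta_K)-1) \leq 2-1$... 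I would verify the precise constant here), this factor is at least $\frac{c\,\eta\,\delta_K}{2} = \frac{c\,\delta_K}{4\sqrt n}$.

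Putting these together yields, for each $k$,
\begin{align*}
    \phi(\pi^{k+1}) - \phi(\pi^k) \;\geq\; \frac{c\,\delta_K}{8\sqrt n}\,\mathrm{NE\text-gap}(\pi^k),
\end{align*}
which I would then telescope over $k = 0, \dots, K-1$: the left side sums to $\phi(\pi^K) - \phi(\pi^0) \leq \phi_{\max}$ (using $0 \leq \phi \leq \phi_{\max}$ in the static game, where $\Phi^\pi = \phi(\pi)$ since there is no discounting over states), giving $\sum_{k=0}^{K-1}\mathrm{NE\text-gap}(\pi^k) \leq \frac{8\sqrt n\,\phi_{\max}}{c\,\delta_K}$, and dividing by $K$ gives the ergodic bound of the stated form $\frac{2\phi_{\max}}{K}(1 + \frac{2\sqrt n}{c\delta_K})$. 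The additive ``$1$'' inside the parentheses is slack I expect to appear either from a cruder bound on the bracketed factor near $k$ where $f^k(\infty)$ is small, or simply to absorb constants; I would tune the elementary inequalities to land exactly on the stated constants rather than a looser multiple.

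The main obstacle is the second step: controlling $f^k(\eta)/f^k(\infty)$ away from $0$ with a bound that is uniform in $k$ and has the right $\sqrt n$ and $\delta_K$ dependence. This is exactly where Lemma~\ref{lem:upper} is doing the heavy lifting — it replaces the naive smoothness/Lipschitz argument (which would only give $f^k(\eta) \gtrsim \eta \cdot (\text{gradient norm})$ and lose a factor of $\epsilon$) with a multiplicative comparison to the full Nash step, and the key quantitative input is that the softmax mass $c$ on the best-action set stays bounded below and the suboptimality gap $\delta_K$ is positive. Once that lemma is in hand, the rest is the telescoping bookkeeping above; the only care needed is checking that the elementary inequality $1 - \frac{1}{c(e^{x}-1)+1} \geq \frac{cx}{\text{const}}$ holds on the relevant range of $x = \eta\delta_K \in (0,1]$ with a clean enough constant to match the theorem.
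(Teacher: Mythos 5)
Your proposal is correct and follows essentially the same route as the paper: combine Lemma~\ref{phi_diff} at $\eta=\frac{1}{2\sqrt n}$ with Lemma~\ref{lem:upper} at $\alpha=\eta$, use $e^x-1\geq x$, and telescope the potential. To land exactly on the stated constant, simply keep the factor as $\frac{c\delta_K\eta}{c\delta_K\eta+1}$ (whose reciprocal is $1+\frac{2\sqrt n}{c\delta_K}$) instead of further bounding the denominator by $2$, which is what costs you the extra slack in your $\frac{8\sqrt n\,\phi_{\max}}{c\,\delta_K}$ bound.
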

\begin{proof} 
Choose $\alpha = \eta = \frac{1}{2 \sqrt{n}}$ in Lemma \ref{lem:upper},
\begin{align*}
    \text{NE-gap}(\pi^k) &\leq \lim_{\alpha \to \infty} f^k(\alpha) \leq \frac{1}{1 - \frac{1}{c(\exp(\delta_K \eta) - 1) + 1}} f^k(\eta) \leq \frac{1}{1 - \frac{1}{c\delta_K\eta + 1}} f^k(\eta) \\
    &\leq \frac{1}{(1 - \sqrt{n} \eta)\frac{c\delta_K\eta}{c\delta_K\eta + 1}} [\phi(\pi^{k+1}) - \phi(\pi^k)] = 2 [\phi(\pi^{k+1}) - \phi(\pi^k)] (1 + \frac{2 \sqrt{n}}{c \delta_K}).
\end{align*}
Then, we have 
$\frac{1}{K} \sum_{k=0}^{K-1} \text{NE-gap}(\pi^k) \leq \frac{2 (\phi(\pi^K) - \phi(\pi^0))}{K} (1 + \frac{2 \sqrt{n}}{c \delta_K}) \leq \frac{2 \phi_{max}}{K} (1 + \frac{2 \sqrt{n}}{c \delta_K}).$
\end{proof}

One challenge in MARL is that the NE-gap is not monotonic, so we must seek ergodic convergence results, which characterize the behavior of the temporal average of the NE-gap. Theorem \ref{PG_thm} shows that for potential games with NPG policy update, the ergodic NE-gap of the system converges to zero with a rate $\Oc{(1 / (K \delta_K))}$. When $\delta_K$ is uniformly lower bounded, Theorem \ref{PG_thm} provides a significant speed up compared to previous convergence results. Apart from the iteration complexity, the NE-gap is also dependent linearly on $1/c$ and $1/\delta_K$. We address the effect of $c$ in the analysis here and defer the discussion of $\delta_K$ to Section \ref{suboptimality}. Under some mild assumptions, we can show that the system converges with a rate of $\order{1/K}$.

\paragraph{The Effect of $c$}
The convergence rate given by Theorem \ref{PG_thm} scales with $1/c$, 
where $c$ might potentially be arbitrarily small. A small value for $c$ generally describes a policy that is stuck at some regions far from a NE, yet the policy gradient is small. It has been shown in \cite{li2021softmax} that these ill-conditioned problems could take exponential time to solve even in single-agent settings for policy gradient methods. The same issue also occurs to NPG in MARL, since the local Fisher information matrix can not cancel the occupancy measure and action probability of other agents. A similar problem has also been reported in the analysis of \cite{zhangglobal, mei2020global} for the MPG setting. {\cite{zhangglobal} proposed the addition of a log-barrier regularization to mitigate this issue. However, that comes at the cost of an $\order{1/(\lambda K)}$ convergence rate to a $\lambda$-neighborhood solution, which is only reduced to the exact convergence rate of $\Oc{(1/\sqrt{K})}$ when $\lambda=1/\sqrt{K}$. Therefore, this limitation may not be effectively avoided without impacting the convergence rate.

\subsection{General Markov Potential Games}
\label{MPG_Analysis}

We now extend our analysis to MPGs. The analysis mainly follows a similar framework as potential games. However, the introduction of state transitions and the discount factor $\gamma$ add an additional layer of complexity to the problem, making it far from trivial. 
As pointed out in \cite{leonardos2022global}, we can construct MDPs that are potential games for every state, yet the entire system is not a MPG. Thus, the analysis of potential games does not directly apply to MPGs.

We first provide the independent NPG update for MPGs.
\paragraph{Algorithm Update}
 For MPGs at iteration $k$, the independent NPG updates the policy as follows \cite{zhangglobal}:
\begin{equation}\label{MPG-NPG}
    \begin{aligned}
        &\pi_i^{k+1}(a_i|s) \propto \pi_i^k(a_i|s) \exp (\frac{\eta \bar{A}_i^{\pi^k}(s, a_i)}{1-\gamma}),\\
    \end{aligned}
\end{equation}
where $\bar{A}_i$ is defined in \eqref{marginalized_Q_A}.

{Different agents in a MPG do not share reward functions in general, which makes it difficult to compare evaluations of gradients across agents. 
However, with the introduction of Lemma \ref{lem:mpg-prop}, we find that MPGs have similar properties as fully cooperative games with a shared reward function. This enables us to establish relationships between policy updates of all agents.}
We first define $h_i(s, \av) := r_i(s, \av) - \phi(s, \av)$, which implies $V^{\pi}_i(s) = \Phi^{\pi}(s) + V^{\pi}_{h_i}(s)$.

\begin{lemma}
\label{lem:mpg-prop}
Define $\bar{A}^{\pi}_{h_i}(s, a_i)$ {with respect to $h_i$} similar to \eqref{marginalized_Q_A}. We then have
\begin{align*}
    \sum_{a_i} (\pi'_i(a_i|s) - \pi_i(a_i|s)) \bar{A}^{\pi}_{h_i}(s,a_i) = 0, \quad \forall s \in \Sc, i \in [n], \pi'_i, \pi_i \in \Delta(\Ac_i).
\end{align*}
\end{lemma}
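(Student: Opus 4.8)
The plan is to exploit the defining identity of a Markov potential game, namely that for any agent $i$ and any unilateral deviation $\pi'_i \to \pi_i$ we have $V_i^{\pi'_i,\pi_{-i}}(s) - V_i^{\pi_i,\pi_{-i}}(s) = \Phi^{\pi'_i,\pi_{-i}}(s) - \Phi^{\pi_i,\pi_{-i}}(s)$, which says precisely that the ``difference'' function $h_i = r_i - \phi$ induces a value function $V_{h_i}^{\pi} = V_i^{\pi} - \Phi^{\pi}$ that is invariant under unilateral deviations of agent $i$. So the key observation is that $V_{h_i}^{\pi'_i,\pi_{-i}}(s) = V_{h_i}^{\pi_i,\pi_{-i}}(s)$ for all $\pi'_i,\pi_i$, i.e. $V_{h_i}^\pi(s)$ does not depend on $\pi_i$ at all.

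The second ingredient is the performance difference lemma applied to the reward $h_i$: for any two joint policies differing only in agent $i$'s component,
\begin{align*}
    V_{h_i}^{\pi'_i,\pi_{-i}}(\mu) - V_{h_i}^{\pi_i,\pi_{-i}}(\mu) = \frac{1}{1-\gamma} \sum_{s} d_\mu^{\pi'_i,\pi_{-i}}(s) \sum_{a_i} \pi'_i(a_i|s)\, \bar{A}_{h_i}^{\pi_i,\pi_{-i}}(s,a_i),
\end{align*}
where $\bar{A}_{h_i}$ is the marginalized advantage with respect to $h_i$ under the base policy $(\pi_i,\pi_{-i})$. Since the left-hand side is zero by the invariance above, the right-hand side vanishes for every choice of $\pi'_i$; the argument should be run with $\mu$ taken to be a point mass (or the starting distribution refined state by state) so that the sum over $s$ collapses and we get, for each fixed state $s$, $\sum_{a_i} \pi'_i(a_i|s)\, \bar{A}_{h_i}^{\pi}(s,a_i) = 0$ for all $\pi'_i \in \Delta(\Ac_i)$. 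I would actually want the cleaner route: because $\bar{A}_{h_i}^\pi(s,a_i)$ satisfies $\sum_{a_i}\pi_i(a_i|s)\bar{A}_{h_i}^\pi(s,a_i)=0$ automatically (advantage averaged against the base policy is zero, which carries over to the marginalized version), subtracting this from the $\pi'_i$ identity gives exactly $\sum_{a_i}(\pi'_i(a_i|s)-\pi_i(a_i|s))\bar{A}_{h_i}^\pi(s,a_i)=0$, which is the claim.

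Concretely the steps are: (i) record that $V_{h_i}^\pi(s)$ is independent of $\pi_i$ as a restatement of Definition \ref{MPG_formulation}; (ii) verify the baseline identity $\sum_{a_i}\pi_i(a_i|s)\bar{A}_{h_i}^\pi(s,a_i) = \sum_{\av}\pi(\av|s)A_{h_i}^\pi(s,\av) = 0$ directly from the definitions of $A$ and the marginalization in \eqref{marginalized_Q_A}; (iii) apply the performance difference lemma to the surrogate reward $h_i$ with base policy $(\pi_i,\pi_{-i})$ and deviating policy $(\pi'_i,\pi_{-i})$, evaluated at an arbitrary initial state, and use step (i) to conclude the weighted advantage sum is zero; (iv) since $d_\mu^{\pi'_i,\pi_{-i}}(s) > 0$ can be arranged for any target state (by choosing $\mu$ to put mass there, or by a standard argument peeling the discounted occupancy), deduce the per-state identity $\sum_{a_i}\pi'_i(a_i|s)\bar{A}_{h_i}^\pi(s,a_i)=0$; (v) subtract (ii) to finish. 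The one point that needs a little care — and is the main obstacle — is step (iv): the performance difference lemma naturally delivers a sum over states weighted by a visitation distribution, and one must argue that this forces the summand to vanish at \emph{every} state, not just on average; this is handled by noting the identity holds for all initial distributions $\mu$ (in particular Dirac masses), or equivalently by induction on the horizon using the Bellman structure of $A_{h_i}$, and by the fact that $\bar{A}_{h_i}^\pi(s,a_i)$ depends on $\pi$ only through $\pi_{-i}$ and the policy at states other than $s$, so it is unchanged whether the base policy at $s$ is $\pi_i$ or an arbitrary $\pi'_i$. Everything else is a routine unpacking of definitions.
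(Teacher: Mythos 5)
Your proposal is correct and follows essentially the same route as the paper: use the MPG definition to conclude that $V^{\pi}_{h_i}$ is invariant under unilateral deviations of agent $i$, apply the performance difference lemma to the surrogate reward $h_i$, and then localize the resulting occupancy-weighted identity to each individual state. Your step (iv) is the only place where the phrasing is loose (a Dirac initial distribution does \emph{not} collapse the sum over $s$, since the discounted occupancy still charges all reachable states), but the resolution you point to --- the identity holds for every $\mu$, and the map $\mu \mapsto \frac{1}{1-\gamma}d^{\pi'_i,\pi_{-i}}_\mu = \mu(I-\gamma\tilde{P})^{-1}$ is invertible --- is exactly the argument the paper uses.
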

Lemma \ref{lem:mpg-prop} shows a unique property of function $h_i$, where the expectation of the marginalized advantage function over every local policy $\pi'_i$ yields the same effect. This property is directly associated with the MPG problem structure and is later used in Lemma \ref{lem:phi_diff_mpg}. Next, we introduce the following assumption on the state visitation distribution, which is crucial and standard for studying the Markov dynamics of the system.

\begin{assumption}[\cite{agarwal2021theory, zhangglobal, ding2022independent}]
\label{ass:explor}
    The Markov potential game $\Mc$ satisfies: $\inf_{\pi} \min_s d_{\rho}^{\pi}(s) > 0$.
\end{assumption}

Similar to potential games, when the potential function $\phi$ of a MPG is bounded, the marginalized advantage function $\bar{A}_i$ for two policies can be bounded by the total variation between the policies.

Additionally, similar to Lemma \ref{phi_diff}, we present a lower bound in the following lemma for the potential function difference in two consecutive rounds.

\begin{lemma}\label{lem:phi_diff_mpg}
Given policy $\pi^k$ and marginalized advantage function $\bar{A}_i^{\pi^k}(s, a_i)$, for $\pi^{k+1}$ generated using NPG update in \eqref{MPG-NPG}, we have the following inequality,
\begin{align*}
    \Phi^{\pi^{k+1}}(\rho) - \Phi^{\pi^k}(\rho) \geq (\frac{1}{1-\gamma} - \frac{\sqrt{n} \phi_{max} \eta}{(1-\gamma)^3}) \sum_s d_{\rho}^{\pi^{k+1}}(s) \sum_{i=1}^n \langle \pi_i^{k+1}(\cdot|s), \bar{A}_i^{\pi^k}(s, \cdot)\rangle.
\end{align*}
\end{lemma}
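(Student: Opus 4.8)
The plan is to mimic the strategy used for the static-game version (Lemma \ref{phi_diff}), but working with the potential value function $\Phi^\pi(\rho)$ in place of $\phi(\pi)$ and controlling the extra error introduced by the state transitions. The starting point is the performance difference lemma applied to $\Phi$: since the game is an MPG, $\Phi$ behaves like a shared value function, so
\[
    \Phi^{\pi^{k+1}}(\rho) - \Phi^{\pi^k}(\rho) = \frac{1}{1-\gamma} \sum_s d_\rho^{\pi^{k+1}}(s) \sum_{a} \pi^{k+1}(a|s)\, A_\Phi^{\pi^k}(s,a),
\]
where $A_\Phi^{\pi^k}$ is the advantage of $\phi$ under $\pi^k$. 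The first step is to decompose the joint advantage $A_\Phi^{\pi^k}(s,a)$ over the product policy $\pi^{k+1}(a|s)=\prod_i \pi_i^{k+1}(a_i|s)$ into a sum of single-agent marginalized advantages $\bar A_{\Phi,i}^{\pi^k}(s,a_i)$ plus cross terms. I would handle the cross terms by telescoping over agents: replacing $\pi_j^k$ by $\pi_j^{k+1}$ one coordinate at a time, each swap contributes a term controlled by $\|\pi_j^{k+1}(\cdot|s)-\pi_j^k(\cdot|s)\|_1$ times the sensitivity of the marginalized advantage to agent $j$'s policy, which by the total-variation smoothness bound mentioned right before the lemma scales like $\phi_{max}/(1-\gamma)$ (with an extra $1/(1-\gamma)$ coming from the visitation-distribution shift).

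The second step is to translate $\bar A_{\Phi,i}$ into $\bar A_i$. Using the decomposition $V_i^\pi = \Phi^\pi + V_{h_i}^\pi$, we get $\bar A_i^{\pi^k}(s,a_i) = \bar A_{\Phi,i}^{\pi^k}(s,a_i) + \bar A_{h_i,i}^{\pi^k}(s,a_i)$, and Lemma \ref{lem:mpg-prop} says exactly that $\sum_{a_i}(\pi_i^{k+1}(a_i|s)-\pi_i^k(a_i|s))\bar A_{h_i,i}^{\pi^k}(s,a_i)=0$ — and since the NPG softmax update never puts zero mass on any action, $\sum_{a_i}\pi_i^{k+1}(a_i|s)\bar A_{h_i}^{\pi^k}(s,a_i)$ is likewise harmless once we observe $\sum_{a_i}\pi_i^k \bar A_{h_i}^{\pi^k} = $ (the advantage averaged under the generating policy vanishes in the appropriate sense). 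This is what lets me replace $\langle\pi_i^{k+1}(\cdot|s),\bar A_{\Phi,i}^{\pi^k}(s,\cdot)\rangle$ by $\langle\pi_i^{k+1}(\cdot|s),\bar A_i^{\pi^k}(s,\cdot)\rangle$ in the main term. The third step bounds $\|\pi_i^{k+1}(\cdot|s)-\pi_i^k(\cdot|s)\|_1$ in terms of the NPG step: from \eqref{MPG-NPG} one shows the $\ell_1$ movement is at most (a constant times) $\frac{\eta}{1-\gamma}\langle \pi_i^{k+1}(\cdot|s),\bar A_i^{\pi^k}(s,\cdot)\rangle$ — the same mirror-descent/three-point type inequality that underlies Lemma \ref{phi_diff}, now carrying the $1/(1-\gamma)$ factor from the effective reward $\bar A_i/(1-\gamma)$. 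Plugging this into the cross-term estimate yields a total error of order $\frac{\sqrt n\,\phi_{max}\eta}{(1-\gamma)^3}\sum_s d_\rho^{\pi^{k+1}}(s)\sum_i\langle\pi_i^{k+1}(\cdot|s),\bar A_i^{\pi^k}(s,\cdot)\rangle$, where the $\sqrt n$ comes from a Cauchy–Schwarz over the $n$ agents (as in the static case), and rearranging gives the claimed bound.

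The main obstacle I expect is the bookkeeping in the telescoping/cross-term step: keeping the dependence on $\gamma$ down to exactly $(1-\gamma)^{-3}$ (one power from performance difference, one from the visitation shift $d_\rho^{\pi^{k+1}}$ vs.\ $d_\rho^{\pi^k}$ when swapping an agent's policy, one from the effective reward scaling in the NPG step), and making sure the $\sqrt n$ rather than $n$ appears — this requires pairing the per-agent $\ell_1$ movement with the per-agent inner product \emph{before} summing, then Cauchy–Schwarz, rather than bounding each cross term crudely. A secondary subtlety is that $d_\rho^{\pi^{k+1}}(s)$ (not $d_\rho^{\pi^k}(s)$) must appear on the right-hand side, which is why I start from the performance difference lemma expanded around $\pi^{k+1}$'s visitation distribution and absorb the advantage sign carefully, using that $\pi^{k+1}$ is a strict NPG improvement direction so the main inner-product term is nonnegative.
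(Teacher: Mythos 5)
Your proposal follows essentially the same route as the paper's proof: expand $\Phi^{\pi^{k+1}}(\rho)-\Phi^{\pi^k}(\rho)$ via the performance difference lemma around $d_\rho^{\pi^{k+1}}$, telescope the product policy agent by agent to reduce the joint $\phi$-advantage to marginalized ones, invoke Lemma \ref{lem:mpg-prop} to swap $\bar A_{i,\phi}$ for $\bar A_i$, bound the cross terms by $\ell_1$/TV distances with sensitivity $\phi_{max}/(1-\gamma)$, and absorb them into the main inner product via a weighted Young/Pinsker step plus the closed form of the NPG update, yielding the $\sqrt n\,\phi_{max}\eta/(1-\gamma)^3$ coefficient. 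The plan is correct and matches the paper's argument in all essential steps.
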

Thus, using a function $f$ adapted from \eqref{f_function} as a connection, we are able to establish the convergence of NE-gap for MPGs in the following theorem.

\begin{theorem}\label{NPG_thm}
Consider a MPG with isolated stationary points and the policy update following NPG update (\ref{MPG-NPG}). For any $K \geq 1$, choosing $\eta = \frac{(1-\gamma)^2}{2 \sqrt{n} \phi_{max}}$, we have
\begin{align*}
    \frac{1}{K} \sum_{k=0}^{K-1} \text{NE-gap}(\pi^k) \leq \frac{2 M \phi_{max}}{K (1-\gamma)} (1 + \frac{2 \sqrt{n} \phi_{max}}{c \delta_K (1-\gamma)}),
\end{align*}
where 
\begin{align*}
M &:= \sup_{\pi} \max_s \frac{1}{d_{\rho}^{\pi}(s)},\\
c &:= \inf_{i\in [n], s\in \Sc, k\geq 0} \bigg( \sum_{a_j \in  \Ac^k_{i_p}} \pi_i^k(a_j|s) \bigg) \in (0, 1),\\
\delta^k &:= \min_{i\in [n], s\in \Sc}  \Big[\bar{A}_i^{\pi^k}(s, a_{i_p}^k) - \bar{A}_i^{\pi^k}(s,a_{i_q}^k)\Big], ~~~\text{and}~~~ \delta_K=\min_{0 \leq k \leq K-1}\delta^k \in (0, 1),
\end{align*}
similar to \eqref{delta_def}.
\end{theorem}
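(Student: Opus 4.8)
The plan is to follow the same skeleton as the proof of Theorem~\ref{PG_thm}, but to run the argument state-by-state and then aggregate the per-state inequalities against the visitation distribution $d_{\rho}^{\pi^{k+1}}$ that Lemma~\ref{lem:phi_diff_mpg} naturally produces. First I would reduce the NE-gap to a weighted sum of per-state advantages: starting from the performance-difference bound recorded above, $\text{NE-gap}(\pi^k)\le\frac{1}{1-\gamma}\sum_{i,s}d_{\rho}^{\pi_i^{*k},\pi_{-i}^k}(s)\max_{a_i}\bar{A}_i^{\pi^k}(s,a_i)$, I would bound $d_{\rho}^{\pi_i^{*k},\pi_{-i}^k}(s)\le 1\le M\,d_{\rho}^{\pi^{k+1}}(s)$ via Assumption~\ref{ass:explor} and the definition of $M$, obtaining $\text{NE-gap}(\pi^k)\le\frac{M}{1-\gamma}\sum_s d_{\rho}^{\pi^{k+1}}(s)\sum_{i=1}^n\max_{a_i}\bar{A}_i^{\pi^k}(s,a_i)$.

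Next I would introduce, for each state $s$, the state-wise analogue of \eqref{f_function}: $f^k_s(\alpha):=\sum_{i=1}^n\langle\pi_{i,\alpha}(\cdot|s)-\pi_i^k(\cdot|s),\bar{A}_i^{\pi^k}(s,\cdot)\rangle$ with $\pi_{i,\alpha}(\cdot|s)\propto\pi_i^k(\cdot|s)\exp(\alpha\bar{A}_i^{\pi^k}(s,\cdot))$. Using the centering identity $\sum_{a_i}\pi_i^k(a_i|s)\bar{A}_i^{\pi^k}(s,a_i)=\sum_{\av}\pi^k(\av|s)A_i^{\pi^k}(s,\av)=0$, one gets $f^k_s(\infty)=\sum_{i}\max_{a_i}\bar{A}_i^{\pi^k}(s,a_i)$ — exactly the per-state summand above — while $f^k_s(\tfrac{\eta}{1-\gamma})=\sum_{i}\langle\pi_i^{k+1}(\cdot|s),\bar{A}_i^{\pi^k}(s,\cdot)\rangle$, since the update \eqref{MPG-NPG} is precisely $\pi_{i,\alpha}$ evaluated at $\alpha=\eta/(1-\gamma)$. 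Lemma~\ref{lem:upper} is a statement purely about the exponential-weights geometry, so its per-state version transfers verbatim with $\bar{r}_i^k$ replaced by $\bar{A}_i^{\pi^k}(s,\cdot)$ and with $c,\delta_K$ the state-uniform constants of the theorem; applying it at $\alpha=\eta/(1-\gamma)$ and using $e^x-1\ge x$ yields $f^k_s(\infty)\le\bigl(1+\tfrac{1-\gamma}{c\eta\delta_K}\bigr)f^k_s(\tfrac{\eta}{1-\gamma})$ for every $s$.

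I would then multiply this per-state inequality by $d_{\rho}^{\pi^{k+1}}(s)\ge0$, sum over $s$, and substitute into Lemma~\ref{lem:phi_diff_mpg}, which gives
\begin{align*}
\text{NE-gap}(\pi^k)\le\frac{M}{1-\gamma}\Bigl(1+\frac{1-\gamma}{c\eta\delta_K}\Bigr)\Bigl(\frac{1}{1-\gamma}-\frac{\sqrt{n}\phi_{max}\eta}{(1-\gamma)^3}\Bigr)^{-1}\bigl[\Phi^{\pi^{k+1}}(\rho)-\Phi^{\pi^k}(\rho)\bigr].
\end{align*}
Substituting $\eta=\frac{(1-\gamma)^2}{2\sqrt{n}\phi_{max}}$ makes the last factor equal to $2(1-\gamma)$ and turns $\frac{1-\gamma}{c\eta\delta_K}$ into $\frac{2\sqrt{n}\phi_{max}}{c(1-\gamma)\delta_K}$, so $\text{NE-gap}(\pi^k)\le 2M\bigl(1+\frac{2\sqrt{n}\phi_{max}}{c(1-\gamma)\delta_K}\bigr)\bigl[\Phi^{\pi^{k+1}}(\rho)-\Phi^{\pi^k}(\rho)\bigr]$. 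Telescoping over $k=0,\dots,K-1$, dividing by $K$, and using $\Phi^{\pi^K}(\rho)-\Phi^{\pi^0}(\rho)\le\Phi^{\pi^K}(\rho)\le\frac{\phi_{max}}{1-\gamma}$ yields the claimed bound.

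The main obstacle is the bookkeeping around visitation distributions: Lemma~\ref{lem:phi_diff_mpg} forces the \emph{next} iterate's distribution $d_{\rho}^{\pi^{k+1}}$, whereas the NE-gap bound arrives with $d_{\rho}^{\pi_i^{*k},\pi_{-i}^k}$; reconciling the two is exactly what Assumption~\ref{ass:explor} and the coefficient $M$ are for, and it is why the per-state inequalities of the middle step must be uniform in $s$ — hence the infima over $s$ in the definitions of $c$ and $\delta_K$. A secondary point to verify is that the ``isolated stationary points'' hypothesis is what keeps $\delta_K$ strictly positive (and underpins its asymptotic value $\delta^*$ in Section~\ref{suboptimality}), so that the bound is non-vacuous; together with the extra $\frac{1}{1-\gamma}$ appearing both inside the exponent of \eqref{MPG-NPG} and throughout the constants, this is what makes the Markov case genuinely more delicate than the static one.
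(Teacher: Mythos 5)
Your proposal is correct and follows essentially the same route as the paper's own proof: it uses the distribution-mismatch bound with $M$ to pass from $d_{\rho}^{\pi_i^{*k},\pi_{-i}^k}$ to $d_{\rho}^{\pi^{k+1}}$, the per-state exponential-weights lemma (the paper's Lemma~\ref{lem:appendix_f}, the MPG analogue of Lemma~\ref{lem:upper}) with $e^x-1\ge x$, the potential-improvement bound of Lemma~\ref{lem:phi_diff_mpg}, and a telescoping sum, with the same step size yielding the same constants. Your reparameterization of $\alpha$ (absorbing the $1/(1-\gamma)$) and the explicit use of the centering identity $\langle\pi_i^k(\cdot|s),\bar{A}_i^{\pi^k}(s,\cdot)\rangle=0$ are only cosmetic differences from the paper's presentation.
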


Here, \textit{isolated} implies that no other stationary points exist in any sufficiently small open neighborhood of any stationary point.
This convergence result is similar to that provided in Theorem \ref{PG_thm} for potential games. 
We note that our theorem also applies to an alternate definition for MPGs in works such as  \cite{ding2022independent}, which we discuss in Appendix \ref{sec:app-ext}.
Compared to potential games, the major difference is the introduction of $M$, which measures the distribution mismatch in the system. Generally, Assumption \ref{ass:explor} implies
a finite value for $M$ in the MPG setup.

\paragraph{Discussion and Comparison on Convergence Rate}
Compared to the iteration complexity of previous works listed in Table \ref{tab:compare}, the convergence rate in Theorem \ref{NPG_thm} presents multiple improvements. Most importantly, the theorem guarantees that the averaged NE-gap reaches $\epsilon$ in $\order{1/\epsilon}$ iterations, improving the best previously known result of $\order{1/\epsilon^2}$ in Table \ref{tab:compare}. Furthermore, this rate of convergence does not depend on the size of action space $|\Ac_i|$, and it has milder dependence on other system parameters, such as the distribution mismatch coefficient $M$ and $(1-\gamma)$. Note that many of the parameters above could be arbitrarily large in practice (e.g., $1-\gamma = 0.01$ in our congestion game experiment). Theorem \ref{NPG_thm} indicates a tighter convergence bound with respect to the discussed factors in general.

\subsection{The Consideration of Suboptimality Gap}
\label{suboptimality}
In Theorems \ref{PG_thm} and \ref{NPG_thm}, we established the ergodic convergence rates of NPG in potential game and Markov potential game settings, which depend on $1/\delta_K$. In these results, $\delta^k$ generally encapsulates the difference between the gradients evaluated at the best and second best policies, and $\delta_K$ is a lower bound on $\delta^k$. We refer to $\delta^k$ as the \textit{suboptimality gap} at iteration $k$. In our analysis, the suboptimality gap provides a vehicle to establish the improvement of the potential function in two consecutive steps. In particular, it enables us to draw a connection between $\Phi^{\pi^{k+1}} - \Phi^{\pi^k}$ and NE-gap($\pi^k$) using Lemma \ref{lem:upper} and Lemma \ref{lem:appendix_f} in the appendix. On the contrary, most of the existing work does not rely on this approach and generally studies the relationship between $\Phi^{\pi^{k+1}} - \Phi^{\pi^k}$ and the {\it squared} NE-gap($\pi^k$), which suffers a slower convergence rate of $\Oc(1/\sqrt{K})$ \cite{zhangglobal}.

The analysis leveraging the suboptimality gap, though has not been adopted in the MARL studies, was considered in the single-agent scenario. Khodadadian et al. \cite{khodadadian2022linear} proved asymptotic geometric convergence of single-agent RL with the introduction of \textit{optimal advantage function gap} $\Delta^k$, which shares similar definition as the gap $\delta^k$ studied in this work. Moreover, the notion of suboptimality gap is commonly used in the multi-armed bandit literature \cite{lattimore2020bandit}, so as to give the instance-dependent analysis.

While our convergence rate is sharper, a lower bound on the suboptimality gap is generally not guaranteed, and scenarios with zero optimality gap can be constructed in MPGs. In practice, we find that even if $\delta^k$ \textit{approaches} zero in certain iterations, it may not \textit{converge} to zero, and in these scenarios the system still converges without a slowdown. We provide a numerical example in Section \ref{exp_pg} (Fig. \ref{fig:delta}) to support our claim. Nevertheless, in what follows, we further identify a sufficient condition that allows us to alleviate this problem altogether by focusing on the asymptotic profile of the sub-optimality gap.

\paragraph{Theoretical Relaxation}

The following proposition guarantees the asymptotic results of independent NPG.
\begin{proposition}[\cite{zhangglobal}]
\label{prop:asym}
    Suppose that Assumption \ref{ass:explor} holds and that the stationary policies are isolated. Then independent NPG with $\eta \leq \frac{(1-\gamma)^2}{2 \sqrt{n} \phi_{max}}$ guarantees that $\lim_{k \to \infty} \pi^k = \pi^{\infty}$, where $\pi^{\infty}$ is a Nash policy.
\end{proposition}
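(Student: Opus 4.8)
The plan is a Lyapunov / dynamical-systems argument built on Lemma~\ref{lem:phi_diff_mpg}.

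\emph{Step 1 (the potential is a Lyapunov function with summable increments).} For $\eta\le\frac{(1-\gamma)^2}{2\sqrt{n}\phi_{max}}$, the prefactor in Lemma~\ref{lem:phi_diff_mpg} satisfies $\frac{1}{1-\gamma}-\frac{\sqrt{n}\phi_{max}\eta}{(1-\gamma)^3}\ge\frac{1}{2(1-\gamma)}>0$. Also $\langle\pi_i^{k+1}(\cdot|s),\bar A_i^{\pi^k}(s,\cdot)\rangle=\langle\pi_i^{k+1}(\cdot|s)-\pi_i^k(\cdot|s),\bar A_i^{\pi^k}(s,\cdot)\rangle$ since $\langle\pi_i^k(\cdot|s),\bar A_i^{\pi^k}(s,\cdot)\rangle=0$, and for the exponentiated-gradient update this equals $\frac{1-\gamma}{\eta}\big(\KL(\pi_i^{k+1}(\cdot|s)\|\pi_i^k(\cdot|s))+\KL(\pi_i^k(\cdot|s)\|\pi_i^{k+1}(\cdot|s))\big)\ge0$. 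Hence $\Phi^{\pi^k}(\rho)$ is nondecreasing and, being bounded by $\phi_{max}/(1-\gamma)$, converges; telescoping Lemma~\ref{lem:phi_diff_mpg} gives
\[
\sum_{k\ge0}\sum_{s}d_\rho^{\pi^{k+1}}(s)\sum_{i=1}^{n}\langle\pi_i^{k+1}(\cdot|s)-\pi_i^k(\cdot|s),\bar A_i^{\pi^k}(s,\cdot)\rangle<\infty.
\]
By Assumption~\ref{ass:explor}, $d_\rho^{\pi^{k+1}}(s)$ is bounded below by a positive constant, so each nonnegative summand is itself summable; Pinsker's inequality applied to $\KL(\pi_i^k(\cdot|s)\|\pi_i^{k+1}(\cdot|s))$ then yields $\sum_k\|\pi_i^{k+1}(\cdot|s)-\pi_i^k(\cdot|s)\|_1^2<\infty$, hence $\|\pi^{k+1}-\pi^k\|\to0$.

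\emph{Step 2 (convergence of the whole sequence).} The iterates lie in the compact product of simplices; let $L$ be their nonempty, compact set of limit points. Since $\|\pi^{k+1}-\pi^k\|\to0$, $L$ is connected. The NPG update \eqref{MPG-NPG} is a continuous self-map $T$ of the simplex (value and advantage functions are continuous in $\pi$, as $\gamma<1$), so for any subsequence $\pi^{k_j}\to\pi^\infty\in L$ we get $T(\pi^\infty)=\lim_j T(\pi^{k_j})=\lim_j\pi^{k_j+1}=\pi^\infty$. A fixed point of \eqref{MPG-NPG} satisfies $\bar A_i^{\pi^\infty}(s,a_i)=0$ on $\supp(\pi_i^\infty(\cdot|s))$, equivalently $\pi_i^\infty(a_i|s)\bar A_i^{\pi^\infty}(s,a_i)=0$ for all $i,s,a_i$, i.e.\ $\pi^\infty$ is first-order stationary. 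Thus $L$ is a connected subset of the set of stationary policies; since the latter are isolated, $L$ is a singleton and $\pi^k\to\pi^\infty$.

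\emph{Step 3 (the limit is a Nash policy).} Suppose not: some $i,s,a_i^\ast$ satisfies $\bar A_i^{\pi^\infty}(s,a_i^\ast)=\delta>0$ (so $\pi_i^\infty(a_i^\ast|s)=0$ by stationarity). The log-form of \eqref{MPG-NPG} reads $\log\pi_i^{k+1}(a_i^\ast|s)=\log\pi_i^k(a_i^\ast|s)+\frac{\eta}{1-\gamma}\bar A_i^{\pi^k}(s,a_i^\ast)-\log Z_i^k(s)$, and a short computation gives $\log Z_i^k(s)=\KL(\pi_i^k(\cdot|s)\|\pi_i^{k+1}(\cdot|s))\le\frac{\eta}{1-\gamma}\langle\pi_i^{k+1}(\cdot|s)-\pi_i^k(\cdot|s),\bar A_i^{\pi^k}(s,\cdot)\rangle$, which is summable by Step~1. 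Since $\pi^k\to\pi^\infty$, we have $\bar A_i^{\pi^k}(s,a_i^\ast)\to\delta>0$, so $\sum_k\frac{\eta}{1-\gamma}\bar A_i^{\pi^k}(s,a_i^\ast)$ diverges; summing the recursion forces $\log\pi_i^k(a_i^\ast|s)\to+\infty$, contradicting $\pi_i^k(a_i^\ast|s)\le1$. Hence $\pi^\infty$ is a Nash policy.

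The main obstacle is Step~2: upgrading summable increments to convergence of the \emph{whole} sequence rather than of subsequences. This is exactly where the isolated-stationary-points hypothesis enters, through the ``a connected subset of a discrete set is a point'' argument. The remaining ingredients — continuity of $T$ (including on the boundary of the simplex), the fixed-point characterization, and the Pinsker/KL bookkeeping — are routine but must be stated carefully.
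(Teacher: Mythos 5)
The paper does not prove this proposition; it is imported verbatim from \cite{zhangglobal} as a black box, so there is no in-paper proof to compare against. Your argument is correct and is essentially the standard route taken in the cited source: monotone improvement of $\Phi$ via Lemma~\ref{lem:phi_diff_mpg} combined with the identity $\langle \pi_i^{k+1}-\pi_i^k,\bar A_i^{\pi^k}\rangle=\tfrac{1-\gamma}{\eta}\bigl(\KL(\pi_i^{k+1}\Vert\pi_i^k)+\KL(\pi_i^k\Vert\pi_i^{k+1})\bigr)$ gives square-summable policy increments (using Assumption~\ref{ass:explor} to lower-bound $d_\rho^{\pi^{k+1}}$); Ostrowski's connected-limit-set argument plus isolation of stationary points upgrades subsequential convergence to convergence of the whole sequence; and the divergence of $\sum_k \bar A_i^{\pi^k}(s,a_i^\ast)$ against the summable $\log Z_i^k(s)=\KL(\pi_i^k\Vert\pi_i^{k+1})$ rules out a non-Nash stationary limit. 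The only point worth stating explicitly is that softmax iterates have full support for all finite $k$, so $\log\pi_i^k(a_i^\ast|s)$ is well defined throughout Step~3; with that noted, I see no gap.
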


Since asymptotic convergence of policy is guaranteed by Proposition \ref{prop:asym}, the suboptimality gap $\delta^k$ is also guaranteed to converge to some $\delta^*$. We make the following assumption about the asymptotic suboptimality gap, which only depends on the property of the game itself.     
\begin{assumption}
\label{ass:lim}
    Assume that $\lim_{k \to \infty} \delta^k = \delta^* > 0$.
\end{assumption}
Assumption \ref{ass:lim} provides a relaxation in the sense that instead of requiring a lower bound $\inf \delta^k$ for all $k$, we only need $ \delta^*>0$, a lower bound on the limit as the agents approach some Nash policies. This will allow us to disregard the transition behavior of the system and focus on the rate for large enough $k$.

By the definition of $\delta^*$, we know that there exists finite $K'$ such that $\forall k>K', |\delta^k - \delta^*| \leq \frac{\delta^*}{2}, \delta^k \geq \frac{\delta^*}{2}$. Using these results, we can rework the proofs of Theorems \ref{PG_thm} and \ref{NPG_thm} to get the following corollary.
\begin{corollary}\label{NPG_thm_lim}
Consider a MPG that satisfies Assumption \ref{ass:lim} with NPG update using algorithm \ref{MPG-NPG}. There exists $K'$, such that for any $K \geq 1$, choosing $\eta = \frac{(1-\gamma)^2}{2 \sqrt{n} \phi_{max}}$, we have
\begin{align*}
    \frac{1}{K} \sum^{K-1}_{k=0} \text{NE-gap}(\pi^k) \leq \frac{2 M \phi_{max}}{K (1-\gamma)} (1 + \frac{4 \sqrt{n} \phi_{max}}{c \delta^* (1-\gamma)} + \frac{K'}{2M} ),
\end{align*}
where $M ,c$ are defined as in Theorem \ref{NPG_thm}.
\end{corollary}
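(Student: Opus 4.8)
The plan is to run the telescoping argument behind Theorem~\ref{NPG_thm} only on the \emph{tail} of the iteration sequence, where Assumption~\ref{ass:lim} furnishes a usable lower bound on the suboptimality gap, and to dispatch the finitely many \emph{transient} iterations with a crude uniform bound on the NE-gap.

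First I would record two elementary facts. (i) For every $k$, the potential-game identity of Definition~\ref{MPG_formulation} gives $\text{NE-gap}(\pi^k) = \max_{i,\pi_i'}\big[\Phi^{\pi_i',\pi_{-i}^k}(\rho) - \Phi^{\pi^k}(\rho)\big] \le \frac{\phi_{max}}{1-\gamma}$, since $0 \le \Phi^{\pi}(\rho) \le \frac{\phi_{max}}{1-\gamma}$. (ii) By Proposition~\ref{prop:asym} the iterates converge to a Nash policy, so the sequence $\delta^k$ is well-behaved near the limit, and Assumption~\ref{ass:lim} then yields a finite index $K'$ with $\delta^k \ge \delta^*/2$ for all $k \ge K'$.

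Next I would split $\sum_{k=0}^{K-1}\text{NE-gap}(\pi^k)$ at $\min\{K',K\}$. The first (at most) $K'$ terms contribute at most $\frac{K'\phi_{max}}{1-\gamma}$ by (i). For the tail $k=K',\dots,K-1$ (nonempty only when $K>K'$), I would use the single-step estimate underlying the proof of Theorem~\ref{NPG_thm}, namely
\begin{align*}
    \text{NE-gap}(\pi^k) \le 2M\Big(1 + \tfrac{2\sqrt{n}\,\phi_{max}}{c\,\delta^k\,(1-\gamma)}\Big)\big[\Phi^{\pi^{k+1}}(\rho) - \Phi^{\pi^k}(\rho)\big],
\end{align*}
which in fact holds with the per-iteration gap $\delta^k$ in place of the uniform quantity $\delta_K$ used in the theorem statement (the latter being merely a lower bound, and the right-hand side being decreasing in the gap). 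Inserting $\delta^k \ge \delta^*/2$ and telescoping over $k = K',\dots,K-1$ gives $\sum_{k=K'}^{K-1}\text{NE-gap}(\pi^k) \le 2M\big(1 + \tfrac{4\sqrt{n}\,\phi_{max}}{c\,\delta^*(1-\gamma)}\big)\big[\Phi^{\pi^K}(\rho)-\Phi^{\pi^{K'}}(\rho)\big] \le \frac{2M\phi_{max}}{1-\gamma}\big(1 + \tfrac{4\sqrt{n}\,\phi_{max}}{c\,\delta^*(1-\gamma)}\big)$. Adding the transient and tail contributions and dividing by $K$ produces exactly the stated bound; the degenerate case $K \le K'$ needs no separate treatment, since then the claimed right-hand side already exceeds $\frac{\phi_{max}}{1-\gamma}$, which bounds the average by (i).

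The one point requiring genuine care --- and the step I would treat as the main obstacle --- is verifying that the single-step estimate above is legitimately ``per-iteration'': the proof of Theorem~\ref{NPG_thm} is presented with the uniform gap $\delta_K$ (through Lemma~\ref{lem:upper}, its appendix refinement, and Lemma~\ref{lem:phi_diff_mpg}), so one must re-inspect that chain of inequalities to confirm that each link uses only $\delta^k$ and that substituting $\delta^k \ge \delta_K$ never degrades the bound. Everything else --- the uniform NE-gap bound, the existence of $K'$ from Assumption~\ref{ass:lim} together with Proposition~\ref{prop:asym}, the telescoping, and the bookkeeping between $K$ and $K'$ --- is routine and mirrors the proof of Theorem~\ref{NPG_thm}.
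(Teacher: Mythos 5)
Your proposal is correct and follows essentially the same route as the paper's own proof: extract the per-iteration inequality $\text{NE-gap}(\pi^k) \le 2M\bigl(1 + \tfrac{2\sqrt{n}\phi_{max}}{c\,\delta^k(1-\gamma)}\bigr)\bigl[\Phi^{\pi^{k+1}}(\rho)-\Phi^{\pi^k}(\rho)\bigr]$ (which the paper's proof of Theorem~\ref{NPG_thm} indeed establishes with $\delta^k$ before weakening to $\delta_K$, via Lemma~\ref{lem:appendix_f}), substitute $\delta^k \ge \delta^*/2$ for $k \ge K'$, telescope over the tail, and bound the first $K'$ iterations by $\phi_{max}/(1-\gamma)$ each. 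The one step you flag as needing care is exactly how the paper proceeds, and your handling of the degenerate case $K \le K'$ is a small extra piece of rigor the paper leaves implicit.
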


\section{Experimental Results}
\label{numerical}

In previous sections, we established the theoretical convergence of NPG in MPGs. In order to verify the results, we construct two experimental settings for the NPG update and compare our empirical results with existing algorithms. We consider a synthetic potential game scenario with randomly generated rewards and a congestion problem studied in \cite{leonardos2022global} and \cite{ding2022independent}. 
We also provide the source code\footnote{https://github.com/sundave1998/Independent-NPG-MPG} for all experiments.

\subsection{Synthetic Potential Games}\label{exp_pg}
We first construct a potential game with a fully cooperative objective, where the reward tensor $r(\av)$ is randomly generated from a uniform distribution. We set the agent number to $n = 3$, each with different action space {$|\Ac_1| = 3, |\Ac_2| = 4, |\Ac_3| = 5$.} At the start of the experiment, all agents are initialized with uniform policies. We note that the experimental setting in \cite{zhangglobal} falls under this broad setting, although the experiment therein was a two-player game with carefully designed rewards.

\begin{figure}
\begin{subfigure}{0.33\textwidth}
    \centering
    \includegraphics[width=\textwidth]{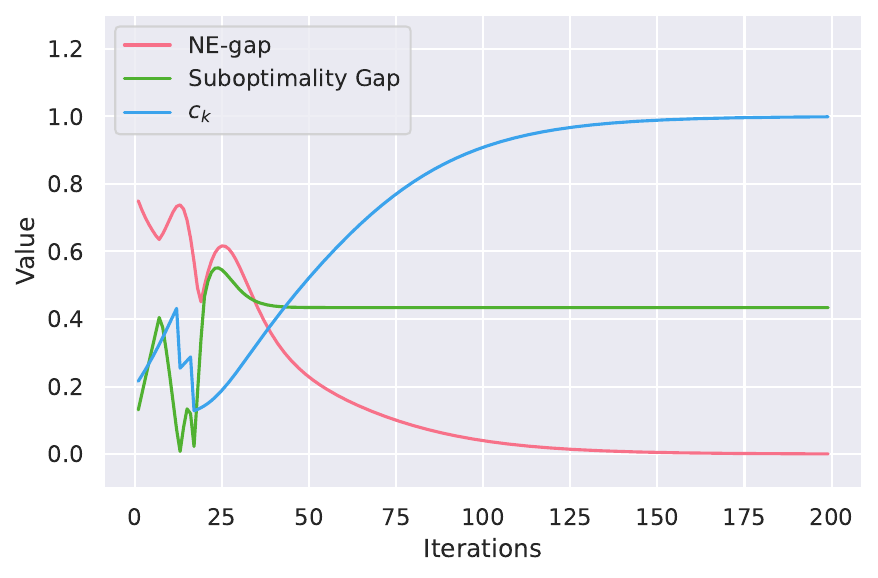}
    \caption{}
    \label{fig:delta}
\end{subfigure}
\begin{subfigure}{0.33\textwidth}
    \centering
    \includegraphics[width=\textwidth]{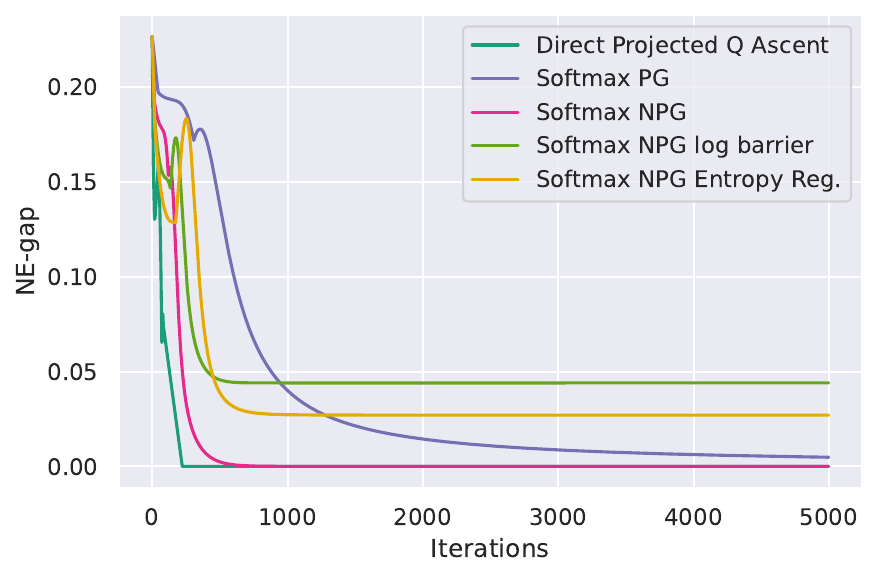}
    \caption{}
    \label{syn_fig}
\end{subfigure}
\begin{subfigure}{0.33\textwidth}
    \centering
    \includegraphics[width=\textwidth]{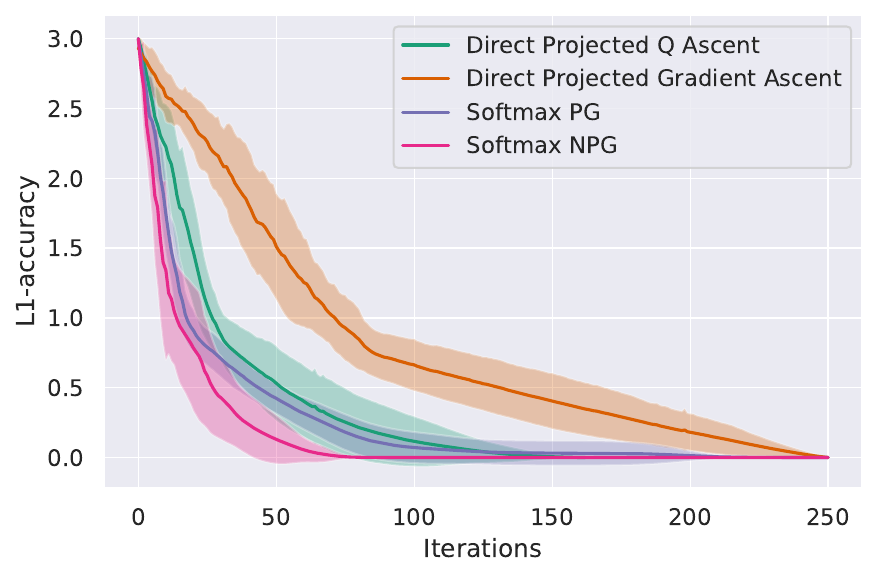}
    \caption{}
    \label{fig_congestion}
\end{subfigure}
\caption{(a) The suboptimality gap; (b) Learning curve in synthetic experiments; (c) Learning curve for congestion game.}
\end{figure}

The results are shown in Figure \ref{syn_fig}. We compare the performance of independent NPG with other commonly studied policy updates, such as projected Q ascent \cite{ding2022independent}, entropy regularized NPG \cite{zhangglobal} as well as log-barrier regularized NPG method \cite{zhangglobal}. We set the learning rate of all algorithms as $\eta = 0.1$. 
We also fine-tune regularization parameters to find the entropy regularization factor $\tau = 0.05$ and the log-barrier regularization factor $\lambda = 0.005$. As discussed in Section \ref{PG_Analysis} (the effect of $c$), we observe in Figure \ref{syn_fig} that the regularized algorithms fail to reach an exact Nash policy despite exhibiting good convergence performance at the start. The empirical results here align with our theoretical findings in Section \ref{analysis}.

\paragraph{Impact of $c$ and $\delta$} We demonstrate the impact of the initialization dependent factor $c^k$ and suboptimality gap $\delta^k$ in MPGs with the same experimental setup. Figure \ref{fig:delta} depicts the change in value for $c^k, \delta^k$, and the NE-gap.
We can see from the figure that as the algorithm updates over iterations, the value of $c^k$ increases and approaches one, while $\delta^k$ approaches some non-zero constant. Figure \ref{fig:delta} shows that although $\delta^k$ could be arbitrarily small in theory if we do not impose technical assumptions, it is not the case in general. 
Therefore, one should not automatically assume that the suboptimality gap diminishes with respect to iteration, as the limit entirely depends on the problem environment.

The effect of suboptimality gap $\delta^*$ in PG is illustrated in {Section \ref{app:addition-exp} of the appendix} under a set of carefully constructed numerical examples. We verify that a larger gap indicates a faster convergence, which corroborates our theory.

\subsection{Congestion Game}\label{exp_npg}
We now consider a class of MDPs where each state defines a congestion game. We borrow the specific settings for this experiment from \cite{leonardos2022global} \cite{ding2022independent}. 

For the congestion game experiments, we consider the agent number $n=8$ with the number of facilities $|\mathcal{A}_i| = 4$, where $\mathcal{A}_i = \{A, B, C, D\}$ as the corresponding individual action spaces. There are two states defined as $\mathcal{S} = \{\textit{safe}, \textit{distancing}\}$. In each state, all agents prefer to be taking the same action with as many agents as possible. The reward for an agent selecting action $k$ is defined by predefined weights $w^k_s$ multiplied by the number of other agents taking the same action. Additionally, we set $w^A_s < w^B_s < w^C_s < w^D_s$ and the reward in \textit{distancing} state is reduced by some constant compared to the \textit{safe} state. The state transition depends on the joint actions of all agents. If more than half of all agents take the same action, the system enters a \textit{distancing} state with lower rewards.
{If the agents are evenly distributed over all actions}, the system enters \textit{safe} state with higher rewards.

We use episodic updates with $T = 20$ steps and collect $20$ trajectories in each mini-batch and estimate the value function and Q-functions as well as the discounted visitation distribution. We use a discount factor of $\gamma = 0.99$. We adopt the same step size used in \cite{leonardos2022global, ding2022independent} and determine optimal step-sizes of softmax PG and NPG with grid-search. Since regularized methods in Section \ref{exp_pg} generally do not converge to Nash policies, they are excluded in this experiment. 
To make the experiment results align with previous works, we provide the $L_1$ distance between the current-iteration policies compared to Nash policies. We plot the mean and variance of $L_1$ distance across multiple runs in Figure \ref{fig_congestion}. Compared to the direct parameterized algorithms, the two softmax parameterized algorithms exhibit faster convergence, and softmax parameterized NPG has the best performance across all tested algorithms.

\section{Conclusion and Discussion}

In this paper, we studied Markov potential games in the context of multi-agent reinforcement learning. We focused on the independent natural policy gradient algorithm and studied its convergence rate to the Nash equilibrium. The main theorem of the paper shows that the convergence rate of NPG in MPGs is $\order{1/K}$, which improves upon the previous results. Additionally, we provided detailed discussions on the impact of some problem factors (e.g., $c$ and $\delta$) and compared our rate with the best known results with respect to these factors. Two empirical results were presented as a verification of our analysis.

Despite our newly proposed results, there are still many open problems that need to be addressed. One of the limitations of this work is the assumption of Markov potential games, the relaxation of which could extend our analysis to more general stochastic games. As a matter of fact, the gradient-based algorithm studied in this work will fail for a zero-sum game as simple as Tic-Tac-Toe. A similar analysis could also be applied to regularized games and potentially sharper bounds could be obtained. 
The agents are also assumed to receive gradient information from an oracle in this paper. When such oracle is unavailable, the gradient can be estimated via trajectory samples, which we leave as a future work.
Other future directions are the convergence analysis of policy gradient-based algorithms in safe MARL and robust MARL, following the recent exploration of safe single-agent RL \cite{ding2020natural, liu2021policy, zhou2022anchor} and robust single-agent RL \cite{li2022first, zhou2023natural}.

\begin{ack}
This material is based upon work partially supported by the US Army Contracting Command under W911NF-22-1-0151 and W911NF2120064, US National Science Foundation under CMMI-2038625, and US Office of Naval Research under N00014-21-1-2385. The views expressed herein and conclusions contained in this document are those of the authors and should not be interpreted as representing the views or official policies, either expressed or implied, of the U.S. NSF, ONR, ARO, or the United States Government. The U.S. Government is authorized to reproduce and distribute reprints for Government purposes notwithstanding any copyright notation herein.
\end{ack}

\newpage
\bibliographystyle{plain}

\newpage

\appendix

\section{Proof for Potential Games in Section \ref{PG_Analysis}}

We first find the upper bound on the difference in marginalized rewards under two policies. The following lemma establishes a "smooth" relationship between the marginalized rewards and their respective policies.
For ease of notation, we denote $r(\pi): = \Eb_{\av \sim \pi} [r(\av)]$ and $\phi(\pi) := \Eb_{\av \sim \pi} [\phi(\av)]$, where these function values are taken expectation over policy $\pi$.

\begin{lemma}\label{bound_TV}
    For any two sets of policies $\pi, \pi' \in \Delta(\mathcal{A}_1)\times ... \times \Delta(\mathcal{A}_n)$, the difference in marginalized reward of any agent $i$ is bounded by the total variation distance of policies,
    \begin{align*}
    \|\bar{r}_i^{\pi} - \bar{r}_i^{\pi'}\|_{\infty} \leq 2 TV(\pi_{-i}, \pi_{-i}').
\end{align*}
\end{lemma}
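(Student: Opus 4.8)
\textbf{Proof proposal for Lemma~\ref{bound_TV}.}

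The plan is to expand the marginalized reward using its definition and reduce the difference to a difference of expectations under the two joint policies $\pi_{-i}$ and $\pi_{-i}'$, then bound that difference by total variation. First I would fix an agent $i$ and an action $a_i \in \Ac_i$, and write out
\[
\bar r_i^{\pi}(a_i) - \bar r_i^{\pi'}(a_i) = \sum_{a_{-i}} \bigl(\pi_{-i}(a_{-i}) - \pi'_{-i}(a_{-i})\bigr)\, r_i(a_i, a_{-i}).
\]
Since $r_i(a_i, a_{-i}) \in [0,1]$ by assumption, the right-hand side is controlled by the $\ell_1$ distance between the marginal distributions $\pi_{-i}(\cdot)$ and $\pi'_{-i}(\cdot)$ over $\Ac_{-i}$. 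Concretely, taking absolute values and using $|r_i|\le 1$ gives $|\bar r_i^{\pi}(a_i) - \bar r_i^{\pi'}(a_i)| \le \sum_{a_{-i}} |\pi_{-i}(a_{-i}) - \pi'_{-i}(a_{-i})| = \|\pi_{-i} - \pi'_{-i}\|_1 = 2\,\TV(\pi_{-i}, \pi'_{-i})$, where the last equality is the standard identity relating total variation to the $\ell_1$ norm of the probability vectors. Since the bound holds for every $a_i$, it holds for the supremum over $a_i$, which is exactly $\|\bar r_i^{\pi} - \bar r_i^{\pi'}\|_\infty$.

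One subtlety worth addressing is that the statement is phrased in terms of $\TV(\pi_{-i},\pi'_{-i})$, the total variation distance between the \emph{product} distributions $\pi_{-i} = \prod_{j\ne i}\pi_j$ and $\pi'_{-i} = \prod_{j\ne i}\pi'_j$; the argument above treats $\pi_{-i}$ directly as a distribution over the joint action $a_{-i}$, so no further decomposition into per-agent terms is needed. If instead one wished a bound in terms of $\sum_{j\ne i}\TV(\pi_j,\pi'_j)$, a hybrid (telescoping) argument over the coordinates $j$ would be required, together with the subadditivity of TV for product measures; but that refinement is not what the lemma asks for, so I would not pursue it here. The only real ``obstacle'' is making sure the reward range $[0,1]$ (rather than, say, $[-1,1]$) is used correctly — with the $[0,1]$ normalization one still gets the clean factor $2$ because $\|\pi_{-i}-\pi'_{-i}\|_1 = 2\TV$, and the extra slack from $r_i\ge 0$ is simply not exploited. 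This step is entirely routine; the lemma is a direct consequence of Hölder's inequality (or the dual characterization of TV) applied coordinatewise in $a_i$.
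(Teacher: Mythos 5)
Your argument is correct and is essentially identical to the paper's proof: both expand $\bar r_i^{\pi}(a_i) - \bar r_i^{\pi'}(a_i)$ as an expectation difference over $a_{-i}$, apply H\"older with $\|r_i\|_\infty \le 1$, and use $\|\pi_{-i}-\pi'_{-i}\|_1 = 2\,\TV(\pi_{-i},\pi'_{-i})$. Your remark that the bound is against the TV of the joint product distributions (no per-agent decomposition needed) correctly matches how the lemma is stated and used.
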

\begin{proof}
Given any $\pi, \pi' \in \Delta(\Ac)$, we have
\begin{align*}
    |\bar{r}_i^{\pi}(a_i) - \bar{r}_i^{\pi'}(a_i)| &= |\Eb_{a_{-i} \sim \pi_{-i}} [r_i(a_i, a_{-i})] - \Eb_{a_{-i} \sim \pi_{-i}'} [r_i(a_i, a_{-i})]| \\
    &\leq \|r_i\|_{\infty} \|\pi_{-i} - \pi_{-i}'\|_1 \leq 2 TV(\pi_{-i}, \pi_{-i}'). 
\end{align*}
\end{proof}
Next, we provide the proof of Lemma \ref{phi_diff}, which demonstrates the lower bound of the potential function difference.

\begin{lemma}[Restatement of Lemma \ref{phi_diff}]
Given policy $\pi^k$ and marginalized reward $\bar{r}_i^k(a_i)$, for $\pi^{k+1}$ generated using the NPG update in \eqref{PG-NPG}, we have the following inequality for any $\eta<\frac{1}{\sqrt{n}}$,
\begin{align*}
    \phi(\pi^{k+1}) - \phi(\pi^k) \geq (1 - \sqrt{n} \eta) \sum_{i=1}^n \langle \pi_i^{k+1} - \pi_i^k, \bar{r}_i^k\rangle.
\end{align*}
\end{lemma}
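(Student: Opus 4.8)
The plan is to expand the potential difference $\phi(\pi^{k+1})-\phi(\pi^k)$ along a ``one agent at a time'' path, so that the multilinear structure of $\phi$ reduces the problem to a sum of single-agent increments plus a correction term controlled by Lemma \ref{bound_TV}. Concretely, for $j=0,1,\dots,n$ let $\sigma^j := (\pi_1^{k+1},\dots,\pi_j^{k+1},\pi_{j+1}^k,\dots,\pi_n^k)$, so $\sigma^0=\pi^k$ and $\sigma^n=\pi^{k+1}$, and telescope
\begin{align*}
    \phi(\pi^{k+1})-\phi(\pi^k) = \sum_{i=1}^n \big(\phi(\sigma^i)-\phi(\sigma^{i-1})\big).
\end{align*}
Because $\phi(\pi)=\Eb_{\av\sim\pi}[\phi(\av)]$ is multilinear in the individual policies and $\sigma^i,\sigma^{i-1}$ differ only in coordinate $i$, each term equals $\langle \pi_i^{k+1}-\pi_i^k,\ \bar\phi_i^{\,\sigma^{i-1}}\rangle$, where $\bar\phi_i^{\,\sigma^{i-1}}(a_i)$ is the marginalized potential with the other coordinates fixed at $\sigma^{i-1}_{-i}$. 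The potential-game identity \eqref{potential_properties} lets me replace $\bar\phi_i$ by $\bar r_i$ (they differ by a constant in $a_i$, which is annihilated by $\langle \pi_i^{k+1}-\pi_i^k,\cdot\rangle$ since both are probability vectors), so each term is $\langle \pi_i^{k+1}-\pi_i^k,\ \bar r_i^{\,\sigma^{i-1}}\rangle$.

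The next step is to compare $\bar r_i^{\,\sigma^{i-1}}$ with $\bar r_i^{\,k} := \bar r_i^{\,\pi^k}$. Writing $\langle \pi_i^{k+1}-\pi_i^k, \bar r_i^{\,\sigma^{i-1}}\rangle = \langle \pi_i^{k+1}-\pi_i^k, \bar r_i^k\rangle + \langle \pi_i^{k+1}-\pi_i^k, \bar r_i^{\,\sigma^{i-1}}-\bar r_i^k\rangle$, the second (error) term is bounded in absolute value by $\|\pi_i^{k+1}-\pi_i^k\|_1 \cdot \|\bar r_i^{\,\sigma^{i-1}}-\bar r_i^k\|_\infty$. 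Lemma \ref{bound_TV} gives $\|\bar r_i^{\,\sigma^{i-1}}-\bar r_i^k\|_\infty \le 2\,\TV(\sigma^{i-1}_{-i},\pi_{-i}^k) \le \sum_{j<i}\|\pi_j^{k+1}-\pi_j^k\|_1$ (only coordinates $1,\dots,i-1$ have changed). Summing over $i$, the total error is at most $\sum_i \|\pi_i^{k+1}-\pi_i^k\|_1 \sum_{j<i}\|\pi_j^{k+1}-\pi_j^k\|_1 \le \tfrac12\big(\sum_i \|\pi_i^{k+1}-\pi_i^k\|_1\big)^2 \le \tfrac{n}{2}\sum_i\|\pi_i^{k+1}-\pi_i^k\|_1^2$ by Cauchy--Schwarz; I'll then need a Pinsker-type / NPG-specific bound linking $\|\pi_i^{k+1}-\pi_i^k\|_1$ back to $\langle \pi_i^{k+1}-\pi_i^k,\bar r_i^k\rangle$ (the key monotonicity property of the exponential/NPG update, cf. \cite{cen2022independent}) to absorb this quadratic term into a $\sqrt{n}\,\eta$ multiple of $\sum_i\langle \pi_i^{k+1}-\pi_i^k,\bar r_i^k\rangle$, yielding the claimed factor $(1-\sqrt{n}\,\eta)$.

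The main obstacle is the last point: turning the crude $\ell_1$-bound on the policy step into something proportional to $\eta\langle\pi_i^{k+1}-\pi_i^k,\bar r_i^k\rangle$ with the right constant. This needs the specific structure of the NPG update $\pi_i^{k+1}(a_i)\propto\pi_i^k(a_i)\exp(\eta\bar r_i^k(a_i))$ — namely that the mirror-descent/softmax step of size $\eta$ satisfies $\|\pi_i^{k+1}-\pi_i^k\|_1 \le \eta\langle\pi_i^{k+1}-\pi_i^k, \bar r_i^k\rangle^{1/2}\cdot(\text{something})$ or, more cleanly, that $\langle\pi_i^{k+1}-\pi_i^k,\bar r_i^k\rangle \ge \tfrac1\eta \KL(\pi_i^{k+1}\|\pi_i^k) + \tfrac1\eta\KL(\pi_i^k\|\pi_i^{k+1})$ followed by Pinsker. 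Getting the bookkeeping of constants to land exactly at $1-\sqrt n\,\eta$ (rather than some other multiple of $n\eta$) is the delicate part; everything else is the multilinear telescoping and routine norm estimates.
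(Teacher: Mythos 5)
Your skeleton is exactly the paper's: telescope over agents one at a time along the hybrid policies, use the potential identity \eqref{potential_properties} to trade $\bar\phi_i$ for $\bar r_i$ (the constant-in-$a_i$ shift is annihilated by $\langle\pi_i^{k+1}-\pi_i^k,\cdot\rangle$), bound the residual $\langle\pi_i^{k+1}-\pi_i^k,\bar r_i^{\sigma^{i-1}}-\bar r_i^k\rangle$ via Lemma \ref{bound_TV}, and close the loop with Pinsker plus the exact identity $\KL(\pi_i^{k+1}\|\pi_i^k)+\KL(\pi_i^k\|\pi_i^{k+1})=\eta\langle\pi_i^{k+1}-\pi_i^k,\bar r_i^k\rangle$ coming from the softmax update (note this is an equality, not just an inequality). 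All of those pieces are correct.

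The gap is precisely the step you flag as delicate, and the route you sketch for it cannot produce $\sqrt n$. Bounding $2\,\TV(\sigma^{i-1}_{-i},\pi^k_{-i})\le\sum_{j<i}\|\pi_j^{k+1}-\pi_j^k\|_1$ and then applying Cauchy--Schwarz gives a total error of at most $\tfrac{n}{2}\sum_i\|\pi_i^{k+1}-\pi_i^k\|_1^2\le\tfrac{n\eta}{2}\sum_i\langle\pi_i^{k+1}-\pi_i^k,\bar r_i^k\rangle$, i.e.\ a prefactor $1-\tfrac{n}{2}\eta$, which is strictly weaker than the claimed $1-\sqrt n\,\eta$ for $n\ge 5$ and would force $\eta\lesssim 1/n$ downstream. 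The fix (which is what the paper does) is to \emph{not} expand the cross-term TV into a sum of per-agent $\ell_1$ norms. Instead, apply Young's inequality with an asymmetric weight: $2ab\le\tfrac{2}{\sqrt n}a^2+\tfrac{\sqrt n}{2}b^2$ with $a=\TV(\sigma^{i-1}_{-i},\pi^k_{-i})$ and $b=\|\pi_i^{k+1}-\pi_i^k\|_1$. Pinsker then gives $a^2\le\tfrac12\KL(\pi^k_{-i}\|\sigma^{i-1}_{-i})$, and because KL tensorizes over product measures this equals $\tfrac12\sum_{j<i}\KL(\pi_j^k\|\pi_j^{k+1})$; summing over $i$ yields at most $\tfrac{n}{2}\KL(\pi^k\|\pi^{k+1})$, and the $\tfrac{2}{\sqrt n}$ weight converts that $n$ into $\sqrt n$. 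The $b^2$ term is handled by Pinsker in the other direction, giving $\sqrt n\,\KL(\pi^{k+1}\|\pi^k)$. The symmetrized KL sum then equals $\eta\sum_i\langle\pi_i^{k+1}-\pi_i^k,\bar r_i^k\rangle$ exactly, landing on $1-\sqrt n\,\eta$. In short: the loss in your version comes from squaring a sum of TVs rather than using the additivity of KL over product factors together with a $\sqrt n$-balanced Young's inequality.
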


\begin{proof}[Proof of Lemma \ref{phi_diff}]
We introduce
\begin{align*}
    \tilde{\pi}_{-i}^{k}\left(a_{-i}\right):=\prod_{j<i} \pi_j^{k}\left(a_j\right) \prod_{\ell>i} \pi_{\ell}^{k+1}\left(a_{\ell}\right)
\end{align*}
to denote the mixed strategy where any agent with index $j<i$ follows $\pi_j^k$ and any agent with index $\ell>i$ follows $\pi_{\ell}^{k+1}$ instead. Let $\tilde{r}_i^{k}$ be the associated marginalized reward function, i.e.,
\begin{align*}
\tilde{r}_i^{k}(a_i) = \Eb_{a_{-i} \sim \tilde{\pi}_{-i}^{k}} \left[r_i(\boldsymbol{a})\right] =\sum_{a_{-i} \in \mathcal{A}_{-i}} r_i\left(a_i, a_{-i}\right) \prod_{j<i} \pi_j^{k}\left(a_j\right) \prod_{\ell>i} \pi_{\ell}^{k+1}\left(a_{\ell}\right) .
\end{align*}
It follows that
\begin{align*}
    \phi \left(\pi_i^{k}, \tilde{\pi}_{-i}^{k}\right) = \phi \left(\pi_1^{k}, \cdots, \pi_i^{k}, \pi_{i+1}^{k+1}, \cdots, \pi_n^{k+1}\right) = \phi \left(\pi_{i+1}^{k+1}, \tilde{\pi}_{-(i+1)}^{k}\right).
\end{align*}

We now decompose $\phi^{k+1}-\phi^{k}$ as follows:
\begin{align*}
\phi(\pi^{k+1})-\phi(\pi^k) & =\phi \left(\pi_1^{k+1}, \tilde{\pi}_{-1}^{k}\right)-\phi \left(\pi_n^{k}, \tilde{\pi}_{-n}^{k}\right) \\
&= \sum_{i=1}^n \phi \left(\pi_i^{k+1}, \tilde{\pi}_{-i}^{k}\right)-\phi\left(\pi_i^{k}, \tilde{\pi}_{-i}^{k}\right) \\
&= \sum_{i=1}^n r_i \left(\pi_i^{k+1}, \tilde{\pi}_{-i}^{k}\right) - r_i\left(\pi_i^{k}, \tilde{\pi}_{-i}^{k}\right) \\
&= \sum_{i=1}^n \langle \tilde{r}_i^k - \bar{r}_i^k, \pi_i^{k+1} - \pi_i^k\rangle + \sum_{i=1}^n \langle \bar{r}_i^k, \pi_i^{k+1} - \pi_i^k\rangle.
\end{align*}
The third equality follows from the definition  \eqref{potential_properties}.

Since
\begin{align*}
\sum_{i=1}^n |\langle \tilde{r}_i^k - \bar{r}_i^k, \pi_i^{k+1} - \pi_i^k\rangle| &\leq 2 \sum_{i=1}^n TV(\tilde{\pi}_{-i}^k, \pi_{-i}^k) \|\pi_i^{k+1} - \pi_i^k\|_1 \\
&\leq \frac{2}{\sqrt{n}} \sum_{i=1}^n TV(\tilde{\pi}_{-i}^k, \pi_{-i}^k)^2 + \frac{\sqrt{n}}{2} \sum_{i=1}^n \|\pi_i^{k+1} - \pi_i^k\|_1^2 \\
& \leq \frac{1}{\sqrt{n}} \sum_{i=1}^n KL(\pi_{-i}^k || \tilde{\pi}_{-i}^k) + \sqrt{n} \sum_{i=1}^n KL(\pi_i^{k+1} || \pi_i^k) \\
&\leq \sqrt{n} [KL(\pi^k || \pi^{k+1}) + KL(\pi^{k+1} || \pi^{k})] \\
&= \sqrt{n} \sum_{i=1}^n \langle \pi_i^{k+1} - \pi_i^k, \log \pi_i^{k+1} - \log \pi_i^k \rangle \\
&= \eta \sqrt{n} \sum_{i=1}^n \langle \pi_i^{k+1} - \pi_i^k, \bar{r}_i^k \rangle,
\end{align*}
we have
\begin{align*}
    \phi(\pi^{k+1}) - \phi(\pi^k) \geq (1 - \sqrt{n} \eta) \sum_{i=1}^n \langle \pi_i^{k+1} - \pi_i^k, \bar{r}_i^k\rangle.
\end{align*}
\end{proof}

Additionally, we provide the proof of Lemma \ref{lem:upper}, which builds the relationship between potential function difference and NE-gap.

\begin{lemma}[Restatement of Lemma \ref{lem:upper}]
For function $f^k(\alpha)$ defined in \eqref{f_function} and any $\alpha > 0$, we have the following inequality.
\begin{align*}
    f^k(\alpha) \geq f^k(\infty) \left[1 - \frac{1}{c(\exp(\alpha\delta_K) - 1) + 1}\right].
\end{align*}
\end{lemma}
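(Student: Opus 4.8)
\textbf{Proof proposal for Lemma~\ref{lem:upper}.} The plan is to reduce the multi-agent statement to a single-agent (single-coordinate) computation and then lower-bound $f^k(\alpha)$ coordinatewise. First I would fix an agent $i$ and analyze the scalar quantity $g_i(\alpha) := \langle \pi_{i,\alpha} - \pi_i^k, \bar r_i^k\rangle$, where $\pi_{i,\alpha}(\cdot) \propto \pi_i^k(\cdot)\exp(\alpha \bar r_i^k(\cdot))$. Since $f^k(\alpha) = \sum_i g_i(\alpha)$ and $f^k(\infty) = \sum_i g_i(\infty)$, and each $g_i(\alpha) \ge 0$ (the NPG step only increases the marginalized reward) with $g_i(\infty) \ge g_i(\alpha) \ge 0$, it suffices to prove the per-agent bound $g_i(\alpha) \ge g_i(\infty)\big[1 - \tfrac{1}{c(\exp(\alpha\delta_K)-1)+1}\big]$ and then sum; the same multiplicative factor on the right (which does not depend on $i$) lets the sum go through.

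For the per-agent bound I would split the actions of agent $i$ into the optimal set $\Ac_{i_p}^k$ and its complement. Write $p_\alpha := \sum_{a_j \in \Ac_{i_p}^k}\pi_{i,\alpha}(a_j)$ for the total mass the softmax update puts on optimal actions at stepsize $\alpha$, and let $r^* := \bar r_i^k(a_{i_p}^k)$. Then $g_i(\infty) = r^* - \langle \pi_i^k, \bar r_i^k\rangle$ and, because $\pi_{i,\infty}$ is supported on $\Ac_{i_p}^k$, one can check $g_i(\alpha) = \langle \pi_{i,\alpha}-\pi_i^k,\bar r_i^k\rangle \ge p_\alpha\, g_i(\infty)$ — intuitively, the gain $g_i(\alpha)$ is at least the fraction $p_\alpha$ of the total achievable gain $g_i(\infty)$, using that every non-optimal action contributes a reward $\le r^*$. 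So the task becomes: lower-bound $p_\alpha$ by $1 - \tfrac{1}{c(\exp(\alpha\delta_K)-1)+1}$. By the explicit softmax form,
\begin{align*}
1 - p_\alpha = \frac{\sum_{a_j \notin \Ac_{i_p}^k}\pi_i^k(a_j)e^{\alpha \bar r_i^k(a_j)}}{\sum_{a_j}\pi_i^k(a_j)e^{\alpha \bar r_i^k(a_j)}} \le \frac{(1 - c^k)\,e^{\alpha \bar r_i^k(a_{i_q}^k)}}{c^k\,e^{\alpha r^*} + (1-c^k)\,e^{\alpha \bar r_i^k(a_{i_q}^k)}},
\end{align*}
where I used $\sum_{a_j\in\Ac_{i_p}^k}\pi_i^k(a_j) = c^k \ge c$ in the denominator and $\bar r_i^k(a_j) \le \bar r_i^k(a_{i_q}^k)$ for $a_j \notin \Ac_{i_p}^k$ in the numerator. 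Dividing numerator and denominator by $(1-c^k)e^{\alpha \bar r_i^k(a_{i_q}^k)}$ and using $r^* - \bar r_i^k(a_{i_q}^k) = \delta^k \ge \delta_K$ turns this into $1 - p_\alpha \le \big(1 + \tfrac{c^k}{1-c^k}e^{\alpha\delta^k}\big)^{-1}$, and then the monotonicity of $t \mapsto \tfrac{1}{1 + \tfrac{t}{1-t}e^{\alpha\delta}}$ (decreasing in $t$) together with $t \mapsto e^{\alpha t}$ increasing lets me replace $c^k$ by $c$ and $\delta^k$ by $\delta_K$; a short algebraic rearrangement rewrites $\big(1 + \tfrac{c}{1-c}e^{\alpha\delta_K}\big)^{-1}$ as exactly $\tfrac{1}{c(\exp(\alpha\delta_K)-1)+1}$. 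Combining $g_i(\alpha) \ge p_\alpha g_i(\infty)$ with this bound on $p_\alpha$ and summing over $i$ gives the claim.

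The main obstacle I anticipate is establishing the inequality $g_i(\alpha) \ge p_\alpha\, g_i(\infty)$ cleanly — i.e., correctly accounting for the reward contributed by $\pi_i^k$ itself, since $g_i$ is a \emph{difference} from the baseline $\pi_i^k$ rather than the raw expected reward. One has to verify that $\langle \pi_{i,\alpha},\bar r_i^k\rangle \ge p_\alpha r^* + (1-p_\alpha)\langle \pi_i^k,\bar r_i^k\rangle / (\text{something})$ in a way that, after subtracting $\langle\pi_i^k,\bar r_i^k\rangle$, yields a factor of exactly $p_\alpha$; the slightly delicate point is that the conditional distribution of $\pi_{i,\alpha}$ on the non-optimal actions is not $\pi_i^k$'s conditional, so one needs a monotonicity argument (the softmax tilts mass toward higher-reward actions, so the non-optimal part of $\langle\pi_{i,\alpha},\bar r_i^k\rangle$ is at least the non-optimal part of $\langle\pi_i^k,\bar r_i^k\rangle$ after renormalization). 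A secondary technical care point is the edge case $c = 1$ or $p_\alpha = 1$, where the bound is trivially tight. Everything else is routine softmax algebra and the two monotonicity facts stated above.
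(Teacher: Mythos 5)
There is a genuine gap: the pivotal per-agent inequality $g_i(\alpha) \ge p_\alpha\, g_i(\infty)$ is false. Take $\alpha = 0$: then $\pi_{i,0} = \pi_i^k$, so $g_i(0) = 0$, while $p_0 = \sum_{a_j\in\Ac_{i_p}^k}\pi_i^k(a_j) = c^k$, so the claimed inequality would read $0 \ge c^k g_i(\infty)$, which fails whenever $c^k>0$ and agent $i$ is not already best-responding; by continuity it also fails for all small $\alpha>0$. The reduction you yourself flag as the delicate point is exactly where it breaks: for $p_\alpha<1$ your inequality is equivalent to $\langle \nu_\alpha, \bar r_i^k\rangle \ge \langle \pi_i^k,\bar r_i^k\rangle$, where $\nu_\alpha$ is the conditional of $\pi_{i,\alpha}$ on the non-optimal actions. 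The tilting/monotonicity argument you propose only yields $\langle\nu_\alpha,\bar r_i^k\rangle \ge \langle\nu_0,\bar r_i^k\rangle$, a comparison with the conditional of $\pi_i^k$ on the non-optimal actions, not with the full mean $\langle\pi_i^k,\bar r_i^k\rangle$; the full mean can exceed every non-optimal reward (e.g.\ rewards $(1,0.5,0)$ with $\pi_i^k=(0.9,0.05,0.05)$ gives $\langle\pi_i^k,\bar r_i^k\rangle = 0.925$, while $\langle\nu_\alpha,\bar r_i^k\rangle\le 0.5$ for every $\alpha$), so no such argument can close the gap. More structurally, any valid multiplicative factor must vanish at $\alpha=0$ (since $f^k(0)=0$ while $f^k(\infty)>0$ in general), whereas $p_\alpha\to c^k>0$ there — so $p_\alpha$ is simply not the right discount factor.

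The fix is to keep the baseline term $-\langle\pi_i^k,\bar r_i^k\rangle$ inside the estimate rather than trying to factor it out through $p_\alpha$. The paper lower-bounds the tilted mean $\frac{\sum_a \bar r_i^k(a)\pi_i^k(a)e^{\alpha\bar r_i^k(a)}}{\sum_a \pi_i^k(a)e^{\alpha \bar r_i^k(a)}}$ by replacing every non-optimal exponent $e^{\alpha\bar r_i^k(a_j)}$ with $e^{\alpha\bar r_i^k(a_{i_q}^k)}$ \emph{simultaneously in numerator and denominator} (Lemma~\ref{lem:ratio}, proved by induction; this is not a termwise bound, since enlarging the denominator alone would go the wrong way), after which the algebra yields exactly $g_i(\infty)\bigl[1-\tfrac{1}{c^k(\exp(\alpha\delta^k)-1)+1}\bigr]$ — a factor that is $0$ at $\alpha=0$ and absorbs the baseline correctly. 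Your computation of the optimal-action mass ratio, the monotone replacements $c^k\to c$ and $\delta^k\to\delta_K$ (though the final identification there is an inequality, not the equality you state), and the summation over agents are all fine; only the factorization step needs to be replaced by an argument of this type.
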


\begin{proof}
Recall that we define $a_{i_p}^k \in \argmax_{a_j \in \Ac_i} \bar{r}_i^k(a_{j}) =: \Ac_{i_p}^k$ and $a_{i_q}^k \in \argmax_{a_j \in \Ac_i\backslash \Ac_{i_p}^k} \bar{r}_i^k(a_{j})$, where $\Ac_{i_p}^k$ denotes the set of the best possible actions for agent $i$.
For any $i \in [n]$, we have
\begin{align*}
    &\langle \pi_{i, \alpha} - \pi_i^k, \bar{r}_i^k\rangle \\
    =& \frac{\sum_{a_i} \bar{r}_i^k(a_i) \pi_i^k(a_i) \exp{\bar{r}_i^k(a_i)\alpha}}{\sum_{a_i} \pi_i^k(a_i) \exp{\bar{r}_i^k(a_i) \alpha}} - \sum_{a_i} \pi_i^k(a_i) \bar{r}_i^k(a_i) \\
    \geq &  \frac{\sum_{a_j \in  \Ac^k_{i_p}}\bar{r}_i^k(a_j) \pi_i^k(a_j) \exp{\bar{r}_i^k(a_j)\alpha} + \sum_{a_j \in \Ac_i \backslash \Ac^k_{i_p}} \bar{r}_i^k(a_j) \pi_i^k(a_j) \exp{\bar{r}_i^k(a^k_{i_q})\alpha}}{\sum_{a_j \in  \Ac^k_{i_p}} \pi_i^k(a_j) \exp{\bar{r}_i^k(a_j)\alpha} + \sum_{a_j \in \Ac_i \backslash \Ac^k_{i_p}} \pi_i^k(a_j) \exp{\bar{r}_i^k(a^k_{i_q}) \alpha}} - \sum_{a_i} \pi_i^k(a_i) \bar{r}_i^k(a_i)\\
    =& \frac{\bar{r}_i^k(a^k_{i_p}) \left( \sum_{a_j \in  \Ac^k_{i_p}} \pi_i^k(a_j) \right) \exp{(\bar{r}_i^k(a^k_{i_p}) - \bar{r}_i^k(a^k_{i_q}))\alpha} + \sum_{a_j \in \Ac_i \backslash \Ac^k_{i_p}} \bar{r}_i^k(a_j) \pi_i^k(a_j)}{\left( \sum_{a_j \in  \Ac^k_{i_p}} \pi_i^k(a_j) \right) \exp{(\bar{r}_i^k(a^k_{i_p}) - \bar{r}_i^k(a^k_{i_q}))\alpha} + \sum_{a_j \in \Ac_i \backslash \Ac^k_{i_p}} \pi_i^k(a_j)} - \sum_{a_i} \pi_i^k(a_i) \bar{r}_i^k(a_i)\\    
    =& \frac{\bar{r}_i^k(a^k_{i_p}) \left( \sum_{a_j \in  \Ac^k_{i_p}} \pi_i^k(a_j) \right) \exp{(\bar{r}_i^k(a^k_{i_p}) - \bar{r}_i^k(a^k_{i_q}))\alpha} + \sum_{a_j \in \Ac_i \backslash \Ac^k_{i_p}} \bar{r}_i^k(a^k_{i_p}) \pi_i^k(a_j)}{\left( \sum_{a_j \in  \Ac^k_{i_p}} \pi_i^k(a_j) \right) \exp{(\bar{r}_i^k(a^k_{i_p}) - \bar{r}_i^k(a^k_{i_q}))\alpha} + \sum_{a_j \in \Ac_i \backslash \Ac^k_{i_p}} \pi_i^k(a_j)}\\
    &- \frac{ \sum_{a_j \in \Ac_i \backslash \Ac^k_{i_p}} (\bar{r}_i^k(a^k_{i_p}) - \bar{r}_i^k(a_j)) \pi_i^k(a_j)}{\left( \sum_{a_j \in  \Ac^k_{i_p}} \pi_i^k(a_j) \right) \exp{(\bar{r}_i^k(a^k_{i_p}) - \bar{r}_i^k(a^k_{i_q}))\alpha} + \sum_{a_j \in \Ac_i \backslash \Ac^k_{i_p}} \pi_i^k(a_j)} - \sum_{a_i} \pi_i^k(a_i) \bar{r}_i^k(a_i)\\
    =& \bar{r}_i^k(a^k_{i_p}) - \sum_{a_i} \pi_i^k(a_i) \bar{r}_i^k(a_i) - \frac{ \sum_{a_i} (\bar{r}_i^k(a^k_{i_p}) - \bar{r}_i^k(a_i)) \pi_i^k(a_i) }{\left( \sum_{a_j \in  \Ac^k_{i_p}} \pi_i^k(a_j) \right) \left(\exp{(\bar{r}_i^k(a^k_{i_p}) - \bar{r}_i^k(a^k_{i_q}))\alpha} - 1\right) + 1}\\
    =& \Big(\sum_{a_i} (\bar{r}_i^k(a^k_{i_p}) - \bar{r}_i^k(a_i)) \pi_i^k(a_i) \Big)\bigg[1- \frac{1}{\left( \sum_{a_j \in  \Ac^k_{i_p}} \pi_i^k(a_j) \right) \left(\exp{(\bar{r}_i^k(a^k_{i_p}) - \bar{r}_i^k(a^k_{i_q}))\alpha} - 1\right) + 1}\bigg]\\
    \geq & \langle \pi_i^{*k} - \pi_i^k, \bar{r}_i^k\rangle \left[1- \frac{1}{c (\exp{\delta^k\alpha} - 1) + 1 }\right],
\end{align*}
where the first inequality holds due to Lemma \ref{lem:ratio}. By taking sum over all agents $i\in [n]$, we get
\begin{align*}
    f^k(\alpha) = &\sum_{i\in [n]}\langle \pi_{i, \alpha} - \pi_i^k, \bar{r}_i^k\rangle \\
    \geq & \sum_{i\in [n]}\langle \pi_i^{*k} - \pi_i^k, \bar{r}_i^k\rangle \left[1- \frac{1}{c (\exp{\delta^k\alpha} - 1) + 1 }\right]\\
    \geq & f^k(\infty) \left[1- \frac{1}{c (\exp{\delta_K\alpha} - 1) + 1 }\right].
\end{align*}
\end{proof}


We next provide the following lemma on the structure of potential games. This lemma is not directly used in the analysis of this paper but instead provides a discussion on the function class of potential games, which could be of separate interest.

\begin{lemma}[Structure of potential game reward function class] \label{lem:potential-func}
For any agent $i$, the set of reward functions for potential games, $\Rc_i$, has the following structure within the ambient space of $\Rb^{\prod_{j} |\Ac_j|}$. 
\begin{align}
    \Rc_i = \left\{\phi(a_i, a_{-i}) + g_i(a_{-i}):~ \forall g_i: \prod_{j\not=i}\Ac_j \rightarrow \Rb \right\} \subset \Rb^{\prod_{j} |\Ac_j|} .
\end{align}

\end{lemma}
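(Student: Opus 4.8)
\textbf{Proof proposal for Lemma \ref{lem:potential-func}.}

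The plan is to prove the two inclusions separately, using the potential-game defining identity \eqref{potential_properties} as the only structural input. First I would show $\Rc_i \subseteq \{\phi(a_i,a_{-i}) + g_i(a_{-i})\}$. Fix $r_i \in \Rc_i$ and define $g_i(a_{-i}) := r_i(a_i^0, a_{-i}) - \phi(a_i^0, a_{-i})$ for an arbitrary fixed reference action $a_i^0 \in \Ac_i$; note $g_i$ depends only on $a_{-i}$ since $a_i^0$ is fixed. Then \eqref{potential_properties} applied with the pair $(a_i, a_i^0)$ gives $r_i(a_i,a_{-i}) - r_i(a_i^0,a_{-i}) = \phi(a_i,a_{-i}) - \phi(a_i^0,a_{-i})$, i.e. $r_i(a_i,a_{-i}) = \phi(a_i,a_{-i}) + \big(r_i(a_i^0,a_{-i}) - \phi(a_i^0,a_{-i})\big) = \phi(a_i,a_{-i}) + g_i(a_{-i})$, which is the claimed form.

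For the reverse inclusion, I would take any $g_i: \prod_{j\neq i}\Ac_j \to \Rb$, set $r_i(a_i,a_{-i}) := \phi(a_i,a_{-i}) + g_i(a_{-i})$, and verify that $r_i$ satisfies the potential-game identity \eqref{potential_properties} with the same $\phi$: for any $a_i, a_i', a_{-i}$,
\begin{align*}
    r_i(a_i, a_{-i}) - r_i(a_i', a_{-i}) = \big(\phi(a_i,a_{-i}) + g_i(a_{-i})\big) - \big(\phi(a_i',a_{-i}) + g_i(a_{-i})\big) = \phi(a_i,a_{-i}) - \phi(a_i',a_{-i}),
\end{align*}
since the $g_i(a_{-i})$ terms cancel. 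Hence $r_i \in \Rc_i$, establishing the reverse inclusion and therefore equality. (If the ambient definition of $\Rc_i$ implicitly requires a reward bounded in $[0,1]$ as in the problem setup, I would either note that the statement is about the affine structure modulo such normalization, or restrict $g_i$ to the range making $r_i$ admissible; I expect the lemma is intended at the level of the affine subspace, so I would phrase it that way.)

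This argument is essentially a one-line consequence of \eqref{potential_properties}, so there is no serious obstacle; the only point requiring a little care is the direction of the argument — making clear that the defining identity is an "if and only if" characterization relative to a \emph{given} $\phi$, and that the freedom in $r_i$ is exactly the additive $a_{-i}$-dependent term (the "constant of integration" in the discrete-gradient analogy). I would also remark, as the lemma's stated motivation, that this exhibits $\Rc_i$ as an affine subspace of dimension $\prod_{j\neq i}|\Ac_j|$ inside $\Rb^{\prod_j |\Ac_j|}$ — a translate of the subspace of functions depending only on $a_{-i}$ — which is why potential games form a measure-zero but still rich family.
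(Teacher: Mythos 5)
Your proof is correct, but it takes a different route from the paper's. The paper argues by dimension counting: it views $r_i$ as a vector in $\Rb^{\prod_j |\Ac_j|}$, counts $(|\Ac_i|-1)\prod_{j\neq i}|\Ac_j|$ independent linear constraints imposed by \eqref{potential_properties}, observes that the solution set therefore has dimension $\prod_{j\neq i}|\Ac_j|$, and identifies it with the claimed set since that set is a subspace (translate) of the same dimension contained in the solution set. Your argument instead establishes the two inclusions directly: the "constant of integration" construction $g_i(a_{-i}) := r_i(a_i^0,a_{-i}) - \phi(a_i^0,a_{-i})$ with a fixed reference action $a_i^0$ gives the forward inclusion, and the cancellation of $g_i(a_{-i})$ in the difference gives the reverse. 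Your version is more self-contained and rigorous at the level of detail given: the paper's claim that the constraints are "independent" is asserted rather than verified, and the final identification of the two sets is left implicit, whereas your reference-action construction makes both steps explicit and sidesteps any linear-algebra bookkeeping. What the paper's approach buys is the explicit dimension count $\prod_{j\neq i}|\Ac_j|$, which you also recover in your closing remark about $\Rc_i$ being an affine translate of the space of functions of $a_{-i}$ alone. Your caveat about the $[0,1]$ normalization of rewards is a fair observation that the paper does not address; the lemma is indeed intended at the level of the affine subspace, as the paper's own phrasing ("within the ambient space of $\Rb^{\prod_j |\Ac_j|}$") suggests.
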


\begin{proof} The reward vector $r_i \in \Rb^{\prod_{i=1}^n|\Ac_i|}$ can be viewed as an $\prod_{i=1}^n|\Ac_i|$-dimensional linear vector. There are in total $(|\Ac_i|-1) \prod_{j \neq i}|\Ac_j| = \prod_{i=1}^n|\Ac_i| - \prod_{j \neq i}|\Ac_j|$ independent constraints w.r.t $r_i$ and potential function $\phi$. Since $\Rc_i$ is a linear space with dimension $\prod_{j \neq i}|\Ac_j|$,
$\Rc_i$ is the space of valid reward functions for potential games. 
\end{proof}

\section{Proof for Markov Potential Games in Section \ref{MPG_Analysis}}
\label{sec:app-mpg}

We first introduce the following lemma to fill the gap between general MPGs and fully-cooperative MPGs. Define $h_i(s, \av) := r_i(s, \av) - \phi(s, \av)$, which implies $V^{\pi}_i(s) = \Phi^{\pi}(s) + V^{\pi}_{h_i}(s)$. 

\begin{lemma}[Restatement of Lemma \ref{lem:mpg-prop}]

Define $\bar{A}^{\pi}_{h_i}(s, a_i)$ with respect to $h_i$ similar to \eqref{marginalized_Q_A}. We then have
\begin{align*}
    \sum_{a_i} (\pi'_i(a_i|s) - \pi_i(a_i|s)) \bar{A}^{\pi}_{h_i}(s,a_i) = 0, \quad \forall s \in \Sc, i \in [n], \pi'_i, \pi_i \in \Delta(\Ac_i).
\end{align*}
\end{lemma}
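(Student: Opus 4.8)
The plan is to show that the marginalized advantage function $\bar A^{\pi}_{h_i}(s,a_i)$ does not actually depend on $a_i$, so that any inner product against a difference of probability distributions $\pi'_i - \pi_i$ (which sums to zero) vanishes. The key structural fact is the defining property of a Markov potential game (Definition \ref{MPG_formulation}): for any agent $i$, any $\pi'_i, \pi_i, \pi_{-i}$, and any state $s$,
\begin{align*}
V_i^{\pi'_i,\pi_{-i}}(s) - V_i^{\pi_i,\pi_{-i}}(s) = \Phi^{\pi'_i,\pi_{-i}}(s) - \Phi^{\pi_i,\pi_{-i}}(s).
\end{align*}
Recalling $h_i = r_i - \phi$ and $V^{\pi}_{h_i}(s) = V^{\pi}_i(s) - \Phi^{\pi}(s)$, this identity says exactly that $V^{\pi'_i,\pi_{-i}}_{h_i}(s) = V^{\pi_i,\pi_{-i}}_{h_i}(s)$, i.e.\ $V^{\pi}_{h_i}(s)$ is invariant to agent $i$'s own policy; it is a function of $s$ and $\pi_{-i}$ only.

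First I would translate this value-function invariance into a statement about $\bar A_{h_i}$. Writing the $h_i$-advantage via the performance-difference lemma: for a fixed $\pi$ and any alternative local policy $\pi'_i$,
\begin{align*}
V^{\pi'_i,\pi_{-i}}_{h_i}(\rho) - V^{\pi}_{h_i}(\rho) = \frac{1}{1-\gamma}\sum_{s} d^{\pi'_i,\pi_{-i}}_\rho(s) \sum_{a_i} \pi'_i(a_i|s)\, \bar A^{\pi}_{h_i}(s,a_i).
\end{align*}
The left side is $0$ by the MPG property applied to $h_i$. Since this holds for \emph{every} $\pi'_i$ (and one can localize to a single state by choosing $\pi'_i$ to differ from $\pi_i$ only at that state, using that $d^{\pi'_i,\pi_{-i}}_\rho(s) > 0$ under the exploration assumption, or more cleanly by a one-state perturbation argument), it forces $\sum_{a_i}\pi'_i(a_i|s)\bar A^{\pi}_{h_i}(s,a_i)$ to be constant in $\pi'_i$ for each $s$. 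Taking $\pi'_i = \pi_i$ shows this constant equals $\sum_{a_i}\pi_i(a_i|s)\bar A^{\pi}_{h_i}(s,a_i)$, and more strongly, varying $\pi'_i$ over all point masses at each $a_i$ shows $\bar A^{\pi}_{h_i}(s,a_i)$ is independent of $a_i$. Hence for any two $\pi'_i,\pi_i \in \Delta(\Ac_i)$,
\begin{align*}
\sum_{a_i}(\pi'_i(a_i|s) - \pi_i(a_i|s))\bar A^{\pi}_{h_i}(s,a_i) = \bar A^{\pi}_{h_i}(s,\cdot)\cdot\sum_{a_i}(\pi'_i(a_i|s) - \pi_i(a_i|s)) = 0.
\end{align*}

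An alternative, arguably cleaner route that avoids invoking the performance-difference lemma: directly compute $\bar A^{\pi}_{h_i}(s,a_i) = \bar Q^{\pi}_{h_i}(s,a_i) - V^{\pi}_{h_i}(s)$ and observe that $\bar Q^{\pi}_{h_i}(s,a_i)$, being the expected discounted return of $h_i$ when agent $i$ plays $a_i$ at $s$ then reverts to $\pi_i$, equals $V^{\pi}_{h_i}$ evaluated at the post-$a_i$ configuration; and the MPG property (which can be shown to localize to a per-step/per-state statement, cf.\ the discussion around \cite{leonardos2022global}) ensures the one-step deviation in $a_i$ contributes nothing to the $h_i$-return. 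The main obstacle is the localization step — going from the stated global/aggregate MPG identity (over $\rho$ or over $s$) to the conclusion that $\bar A^{\pi}_{h_i}(s,a_i)$ is pointwise independent of $a_i$ at each individual $s$. I expect to handle this by the perturbation trick: for fixed $s_0$ and two actions $a_i, a_i'$, take $\pi'_i$ agreeing with $\pi_i$ everywhere except at $s_0$, so the state-visitation weight $d^{\pi'_i,\pi_{-i}}_\rho(s_0)$ is strictly positive (Assumption \ref{ass:explor}) and the vanishing of the aggregate forces the per-state term to vanish, yielding $\bar A^{\pi}_{h_i}(s_0,a_i) = \bar A^{\pi}_{h_i}(s_0,a_i')$. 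Everything else is bookkeeping.
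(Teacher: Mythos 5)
Your proposal is correct and follows essentially the same route as the paper: use the MPG identity to conclude $V^{\pi'_i,\pi_{-i}}_{h_i}(\mu)=V^{\pi_i,\pi_{-i}}_{h_i}(\mu)$, expand the (zero) difference via the performance difference lemma, and then localize the resulting visitation-weighted sum to each individual state. The only difference is the localization device: the paper keeps $\pi'_i$ arbitrary, lets the initial distribution $\mu$ vary, and uses invertibility of $(I-\gamma\tilde P)^{-1}$ to strip off the visitation weights, whereas you fix $\rho$, perturb $\pi'_i$ at a single state $s_0$, and invoke strict positivity of $d^{\pi'_i,\pi_{-i}}_\rho(s_0)$ --- both work, though your version leans on Assumption \ref{ass:explor} (or on choosing $\mu=\delta_{s_0}$), which the paper's argument does not need.
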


\begin{proof}[Proof of Lemma \ref{lem:mpg-prop}]
Due to the definition of Markov potential game, we have
\begin{align*}
    V^{\pi'_i, \pi_{-i}}_{i}(\mu) - V^{\pi_i, \pi_{-i}}_{i}(\mu) = \Phi^{\pi'_i, \pi_{-i}}(\mu) - \Phi^{\pi_i, \pi_{-i}}(\mu),
\end{align*}
for any $\mu$, $\pi'_i$ and $\pi_i$. It then follows that
\begin{align*}
    0 = V^{\pi'_i, \pi_{-i}}_{h_i}(\mu) - V^{\pi_i, \pi_{-i}}_{h_i}(\mu) = \frac{1}{1-\gamma} \sum_{s} d_\mu^{\pi'_i, \pi_{-i}}(s) \sum_{a_i} (\pi'_i(a_i|s) - \pi_i(a_i|s)) \bar{A}^{\pi}_{h_i}(s,a_i).
\end{align*}
Denote $[\tilde{P}]_{s,s'} = \Eb_{a \sim \pi'_i \times \pi_{-i}} [P(s' | s, a)]$ as the transition matrix generated by policy $\pi'_i \times \pi_{-i}$. Then
\begin{align*}
    \frac{1}{1 - \gamma} d^{\pi_i', \pi_{-i}}_\mu = \mu (I + \gamma \tilde{P} + \gamma^2 \tilde{P}^2 + \cdots) = \mu (I - \gamma \tilde{P})^{-1},
\end{align*}
where $d^{\pi_i', \pi_{-i}}_\mu$ and $\mu$ are row vectors and $(I - \gamma \tilde{P})^{-1}$ is clearly full rank. 
Since the above relationship holds for any $\mu \in \Delta_{\Sc}$, we get $0 = \sum_{a_i} (\pi'_i(a_i|s) - \pi_i(a_i|s)) \bar{A}^{\pi}_{h_i}(s,a_i), \quad \forall s \in \Sc$.
\end{proof}

Then, we analyze the lower bound of the potential value function difference.
\begin{lemma}[Restatement of Lemma \ref{lem:phi_diff_mpg}]
Given policy $\pi^k$ and marginalized advantage function $\bar{A}_i^{\pi^k}(s, a_i)$, for $\pi^{k+1}$ generated using NPG update in \eqref{MPG-NPG}, we have the following inequality,
\begin{align*}
    \Phi^{\pi^{k+1}}(\rho) - \Phi^{\pi^k}(\rho) \geq (\frac{1}{1-\gamma} - \frac{\sqrt{n} \phi_{max} \eta}{(1-\gamma)^3}) \sum_s d_{\rho}^{\pi^{k+1}}(s) \sum_{i=1}^n \langle \pi_i^{k+1}(\cdot|s), \bar{A}_i^{\pi^k}(s, \cdot)\rangle.
\end{align*}
\end{lemma}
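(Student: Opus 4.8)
\textbf{Proof proposal for Lemma \ref{lem:phi_diff_mpg}.}

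The plan is to mirror the telescoping-decomposition argument used in the proof of Lemma \ref{phi_diff} for static potential games, but carried out at the level of the discounted potential value function $\Phi^\pi(\rho)$ rather than the one-shot potential $\phi(\pi)$. First I would introduce, exactly as before, the hybrid policies $\tilde\pi_{-i}^k(a_{-i}\,|\,s) := \prod_{j<i}\pi_j^k(a_j|s)\prod_{\ell>i}\pi_\ell^{k+1}(a_\ell|s)$, and write $\Phi^{\pi^{k+1}}(\rho) - \Phi^{\pi^k}(\rho) = \sum_{i=1}^n \big[\Phi^{(\pi_i^{k+1},\tilde\pi_{-i}^k)}(\rho) - \Phi^{(\pi_i^k,\tilde\pi_{-i}^k)}(\rho)\big]$, using the consistency identity $\Phi^{(\pi_i^k,\tilde\pi_{-i}^k)} = \Phi^{(\pi_{i+1}^{k+1},\tilde\pi_{-(i+1)}^k)}$. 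By the MPG definition, each summand equals $V_i^{(\pi_i^{k+1},\tilde\pi_{-i}^k)}(\rho) - V_i^{(\pi_i^k,\tilde\pi_{-i}^k)}(\rho)$, to which I would apply the performance difference lemma: this summand equals $\frac{1}{1-\gamma}\sum_s d_\rho^{(\pi_i^{k+1},\tilde\pi_{-i}^k)}(s)\sum_{a_i}\big(\pi_i^{k+1}(a_i|s) - \pi_i^k(a_i|s)\big)\bar A_i^{(\pi_i^k,\tilde\pi_{-i}^k)}(s,a_i)$, where the marginalized advantage is taken under $\tilde\pi_{-i}^k$ for the other agents.

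The next step is to replace the ``wrong'' objects — the visitation measure $d_\rho^{(\pi_i^{k+1},\tilde\pi_{-i}^k)}$ and the advantage $\bar A_i^{(\pi_i^k,\tilde\pi_{-i}^k)}$ evaluated at the hybrid policy — by the ``clean'' objects $d_\rho^{\pi^{k+1}}$ and $\bar A_i^{\pi^k}$ that appear in the statement, absorbing the discrepancies into the error term that produces the $\frac{\sqrt n\,\phi_{max}\eta}{(1-\gamma)^3}$ correction. For the advantage, I would use the decomposition $\bar A_i = \bar A_{h_i} + \bar A_\Phi$ together with Lemma \ref{lem:mpg-prop}, which says $\sum_{a_i}(\pi_i'(a_i|s)-\pi_i(a_i|s))\bar A_{h_i}^\pi(s,a_i)=0$; this lets me swap the advantage's ``other-agent'' dependence freely, so the difference between $\bar A_i^{(\pi_i^k,\tilde\pi_{-i}^k)}$ and $\bar A_i^{\pi^k}$ reduces to a difference in the $\Phi$-advantage (a shared quantity across agents), which in turn can be controlled by the total-variation distance between $\tilde\pi_{-i}^k$ and $\pi_{-i}^k$, i.e.\ by $\sum_{j>i}\mathrm{TV}(\pi_j^{k+1},\pi_j^k)$, analogously to Lemma \ref{bound_TV} but with the extra $\frac{1}{1-\gamma}$ factors from discounting. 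I would then bound each $\mathrm{TV}(\pi_j^{k+1}(\cdot|s),\pi_j^k(\cdot|s))$ via Pinsker and the KL expansion of the NPG update, which introduces the factor $\eta$ and ultimately lets the cross-terms telescope into $\eta\sqrt n\sum_i \langle\pi_i^{k+1}(\cdot|s)-\pi_i^k(\cdot|s),\bar A_i^{\pi^k}(s,\cdot)\rangle$ as in the potential-games proof, with the $\phi_{max}$ coming from the uniform bound on $\phi$ (hence on $\bar A_\Phi$) and the extra $(1-\gamma)^{-2}$ coming from the two discounted sums. For the visitation measure, I would switch from $d_\rho^{(\pi_i^{k+1},\tilde\pi_{-i}^k)}$ to $d_\rho^{\pi^{k+1}}$; since $\tilde\pi_{-i}^k$ and $\pi_{-i}^{k+1}$ differ only in the agents $j<i$ (which use $\pi_j^k$ vs.\ $\pi_j^{k+1}$), this swap costs another $\mathrm{TV}$-type term that is absorbed into the same correction, and I would finally drop the nonnegative ``clean-clean'' residual or fold signs so that $\sum_{a_i}\pi_i^{k+1}(a_i|s)\bar A_i^{\pi^k}(s,a_i)$ appears (note that $\sum_{a_i}\pi_i^k(a_i|s)\bar A_i^{\pi^k}(s,a_i)=0$ by definition of the advantage, which is why the statement has $\langle\pi_i^{k+1}(\cdot|s),\bar A_i^{\pi^k}(s,\cdot)\rangle$ rather than a difference).

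The main obstacle I anticipate is the bookkeeping around the visitation distributions: the performance difference lemma naturally produces $d_\rho^{(\pi_i^{k+1},\tilde\pi_{-i}^k)}$, and one must carefully argue that replacing it with the single measure $d_\rho^{\pi^{k+1}}$ (common to all $i$) is legitimate up to the claimed error, since visitation measures depend on the \emph{entire} joint policy and do not factorize. This is exactly the subtlety flagged in the text (``the introduction of state transitions and the discount factor $\gamma$ add an additional layer of complexity \ldots making it far from trivial''), and handling it is where the extra powers of $(1-\gamma)^{-1}$ relative to Lemma \ref{phi_diff} are generated; the cleanest route is to first prove the identity with the hybrid visitation measures, then bound $\|d_\rho^{(\pi_i^{k+1},\tilde\pi_{-i}^k)} - d_\rho^{\pi^{k+1}}\|_1$ by $O(\tfrac{1}{1-\gamma})\sum_{j<i}\mathrm{TV}(\pi_j^{k+1},\pi_j^k)$ using the standard Markov-chain perturbation bound, and feed that into the same Pinsker-plus-KL telescoping that handled the advantage term.
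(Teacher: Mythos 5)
Your identity steps are all valid, but the decomposition you chose generates error terms strictly larger than the claimed bound permits, and this is where you diverge from the paper in a way that matters. The paper does \emph{not} telescope the value function $\Phi$ across hybrid policies and then apply the performance difference lemma $n$ times. It applies the performance difference lemma exactly once, to the potential-reward MDP with the full joint policies: $\Phi^{\pi^{k+1}}(\rho)-\Phi^{\pi^k}(\rho)=\frac{1}{1-\gamma}\sum_s d_\rho^{\pi^{k+1}}(s)\sum_{\av}(\pi^{k+1}(\av|s)-\pi^k(\av|s))A_\phi^{\pi^k}(s,\av)$, and then telescopes the \emph{product policy} $\pi^{k+1}(\av|s)-\pi^k(\av|s)=\sum_{i}\big(\prod_{j\le i}\pi_j^{k+1}\prod_{\ell>i}\pi_\ell^k-\prod_{j<i}\pi_j^{k+1}\prod_{\ell\ge i}\pi_\ell^k\big)$ purely algebraically inside the action sum. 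This produces the single visitation measure $d_\rho^{\pi^{k+1}}$ from the outset and leaves only one object to perturb: the \emph{marginalization} policy of the advantage, i.e.\ $\bar A_{i,\phi}^{\tilde\pi^k}$ versus $\bar A_{i,\phi}^{\pi^k}$, while the underlying advantage $A_\phi^{\pi^k}$ never changes. That swap costs only $\frac{\phi_{max}}{1-\gamma}\|\tilde\pi_{-i}^{k}(\cdot|s)-\pi_{-i}^k(\cdot|s)\|_1$ state by state, which after AM--GM, Pinsker, and the KL identity for the NPG update is exactly proportional, at every state, to the main term $\sum_i\langle\pi_i^{k+1}(\cdot|s),\bar A_i^{\pi^k}(s,\cdot)\rangle$, yielding the clean multiplicative prefactor $\frac{1}{1-\gamma}-\frac{\sqrt n\,\phi_{max}\eta}{(1-\gamma)^3}$.

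Your route instead produces, for each $i$, the hybrid visitation measure $d_\rho^{(\pi_i^{k+1},\tilde\pi_{-i}^k)}$ and the fully hybrid advantage $\bar A_i^{(\pi_i^k,\tilde\pi_{-i}^k)}$, in which both the marginalization and the underlying $Q$- and $V$-functions sit at the hybrid policy. Repairing these requires two perturbation bounds the paper never needs: a visitation-measure bound of order $\frac{\gamma}{1-\gamma}\sum_j\TV(\pi_j^{k+1},\pi_j^k)$ and a $Q$-function perturbation of order $\frac{\gamma\,\phi_{max}}{(1-\gamma)^2}\TV(\cdot,\cdot)$. Each injects at least one extra factor of $(1-\gamma)^{-1}$, so the correction term you would obtain is $\order{\sqrt n\,\phi_{max}\eta/(1-\gamma)^4}$ at best, not $(1-\gamma)^{-3}$. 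More fundamentally, the visitation-swap error is controlled through a supremum over states (or an expectation under a different measure) and is therefore not state-by-state proportional to $\sum_s d_\rho^{\pi^{k+1}}(s)\sum_i\langle\pi_i^{k+1}(\cdot|s),\bar A_i^{\pi^k}(s,\cdot)\rangle$, so it cannot be absorbed into the multiplicative prefactor of the stated inequality by the folding you describe. Note also that Lemma \ref{lem:mpg-prop} does not let you ``swap the other-agent dependence freely'': it only annihilates the $h_i$-part of the advantage under the difference $\pi_i^{k+1}-\pi_i^k$; the $\phi$-part still carries the hybrid policy inside its underlying $Q_\phi$, which is precisely the perturbation the paper's decomposition is designed to avoid. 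The fix is to move the performance difference lemma to the outermost level (joint policies, potential reward) \emph{before} telescoping, so that only the marginalization policy ever needs to be exchanged.
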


\begin{proof}[Proof of Lemma \ref{lem:phi_diff_mpg}]
We introduce
\begin{align*}
    \tilde{\pi}_{-i}^{k}\left(a_{-i}|s\right):=\prod_{j<i} \pi_j^{k+1}\left(a_j|s\right) \prod_{\ell>i} \pi_{\ell}^{k}\left(a_{\ell}|s\right)
\end{align*}
to denote the mixed strategy where each agent with index $j<i$ follows $\pi_j^{k+1}$ and each agent with index $\ell>i$ follows $\pi_{\ell}^{k}$ instead. Let $\bar{A}_{i, \phi}^{\tilde{\pi}^k}$ be the associated marginalized advantage value based on potential function, i.e., 

\begin{align*}
\bar{A}_{i, \phi}^{\tilde{\pi}^k}(s, a_i) = {\sum_{a_{-i} \in \mathcal{A}_{-i}} }A_{\phi}^{\pi^k}(s, a_i, a_{-i}) \prod_{j<i} \pi_j^{k+1}\left(a_j\change{|s}\right) \prod_{\ell>i} \pi_{\ell}^{k}\left(a_{\ell}\change{|s}\right) .
\end{align*}

We now decompose $\Phi^{\pi^{k+1}} - \Phi^{\pi^k}$ as follows:
\begin{align*}
&\Phi^{\pi^{k+1}}(\rho) - \Phi^{\pi^k}(\rho) \\
=& \frac{1}{1-\gamma} \sum_s d_{\rho}^{\pi^{k+1}}(s) \sum_{a} (\pi^{k+1}(a|s) - \pi^k(a|s)) A_{\phi}^{\pi^k}(s, a) \\
=& \frac{1}{1-\gamma} \sum_s d_{\rho}^{\pi^{k+1}}(s) \sum_{a} \sum_{i=1}^n \left(\prod_{j=1}^{i} \pi_j^{k+1}\left(a_j|s\right) \prod_{\ell=i+1}^n \pi_{\ell}^{k}\left(a_{\ell}|s\right) - \prod_{j=1}^{i-1} \pi_j^{k+1}\left(a_j|s\right) \prod_{\ell=i}^n \pi_{\ell}^{k}\left(a_{\ell}|s\right)\right) A_{\phi}^{\pi^k}(s, a) \\
=& \frac{1}{1-\gamma} \sum_s d_{\rho}^{\pi^{k+1}}(s) \sum_{i=1}^n \sum_{a_i} (\pi_i^{k+1}(a_i|s) - \pi_i^k(a_i|s)) \bar{A}_{i, \phi}^{\tilde{\pi}^k}(s, a_i) \\
=& \frac{1}{1-\gamma} \sum_s d_{\rho}^{\pi^{k+1}}(s) \sum_{i=1}^n \sum_{a_i} (\pi_i^{k+1}(a_i|s) - \pi_i^k(a_i|s)) \bar{A}_{i}^{\pi^k}(s, a_i) \\
&- \frac{1}{1-\gamma} \sum_s d_{\rho}^{\pi^{k+1}}(s) \sum_{i=1}^n \sum_{a_i} (\pi_i^{k+1}(a_i|s) - \pi_i^k(a_i|s)) \bar{A}_{\change{h}_i}^{\pi^k}(s, a_i) \\
&+ \frac{1}{1-\gamma} \sum_s d_{\rho}^{\pi^{k+1}}(s) \sum_{i=1}^n \sum_{a_i} (\pi_i^{k+1}(a_i|s) - \pi_i^k(a_i|s)) (\bar{A}_{i, \phi}^{\tilde{\pi}^k}(s, a_i) - \bar{A}_{i, \phi}^{\pi^k}(s, a_i)).
\end{align*}

Since
\begin{align*}
&|\bar{A}_{i, \phi}^{\tilde{\pi}^k}(s, a_i) - \bar{A}_{i, \phi}^{\pi^k}(s, a_i)|\\
=& \bigg|\sum_{a_{-i}} \bigg(\prod_{j=1}^{i-1} \pi_j^{k+1}\left(a_j|s\right) - \prod_{j=1}^{i-1} \pi_j^{k}\left(a_j|s\right)\bigg) \prod_{\ell=i+1}^{n} \pi_{\ell}^{k}\left(a_{\ell}|s\right) A_{\phi}^{\pi^k}(s, \av)\bigg| \\
\leq& \frac{\phi_{max}}{1-\gamma} \|\tilde{\pi}_{-i}^{k}(\cdot|s) - \pi_{-i}^k(\cdot|s)\|_1,
\end{align*}
using Lemma \ref{lem:mpg-prop}, we have 
\begin{align*}
\Phi^{\pi^{k+1}}(\rho) - \Phi^{\pi^k}(\rho) \geq& \frac{1}{1-\gamma} \sum_s d_{\rho}^{\pi^{k+1}}(s) \sum_{i=1}^n \sum_{a_i} \pi_i^{k+1}(a_i|s) \bar{A}_{i}^{\pi^k}(s, a_i) \\
&- \frac{\phi_{max}}{(1-\gamma)^2} \sum_s d_{\rho}^{\pi^{k+1}}(s) \sum_{i=1}^n \|\pi_i^{k+1}(\cdot|s) - \pi_i^k(\cdot|s)\|_1 \|\tilde{\pi}_{-i}^{k}(\cdot|s) - \pi_{-i}^k(\cdot|s)\|_1.
\end{align*}

Since
\begin{align*}
&\sum_{i=1}^n \|\pi_i^{k+1}(\cdot|s) - \pi_i^k(\cdot|s)\|_1 \|\tilde{\pi}_{-i}^{k}(\cdot|s) - \pi_{-i}^k(\cdot|s)\|_1 \\
\leq& \frac{\sqrt{n}}{2} \sum_{i=1}^n \|\pi_i^{k+1}\change{(\cdot|s)} - \pi_i^k\change{(\cdot|s)}\|_1^2 + \frac{1}{2 \sqrt{n}} \sum_{i=1}^n \|\tilde{\pi}_{-i}^{k}(\cdot|s) - \pi_{-i}^k(\cdot|s)\|_1^2\\
\leq& \sqrt{n} \sum_{i=1}^n \bigg(KL(\pi_i^{k+1}\change{(\cdot|s)} || \pi_i^k\change{(\cdot|s)}) + KL(\pi_i^{k} \change{(\cdot|s)}|| \pi_i^{k+1}\change{(\cdot|s)})\bigg)\\
=& \sqrt{n} \sum_{i=1}^n \langle \pi_i^{k+1}(\cdot|s) - \pi_i^{k}(\cdot|s), \log \pi_i^{k+1}(\cdot|s) - \log \pi_i^{k}(\cdot|s)\rangle\\
=& \frac{\sqrt{n} \eta}{1-\gamma} \sum_{i=1}^n \langle \pi_i^{k+1}(\cdot|s) - \pi_i^{k}(\cdot|s), \bar{A}_i^{\pi^k}(s, \cdot) \rangle = \frac{\sqrt{n} \eta}{1-\gamma} \sum_{i=1}^n \langle \pi_i^{k+1}(\cdot|s), \bar{A}_i^{\pi^k}(s, \cdot)\rangle,
\end{align*}
we have
\begin{align*}
\Phi^{\pi^{k+1}}(\rho) - \Phi^{\pi^k}(\rho) \geq \Big(\frac{1}{1-\gamma} - \frac{\sqrt{n} \phi_{max} \eta}{(1-\gamma)^3}\Big) \sum_s d_{\rho}^{\pi^{k+1}}(s) \sum_{i=1}^n \langle \pi_i^{k+1}(\cdot|s), \bar{A}_i^{\pi^k}(s, \cdot)\rangle.
\end{align*}

\end{proof}

Next, we introduce the following lemma, which is the MPG version of Lemma \ref{lem:upper}.

\begin{lemma}
\label{lem:appendix_f}
At iteration $k$, define
\begin{align*}
    f^k(\alpha) =& \sum_{i\in [n]}\sum_{a_i} \pi_{i, \alpha}(a_i|s) \bar{A}_i^{\pi^k}(s, a_i) \text{, where  }
    \pi_{i, \alpha}(\cdot|s) \propto \pi_{i}^k(\cdot|s) \exp{\frac{\alpha \bar{A}_{i}^{\pi^k}(s, \cdot)}{1-\gamma}}.
\end{align*}
Then, for any $\alpha > 0$, 
\begin{align*}
    f^k(\alpha) \geq f^k(\infty) \left[1 - \frac{1}{c(\exp(\frac{\delta^k \alpha}{1-\gamma}) - 1) + 1}\right],
\end{align*}
with $c$ and $\delta^k$ defined in Theorem \ref{NPG_thm}.
\end{lemma}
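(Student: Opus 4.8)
The plan is to mirror the argument of Lemma~\ref{lem:upper} verbatim, with the marginalized reward $\bar r_i^k(a_i)$ replaced by the rescaled marginalized advantage $\bar A_i^{\pi^k}(s,\cdot)/(1-\gamma)$ and with the fixed state $s$ carried along throughout. The key observation is that the function $f^k(\alpha)$ defined here is, for each fixed $s$, exactly a sum over agents of the quantity analyzed in the proof of Lemma~\ref{lem:upper}: the update $\pi_{i,\alpha}(\cdot|s)\propto \pi_i^k(\cdot|s)\exp(\alpha\bar A_i^{\pi^k}(s,\cdot)/(1-\gamma))$ is a softmax tilt of $\pi_i^k(\cdot|s)$ by the vector $\bar A_i^{\pi^k}(s,\cdot)/(1-\gamma)$, so the per-agent computation is identical with $\alpha$ replaced by $\alpha/(1-\gamma)$. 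I note, however, that $f^k(\alpha)=\sum_i\sum_{a_i}\pi_{i,\alpha}(a_i|s)\bar A_i^{\pi^k}(s,a_i)$ as written does \emph{not} subtract off $\sum_{a_i}\pi_i^k(a_i|s)\bar A_i^{\pi^k}(s,a_i)$; but that baseline term equals $\sum_{a_i}\pi_i^k(a_i|s)A_i^{\pi^k}(s,a_i)=0$ by the definition of the advantage function, so in fact $f^k(\alpha)=\sum_i\langle \pi_{i,\alpha}(\cdot|s)-\pi_i^k(\cdot|s),\bar A_i^{\pi^k}(s,\cdot)\rangle$, matching the structure of \eqref{f_function}.

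First I would fix $s$ and, for each agent $i$, define $a_{i_p}^k\in\argmax_{a_j\in\Ac_i}\bar A_i^{\pi^k}(s,a_j)=:\Ac_{i_p}^k$ and $a_{i_q}^k\in\argmax_{a_j\in\Ac_i\setminus\Ac_{i_p}^k}\bar A_i^{\pi^k}(s,a_j)$, exactly as in Theorem~\ref{NPG_thm}. Then I would lower-bound $\langle \pi_{i,\alpha}(\cdot|s)-\pi_i^k(\cdot|s),\bar A_i^{\pi^k}(s,\cdot)\rangle$ by replacing $\exp(\alpha\bar A_i^{\pi^k}(s,a_j)/(1-\gamma))$ with $\exp(\alpha\bar A_i^{\pi^k}(s,a_{i_q}^k)/(1-\gamma))$ for every suboptimal action $a_j\notin\Ac_{i_p}^k$; by Lemma~\ref{lem:ratio} (the monotonicity-of-ratio lemma already invoked in the proof of Lemma~\ref{lem:upper}) this only decreases the expression. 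The resulting ratio simplifies, after factoring out $\exp(\alpha\bar A_i^{\pi^k}(s,a_{i_q}^k)/(1-\gamma))$ from numerator and denominator, to
\begin{align*}
\Big(\sum_{a_i}(\bar A_i^{\pi^k}(s,a_{i_p}^k)-\bar A_i^{\pi^k}(s,a_i))\pi_i^k(a_i|s)\Big)\bigg[1-\frac{1}{\big(\sum_{a_j\in\Ac_{i_p}^k}\pi_i^k(a_j|s)\big)\big(\exp(\tfrac{\alpha}{1-\gamma}(\bar A_i^{\pi^k}(s,a_{i_p}^k)-\bar A_i^{\pi^k}(s,a_{i_q}^k)))-1\big)+1}\bigg],
\end{align*}
which is $\geq \langle \pi_i^{*k}(\cdot|s)-\pi_i^k(\cdot|s),\bar A_i^{\pi^k}(s,\cdot)\rangle\big[1-\tfrac{1}{c(\exp(\delta^k\alpha/(1-\gamma))-1)+1}\big]$ once I use $\sum_{a_j\in\Ac_{i_p}^k}\pi_i^k(a_j|s)\geq c$, $\bar A_i^{\pi^k}(s,a_{i_p}^k)-\bar A_i^{\pi^k}(s,a_{i_q}^k)\geq\delta^k$, and monotonicity of $x\mapsto 1-\tfrac{1}{c(e^x-1)+1}$ in $x\geq 0$; here $\pi_i^{*k}(\cdot|s)$ places all mass on $\Ac_{i_p}^k$ so that $\langle\pi_i^{*k}(\cdot|s)-\pi_i^k(\cdot|s),\bar A_i^{\pi^k}(s,\cdot)\rangle=\sum_{a_i}(\bar A_i^{\pi^k}(s,a_{i_p}^k)-\bar A_i^{\pi^k}(s,a_i))\pi_i^k(a_i|s)$. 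Summing over $i\in[n]$ and using $\sum_i\langle\pi_i^{*k}(\cdot|s)-\pi_i^k(\cdot|s),\bar A_i^{\pi^k}(s,\cdot)\rangle=f^k(\infty)$ (since $\lim_{\alpha\to\infty}\pi_{i,\alpha}(\cdot|s)$ is the uniform distribution on $\Ac_{i_p}^k$, which is one valid choice of $\pi_i^{*k}(\cdot|s)$) yields the claim.

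\textbf{Main obstacle.} The bookkeeping is routine once the analogy with Lemma~\ref{lem:upper} is set up, so the only genuine subtlety is being careful about the argmax set $\Ac_{i_p}^k$ possibly containing more than one action (handled exactly as in the potential-game proof by keeping the $\exp$ factor on $\Ac_{i_p}^k$ intact rather than collapsing it) and about the suppressed dependence on $s$: the quantities $c$ and $\delta^k$ in Theorem~\ref{NPG_thm} are defined as infima/minima over $s$ as well as $i$, so the per-$s$ bound I derive with the state-dependent quantities $\sum_{a_j\in\Ac_{i_p}^k}\pi_i^k(a_j|s)$ and $\bar A_i^{\pi^k}(s,a_{i_p}^k)-\bar A_i^{\pi^k}(s,a_{i_q}^k)$ must be weakened to the uniform constants $c$ and $\delta^k$ before the statement matches. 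I would also double-check that Lemma~\ref{lem:ratio} is stated in enough generality to cover the ratio comparison with the $1/(1-\gamma)$ factor inside the exponent; if not, a one-line rescaling argument reduces it to the form already proved.
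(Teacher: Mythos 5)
Your proposal is correct and follows essentially the same route as the paper's proof: fix $s$, apply Lemma~\ref{lem:ratio} to collapse the suboptimal actions onto $a_{i_q}^k$, simplify the ratio, and then weaken the state-dependent gap and mass to the uniform constants $\delta^k$ and $c$; your explicit observation that the baseline $\sum_{a_i}\pi_i^k(a_i|s)\bar A_i^{\pi^k}(s,a_i)=0$ is exactly the fact the paper uses implicitly to identify the residual sum with $\bar A_i^{\pi^k}(s,a_{i_p}^k)$ and hence with $f^k(\infty)$. (Only cosmetic nit: $\lim_{\alpha\to\infty}\pi_{i,\alpha}(\cdot|s)$ is proportional to $\pi_i^k$ restricted to $\Ac_{i_p}^k$ rather than uniform on it, but this does not affect the argument.)
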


\begin{proof}
For any $i \in [n]$, we have
\begin{align*}
    &f^k(\alpha) = \sum_{i\in [n]}\frac{\sum_{a_i} \bar{A}_i^{\pi^k}(s, a_i) \pi_i^k(a_i|s) \exp{\frac{\alpha \bar{A}_i^{\pi^k}(s, a_i)}{1-\gamma}}}{\sum_{a_i} \pi_i^k(a_i|s) \exp{\frac{\alpha \bar{A}_i^{\pi^k}(s, a_i)}{1-\gamma}}} \\
    &\geq \sum_{i\in [n]}\frac{\bar{A}_i^{\pi^k}(s, a_{i_p}^k) \left( \sum_{a_j \in  \Ac^k_{i_p}} \pi_i^k(a_j|s) \right) \exp{\frac{\alpha \bar{A}_i^{\pi^k}(s, a_{i_p}^k)}{1-\gamma}} + \sum_{a_j \in \Ac_i \backslash \Ac^k_{i_p}} \bar{A}_i^{\pi^k}(s, a_j) \pi_i^k(a_j|s) \exp{\frac{\alpha \bar{A}_i^{\pi^k}(s, a_{i_q}^k)}{1-\gamma}}} {\left( \sum_{a_j \in  \Ac^k_{i_p}} \pi_i^k(a_j|s) \right) \exp{\frac{\alpha \bar{A}_i^{\pi^k}(s, a_{i_p}^k)}{1-\gamma}} + \sum_{a_j \in \Ac_i \backslash \Ac^k_{i_p}} \pi_i^k(a_j|s) \exp{\frac{\alpha \bar{A}_i^{\pi^k}(s, a_{i_q}^k)}{1-\gamma}}} \\
    &= \sum_{i\in [n]}\frac{\bar{A}_i^{\pi^k}(s, a_{i_p}^k) \left( \sum_{a_j \in  \Ac^k_{i_p}} \pi_i^k(a_j|s) \right) \exp{\frac{\alpha (\bar{A}_i^{\pi^k}(s, a_{i_p}^k) - \bar{A}_i^{\pi^k}(s, a_{i_q}^k))}{1-\gamma}} + \sum_{a_j \in \Ac_i \backslash \Ac^k_{i_p}} \bar{A}_i^{\pi^k}(s, a_j) \pi_i^k(a_j|s)} {\left( \sum_{a_j \in  \Ac^k_{i_p}} \pi_i^k(a_j|s) \right) \exp{\frac{\alpha (\bar{A}_i^{\pi^k}(s, a_{i_p}^k) - \bar{A}_i^{\pi^k}(s, a_{i_q}^k))}{1-\gamma}} + \sum_{a_j \in \Ac_i \backslash \Ac^k_{i_p}} \pi_i^k(a_j|s)} \\
    &= \sum_{i\in [n]}\left[\bar{A}_i^{\pi^k}(s, a_{i_p}^k) - \frac{\sum_{a_j \in \Ac_i \backslash \Ac^k_{i_p}} (\bar{A}_i^{\pi^k}(s, a_{i_p}^k) - \bar{A}_i^{\pi^k}(s, a_j)) \pi_i^k(a_j|s)}{\left( \sum_{a_j \in  \Ac^k_{i_p}} \pi_i^k(a_j|s) \right) \exp{\frac{\alpha (\bar{A}_i^{\pi^k}(s, a_{i_p}^k) - \bar{A}_i^{\pi^k}(s, a_{i_q}^k))}{1-\gamma}} + \sum_{a_j \in \Ac_i \backslash \Ac^k_{i_p}} \pi_i^k(a_j|s)} \right]\\
    &\geq \sum_{i\in [n]}\bar{A}_i^{\pi^k}(s, a_{i_p}^k) \left[1 - \frac{1}{c(\exp(\frac{\delta^k \alpha}{1-\gamma}) - 1) + 1}\right]\\
    &\geq  f^k(\infty) \left[1 - \frac{1}{c(\exp(\frac{\delta^k \alpha}{1-\gamma}) - 1) + 1}\right],
\end{align*}
where the first inequality holds due to Lemma \ref{lem:ratio}.
\end{proof}

Combining the above lemmas, we show the proof of Theorem \ref{NPG_thm}.
\begin{proof}[Proof of Theorem \ref{NPG_thm}]

Choose $\alpha = \eta = \frac{(1-\gamma)^2}{2 \sqrt{n} \phi_{max}}$ in Lemma \ref{lem:appendix_f},
\begin{align*}
    \text{NE-gap}(\pi^k) &\leq \frac{1}{1-\gamma} \sum_{i,s} d_{\rho}^{\pi_i^{*k}, \pi_{-i}^k}(s)  \max_{a_i} \bar{A}_i^{\pi^k}(s, a_i) \\
    &\leq \frac{1}{1-\gamma}\sum_{i,s} \|\frac{d_{\rho}^{\pi_i^{*k}, \pi_{-i}^k}}{d_{\rho}^{\pi^{k+1}}}\|_{\infty}  d_{\rho}^{\pi^{k+1}}(s) \max_{a_i} \bar{A}_i^{\pi^k}(s, a_i) \\ 
    &\leq \frac{1}{1-\gamma} M\sum_{s} d_{\rho}^{\pi^{k+1}}(s)  \sum_i \max_{a_i} \bar{A}_i^{\pi^k}(s, a_i) \\ 
    &\leq \frac{1}{1-\gamma} M\sum_{s} d_{\rho}^{\pi^{k+1}}(s)  \lim_{\alpha \to \infty} f^k(\alpha)  \\ 
    &\leq \frac{1}{1-\gamma} M\sum_{s} d_{\rho}^{\pi^{k+1}}(s)  \frac{1}{1 - \frac{1}{c(\exp(\frac{\delta^k \eta}{1-\gamma}) - 1) + 1}} f^k(\eta) \\ 
    &\leq \frac{1}{2(1-\gamma)} \sum_s d_{\rho}^{\pi^{k+1}}(s) \sum_{i=1}^n \sum_{a_i}\pi_i^{k+1}(a_i|s) \bar{A}_i^{\pi^k}(s, a_i) 2M (1 + \frac{1-\gamma}{c \delta^k \eta}) \\
    &\leq (\Phi^{\pi^{k+1}}(\rho) - \Phi^{\pi^k}(\rho)) 2 M (1 + \frac{2 \sqrt{n} \phi_{max}}{c\delta^k(1-\gamma)})\\
    &\leq (\Phi^{\pi^{k+1}}(\rho) - \Phi^{\pi^k}(\rho)) 2 M (1 + \frac{2 \sqrt{n} \phi_{max}}{c\delta_K(1-\gamma)}).
\end{align*}
Then
\begin{align*}
    \frac{1}{K} \sum_{k=0}^{K-1} \text{NE-gap}(\pi^k) \leq \frac{2M (\Phi^{\pi^K}(\rho) - \Phi^{\pi^0}(\rho))}{K} (1 + \frac{2 \sqrt{n} \phi_{max}}{c \delta_K (1 - \gamma)}) \leq \frac{2M \phi_{max}}{K(1-\gamma)} (1 + \frac{2 \sqrt{n} \phi_{max}}{c \delta_K (1-\gamma)}).
\end{align*}

\end{proof}

Finally, we show a corollary of Theorem \ref{NPG_thm}.
\begin{corollary}[Restatement of Corollary \ref{NPG_thm_lim}]
Consider a MPG that satisfies Assumption \ref{ass:lim} with NPG update using algorithm \ref{MPG-NPG}. There exists $K'$, such that for any $K \geq 1$, choosing $\eta = \frac{(1-\gamma)^2}{2 \sqrt{n} \phi_{max}}$, we have
\begin{align*}
\frac{1}{K} \sum^{K-1}_{k=0} \text{NE-gap}(\pi^k) \leq \frac{2 M \phi_{max}}{K (1-\gamma)} (1 + \frac{4 \sqrt{n} \phi_{max}}{c \delta^* (1-\gamma)} +  \frac{K'}{2M} ).
\end{align*}
where $M ,c$ are defined as in Theorem \ref{NPG_thm}.
\end{corollary}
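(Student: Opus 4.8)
The plan is to reuse the proof of Theorem~\ref{NPG_thm} on the \emph{tail} of the iteration sequence and to absorb the \emph{transient} with a crude a priori bound on the NE-gap. Assumption~\ref{ass:lim} gives $\delta^k \to \delta^* > 0$, so there is a finite $K'$ (depending only on the game) such that $\delta^k \geq \delta^*/2$ for every $k \geq K'$; this is the $K'$ that appears in the statement. First I would rerun, verbatim, the chain of inequalities in the proof of Theorem~\ref{NPG_thm} --- the one combining Lemma~\ref{lem:phi_diff_mpg}, Lemma~\ref{lem:appendix_f}, and the distribution-mismatch change of measure through $M$ --- but with the uniform gap bound $\delta^k \geq \delta_K$ replaced by $\delta^k \geq \delta^*/2$. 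For each $k \geq K'$ this produces
\begin{align*}
    \text{NE-gap}(\pi^k) \leq \big(\Phi^{\pi^{k+1}}(\rho) - \Phi^{\pi^k}(\rho)\big)\, 2M\Big(1 + \frac{4\sqrt{n}\,\phi_{max}}{c\,\delta^*\,(1-\gamma)}\Big),
\end{align*}
the constant $4$ being exactly $2/(1/2)$ coming from the halved gap. Summing over $K' \leq k \leq K-1$, the potential differences telescope, and using $0 \leq \Phi^{\pi}(\rho) \leq \phi_{max}/(1-\gamma)$ the tail is bounded by $\tfrac{2M\phi_{max}}{1-\gamma}\big(1 + \tfrac{4\sqrt{n}\phi_{max}}{c\delta^*(1-\gamma)}\big)$.

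For the head $0 \leq k < K'$ I would simply invoke the universal bound $\text{NE-gap}(\pi^k) \leq \phi_{max}/(1-\gamma)$, which holds for \emph{every} policy. Indeed, by Definition~\ref{MPG_formulation}, for any agent $i$ and any $\pi_i'$ we have $V_i^{\pi_i',\pi_{-i}}(\rho) - V_i^{\pi_i,\pi_{-i}}(\rho) = \Phi^{\pi_i',\pi_{-i}}(\rho) - \Phi^{\pi_i,\pi_{-i}}(\rho) \leq \phi_{max}/(1-\gamma)$ since $\Phi^\pi(\rho) \in [0,\phi_{max}/(1-\gamma)]$, and one then maximizes over $i$ and $\pi_i'$. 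Hence the head contributes at most $K' \phi_{max}/(1-\gamma)$ to $\sum_{k=0}^{K-1}\text{NE-gap}(\pi^k)$. Adding the head and tail bounds and dividing by $K$ gives
\begin{align*}
    \frac{1}{K}\sum_{k=0}^{K-1}\text{NE-gap}(\pi^k) \leq \frac{K'\phi_{max}}{K(1-\gamma)} + \frac{2M\phi_{max}}{K(1-\gamma)}\Big(1 + \frac{4\sqrt{n}\phi_{max}}{c\delta^*(1-\gamma)}\Big) = \frac{2M\phi_{max}}{K(1-\gamma)}\Big(1 + \frac{4\sqrt{n}\phi_{max}}{c\delta^*(1-\gamma)} + \frac{K'}{2M}\Big),
\end{align*}
which is exactly the claimed bound; the degenerate case $K \leq K'$ is immediate because then $\tfrac1K\sum \leq \phi_{max}/(1-\gamma)$ while the right-hand side is already at least $\tfrac{2M\phi_{max}}{K(1-\gamma)}\cdot\tfrac{K'}{2M} \geq \phi_{max}/(1-\gamma)$.

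I do not expect a serious obstacle: the corollary is essentially a robustification of Theorem~\ref{NPG_thm} that trades the possibly-unavailable uniform lower bound $\inf_k \delta^k$ for the asymptotic one $\delta^*$. The one point that needs care is that $K'$ is \emph{not} quantitative --- it depends on how fast $\delta^k \to \delta^*$, which the analysis does not control --- so it must be kept as an additive $K'/K$ term rather than folded into the other constants; since that term vanishes as $K \to \infty$, the $\order{1/K}$ rate (hence the $\order{1/\epsilon}$ iteration complexity past a finite threshold) is retained. The identical head/tail split applied to the proof of Theorem~\ref{PG_thm} yields the corresponding statement for static potential games.
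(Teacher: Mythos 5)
Your proposal is correct and follows essentially the same route as the paper's own proof: split the sum at the finite threshold $K'$ where $\delta^k \geq \delta^*/2$ kicks in, reuse the per-iteration bound $\text{NE-gap}(\pi^k) \leq (\Phi^{\pi^{k+1}}(\rho)-\Phi^{\pi^k}(\rho))\,2M(1+\tfrac{2\sqrt{n}\phi_{max}}{c\delta^k(1-\gamma)})$ from Theorem \ref{NPG_thm} and telescope on the tail, and absorb the head with the universal bound $\text{NE-gap}(\pi^k)\leq \phi_{max}/(1-\gamma)$. Your explicit justification of that universal bound via Definition \ref{MPG_formulation} and your handling of the degenerate case $K\leq K'$ are slightly more careful than the paper's write-up, but the argument is the same.
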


\begin{proof}[Proof of Corollary \ref{NPG_thm_lim}]
Following from the main paper, we know that there exists $K'$ such that for all $k\geq K'$, $\delta^k \geq \frac{\delta^*}{2}$.

From the proof of Theorem \ref{NPG_thm}, we know that for MPGs, we get the following relationship:
\begin{align*}
    \text{NE-gap}(\pi^k) &\leq (\Phi^{\pi^{k+1}}(\rho) - \Phi^{\pi^k}(\rho)) 2 M (1 + \frac{2 \sqrt{n} \phi_{max}}{c\delta^k(1-\gamma)}).
\end{align*}
Therefore, for all $k\geq K'$,
\begin{align*}
    \text{NE-gap}(\pi^k) &\leq (\Phi^{\pi^{k+1}}(\rho) - \Phi^{\pi^k}(\rho)) 2 M (1 + \frac{4 \sqrt{n} \phi_{max}}{c\delta^*(1-\gamma)}).
\end{align*}
Then, taking summation from $K'$ to $K-1$, we get
$$ \sum^{K-1}_{k=K'} \text{NE-gap}(\pi^k) \leq \frac{2 M \phi_{max}}{1-\gamma} (1 + \frac{4 \sqrt{n} \phi_{max}}{c \delta^* (1-\gamma)} ),$$

Additionally, we know that NE-gap$(\pi^k)$ $\leq \frac{\phi_{max}}{1-\gamma}$ for all $k$ by the definition of MPG. Combining the equations above, we have
\begin{align*}
    \frac{1}{K}\sum^{K-1}_{k=0} \text{NE-gap}(\pi^k) &=\frac{1}{K}\left(\sum^{K'-1}_{k=0} \text{NE-gap}(\pi^k)  + \sum^{K-1}_{k=K'} \text{NE-gap}(\pi^k)  \right)\\
    & \leq \frac{1}{K}\left(\frac{2 M \phi_{max}}{ (1-\gamma)} (1 + \frac{4 \sqrt{n} \phi_{max}}{c \delta^* (1-\gamma)} ) + \frac{\change{\phi_{max}} K'}{1-\gamma} \right)\\
    &\change{=} \frac{2 M \phi_{max}}{K (1-\gamma)} (1 + \frac{4 \sqrt{n} \phi_{max}}{c \delta^* (1-\gamma)} + \change{\frac{ K' }{2M}}).
\end{align*}
\end{proof}
\change{
\section{Extension to General MPG Formulation}
\label{sec:app-ext}
In this section, we extend our results to a more general form of potential function. 

\textit{Note that the detailed analysis is omitted for brevity. Proofs of lemmas and theorem in this section either exist in previous works or closely follow  our analysis in Section \ref{sec:app-mpg}.}

\begin{definition}[Alternate definition of MPG \cite{ding2022independent}]\label{def_general_MPG}
    For a MPG, at any state $s$, there exists a global function $\Phi^{\pi}(s):\Pi\times \Sc \rightarrow \Rb$ such that 
    \begin{equation}
        V_i^{\pi_i, \pi_{-i}}(s) - V_i^{\pi_i', \pi_{-i}}(s) = \Phi^{\pi_i, \pi_{-i}}(s) - \Phi^{\pi_i', \pi_{-i}}(s)
    \end{equation}
    is true for any policies $\pi_i, \pi_i' \in \Pi_i$.
\end{definition}
This definition of MPG differs from Definition \ref{MPG_formulation} and does not enforce the existence of additional structure in function $\phi$. However, whether the two definitions are equivalent, similar to Lemma \ref{lem:potential-func}, is still an open question to the best of the authors' knowledge.

We present the following Lemmas from \cite{ding2022independent}, using which we provide a similar result to Lemma \ref{lem:phi_diff_mpg}.

\begin{lemma}[\cite{ding2022independent}]\label{lem:perf_diff_ding}
    For any function $\Psi^{\pi}:\Pi\rightarrow\Rb,$ and any two policies $\pi, \pi' \in \Pi,$
    \begin{align*}
        \Psi^{\pi'}& - \Psi^\pi 
        =\sum_{i=1}^N(\Psi^{\pi'_i, \pi_{-i}} - \Psi^\pi)\\
        &+ \sum_{i=1}^N \sum_{j=i+1}^N \left(
            \Psi^{\pi_{<i, i~j}, \pi'_{>j}, \pi'_i, \pi'_j} - \Psi^{\pi_{<i, i~j}, \pi'_{>j}, \pi_i, \pi'_j} - \Psi^{\pi_{<i, i~j}, \pi'_{>j}, \pi'_i, \pi_j} + \Psi^{\pi_{<i, i~j}, \pi'_{>j}, \pi_i, \pi_j}
        \right)
    \end{align*}
\end{lemma}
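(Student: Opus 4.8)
The plan is to prove Lemma~\ref{lem:perf_diff_ding} by a nested telescoping argument; the identity is purely combinatorial and uses nothing about $\Psi$ beyond the fact that a joint policy is a tuple $(\pi_1,\dots,\pi_N)$ whose coordinates can be edited one at a time. (An induction on $N$ also works, but the telescoping matches the shape of the right-hand side more directly.) The only elementary fact invoked repeatedly is that if two joint policies agree on all agents except one, the difference of their $\Psi$-values is exactly that single edit.

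First I would run an \emph{outer} telescoping over the agent index. For $k=0,\dots,N$ let $\sigma^{(k)}$ be the joint policy in which agents $1,\dots,k$ follow $\pi$ and agents $k{+}1,\dots,N$ follow $\pi'$, so that $\sigma^{(0)}=\pi'$ and $\sigma^{(N)}=\pi$; then
\begin{align*}
  \Psi^{\pi'}-\Psi^{\pi}=\Psi^{\sigma^{(0)}}-\Psi^{\sigma^{(N)}}=\sum_{i=1}^N\bigl(\Psi^{\sigma^{(i-1)}}-\Psi^{\sigma^{(i)}}\bigr),
\end{align*}
and the $i$-th summand edits only agent $i$, against a background in which agents $<i$ sit at $\pi$ and agents $>i$ sit at $\pi'$. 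Writing $G_i(\tau_{>i}):=\Psi^{\pi_{<i},\,\pi'_i,\,\tau_{>i}}-\Psi^{\pi_{<i},\,\pi_i,\,\tau_{>i}}$ for this \emph{agent-$i$ edit} as a function of the configuration $\tau_{>i}$ of the later agents, the $i$-th summand is $G_i(\pi'_{>i})$. The key observation is that $G_i(\pi_{>i})=\Psi^{\pi'_i,\pi_{-i}}-\Psi^{\pi}$, which is precisely the term in the first sum of the statement; so I would split $G_i(\pi'_{>i})=G_i(\pi_{>i})+\bigl(G_i(\pi'_{>i})-G_i(\pi_{>i})\bigr)$, peeling off the first sum and leaving the correction $\sum_i\bigl(G_i(\pi'_{>i})-G_i(\pi_{>i})\bigr)$.

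Second, for each fixed $i$ I would run an \emph{inner} telescoping that moves agents $i{+}1,\dots,N$ from $\pi'$ to $\pi$ one at a time in increasing order. This expresses $G_i(\pi'_{>i})-G_i(\pi_{>i})$ as $\sum_{j=i+1}^N$ of the $j$-th ``edit of agent $j$ inside $G_i$'', where in that term agents strictly between $i$ and $j$ have already been reset to $\pi$ while agents $>j$ still sit at $\pi'$. Since $G_i$ is itself a difference in agent $i$'s policy, editing agent $j$ inside $G_i$ unfolds into the four-term mixed second difference that flips agents $i$ and $j$ between $\pi$ and $\pi'$, i.e. exactly the $(i,j)$ summand $\Psi^{\pi_{<i,\,i\sim j},\,\pi'_{>j},\,\pi'_i,\,\pi'_j}-\Psi^{\pi_{<i,\,i\sim j},\,\pi'_{>j},\,\pi_i,\,\pi'_j}-\Psi^{\pi_{<i,\,i\sim j},\,\pi'_{>j},\,\pi'_i,\,\pi_j}+\Psi^{\pi_{<i,\,i\sim j},\,\pi'_{>j},\,\pi_i,\,\pi_j}$; summing over $i$ then recovers the stated double sum.

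I do not anticipate any genuine mathematical obstacle: the entire content is telescoping. The one thing to get right is the bookkeeping --- choosing the outer order (agents $N,\dots,1$) and the inner order (agents $i{+}1,\dots,N$, moved from $\pi'$ to $\pi$) so that, for every pair $(i,j)$, the frozen background matches exactly the $\pi_{<i,\,i\sim j},\,\pi'_{>j}$ pattern in the claim --- plus verifying the $N=2$ base case by hand to make sure no sign or index is off. Since this lemma already appears in \cite{ding2022independent}, simply citing it is an acceptable alternative, but the two-step telescoping above gives a short self-contained derivation.
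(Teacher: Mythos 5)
Your proof is correct, but note that the paper does not actually prove this lemma: it is stated as imported from \cite{ding2022independent}, and the appendix explicitly defers to that reference for the proof. So your nested-telescoping derivation is a genuine addition rather than a parallel of anything in the paper. I checked the bookkeeping: with $\sigma^{(k)}$ placing agents $1,\dots,k$ at $\pi$ and $k{+}1,\dots,N$ at $\pi'$, the outer sum $\sum_i\bigl(\Psi^{\sigma^{(i-1)}}-\Psi^{\sigma^{(i)}}\bigr)$ does telescope to $\Psi^{\pi'}-\Psi^{\pi}$, each summand is the single edit $G_i(\pi'_{>i})$, the baseline $G_i(\pi_{>i})$ is exactly $\Psi^{\pi'_i,\pi_{-i}}-\Psi^{\pi}$, and the inner telescoping over $j=i{+}1,\dots,N$ produces precisely the four-term mixed difference with background $\pi$ on agents $<i$ and strictly between $i$ and $j$, and $\pi'$ on agents $>j$ --- matching the $\pi_{<i,\,i\sim j},\,\pi'_{>j}$ pattern in the statement. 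The $N=2$ case also checks out. The only blemish is a prose slip: you describe the outer order as ``agents $N,\dots,1$,'' whereas your $\sigma^{(k)}$ construction switches agents from $\pi'$ to $\pi$ in increasing order (equivalently, from $\pi$ to $\pi'$ in decreasing order); the displayed formula is unambiguous and correct, so this is cosmetic. Since the identity is purely combinatorial and uses nothing about $\Psi$ beyond its dependence on the tuple $(\pi_1,\dots,\pi_N)$, your argument is a valid self-contained replacement for the citation.
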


\begin{lemma}[\cite{ding2022independent}]\label{lem:2nd_ord_perf_diff_ding}
Consider a two-player game (group of players) with a common-payoff Markov game with state space $\Sc$ and action sets $\Ac_1, \Ac_2$, with reward function $r$, transition function $P$ and policy sets $\Pi_1, \Pi_2$ be defined with respect to general MDPs. Then for any $\pi_1, \pi_1' \in \Pi_1$, $\pi_2, \pi_2' \in \Pi_2$, the for value function $V^\pi(\mu)$,

\begin{align*}
    &V^{\pi_1, \pi_2}(\mu) - V^{\pi'_1, \pi_2}(\mu) -V^{\pi_1, \pi'_2}(\mu) +V^{\pi'_1, \pi'_2}(\mu) \\
    \leq & \frac{2M\max_i |\Ac_i|}{(1-\gamma)^4}\sum_s d^{\pi_1',\pi_2'}_{\mu}(s)\left(
        \norm{\pi_1(\cdot|s)-\pi_1'(\cdot|s)}^2 + \norm{\pi_2(\cdot|s)-\pi_2'(\cdot|s)}^2
    \right)
\end{align*}
    
\end{lemma}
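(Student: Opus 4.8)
The plan is to view the left‑hand side as a discrete mixed second difference of the map $(\pi_1,\pi_2)\mapsto V^{\pi_1,\pi_2}(\mu)$ and to bound it by the product of the two one‑coordinate policy displacements, which is then symmetrized by AM--GM. Write $D:=V^{\pi_1,\pi_2}(\mu)-V^{\pi_1',\pi_2}(\mu)-V^{\pi_1,\pi_2'}(\mu)+V^{\pi_1',\pi_2'}(\mu)$ and group $D=\big(V^{\pi_1,\pi_2}(\mu)-V^{\pi_1',\pi_2}(\mu)\big)-\big(V^{\pi_1,\pi_2'}(\mu)-V^{\pi_1',\pi_2'}(\mu)\big)$. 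First I would apply the performance difference lemma in player $1$'s coordinate to each bracket, writing $V^{\pi_1,\pi_2}(\mu)-V^{\pi_1',\pi_2}(\mu)=\frac{1}{1-\gamma}\sum_s d_\mu^{\pi_1,\pi_2}(s)\sum_{a_1}\big(\pi_1(a_1|s)-\pi_1'(a_1|s)\big)\bar{Q}_1^{\pi_1',\pi_2}(s,a_1)$ and likewise with $\pi_2$ replaced by $\pi_2'$ (here $\bar{Q}_1^{\cdot}$ is the player‑$2$–marginalized $Q$). Subtracting, the factor $\pi_1(\cdot|s)-\pi_1'(\cdot|s)$ is common, and what it multiplies, $d_\mu^{\pi_1,\pi_2}(s)\bar{Q}_1^{\pi_1',\pi_2}(s,a_1)-d_\mu^{\pi_1,\pi_2'}(s)\bar{Q}_1^{\pi_1',\pi_2'}(s,a_1)$, is a difference of two quantities that change only through player $2$'s policy.

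Next I would control that inner difference by splitting it into an occupancy‑shift part $\big(d_\mu^{\pi_1,\pi_2}(s)-d_\mu^{\pi_1,\pi_2'}(s)\big)\bar{Q}_1^{\pi_1',\pi_2}(s,a_1)$ and a value‑shift part $d_\mu^{\pi_1,\pi_2'}(s)\big(\bar{Q}_1^{\pi_1',\pi_2}(s,a_1)-\bar{Q}_1^{\pi_1',\pi_2'}(s,a_1)\big)$. For the first I use $|\bar{Q}_1|\le\frac{1}{1-\gamma}$ together with the standard occupancy‑shift estimate $\|d_\mu^{\pi_1,\pi_2}-d_\mu^{\pi_1,\pi_2'}\|_1\le\frac{\gamma}{1-\gamma}\,\mathbb{E}_{s'\sim d_\mu^{\pi_1,\pi_2'}}\|\pi_2(\cdot|s')-\pi_2'(\cdot|s')\|_1$; for the second I apply the performance difference lemma a second time, now in player $2$'s coordinate, to $\bar{Q}_1^{\pi_1',\pi_2}(s,a_1)-\bar{Q}_1^{\pi_1',\pi_2'}(s,a_1)$ — a $Q$/value gap under a player‑$2$ policy change, hence of order $\frac{1}{(1-\gamma)^2}\max_{s'}\|\pi_2(\cdot|s')-\pi_2'(\cdot|s')\|_1$. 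Collecting the two applications of PDL and the boundedness of $Q$ yields a bound of the shape $|D|\lesssim\frac{1}{(1-\gamma)^4}\big(\sum_s \nu(s)\,\|\pi_1(\cdot|s)-\pi_1'(\cdot|s)\|_1\big)\big(\sum_{s'}\nu'(s')\,\|\pi_2(\cdot|s')-\pi_2'(\cdot|s')\|_1\big)$ for auxiliary state distributions $\nu,\nu'$.

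To conclude I would (i) dominate $\nu,\nu'$ and any residual $\max_{s'}$ by $d_\mu^{\pi_1',\pi_2'}$ using the mismatch coefficient $M=\sup_\pi\max_s 1/d_\mu^\pi(s)$, so that every state average is taken against the target distribution $d_\mu^{\pi_1',\pi_2'}$ at the cost of factors of $M$; (ii) pass from $\ell_1$ to $\ell_2$ via $\|x\|_1\le\sqrt{|\mathcal{A}_i|}\,\|x\|_2$, so the product of the two displacements picks up $\sqrt{|\mathcal{A}_1||\mathcal{A}_2|}\le\max_i|\mathcal{A}_i|$; and (iii) use Cauchy--Schwarz in the state variable to collapse the product of two state sums into a single sum against $d_\mu^{\pi_1',\pi_2'}$, then AM--GM $xy\le\tfrac12(x^2+y^2)$ to symmetrize the two players, landing on $\frac{2M\max_i|\mathcal{A}_i|}{(1-\gamma)^4}\sum_s d_\mu^{\pi_1',\pi_2'}(s)\big(\|\pi_1(\cdot|s)-\pi_1'(\cdot|s)\|^2+\|\pi_2(\cdot|s)-\pi_2'(\cdot|s)\|^2\big)$.

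The step I expect to be the main obstacle is the bookkeeping in (i)--(iii): the two state sums arise from different reference distributions (one from player $1$'s PDL, one from the occupancy shift inside player $2$'s contribution), and both must be rewritten against $d_\mu^{\pi_1',\pi_2'}$ — precisely where $M$ is unavoidable — while simultaneously verifying that the two nested PDL applications plus the occupancy‑shift lemma contribute exactly four factors of $(1-\gamma)^{-1}$ and that the two $\ell_1\!\to\!\ell_2$ conversions combine to $\max_i|\mathcal{A}_i|$ rather than its square. Since the statement is quoted from \cite{ding2022independent}, an alternative is to invoke their proof verbatim; the argument above moreover closely mirrors the mixed‑strategy telescoping across players already used in the proof of Lemma~\ref{lem:phi_diff_mpg}.
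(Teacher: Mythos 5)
The paper does not actually prove this lemma: it is imported verbatim from Ding et al.\ with the sentence ``We refer to \cite{ding2022independent} for complete proof,'' so there is no internal proof to compare against. Your outline --- treat the left side as a mixed second difference, apply the performance difference lemma in player~$1$'s coordinate to each bracket, split the resulting inner difference into an occupancy-shift part and a value-shift part, bound the former by the $\frac{\gamma}{1-\gamma}$ occupancy-perturbation estimate and the latter by a second application of the PDL in player~$2$'s coordinate, then convert $\ell_1$ to $\ell_2$ and symmetrize by AM--GM --- is the standard route for second-order smoothness bounds of this type and is essentially the argument in the cited reference. The $(1-\gamma)$ accounting is not a problem: your two branches each contribute at most $(1-\gamma)^{-3}$ times the product of displacements, so the stated $(1-\gamma)^{-4}$ is safe, and $\sqrt{|\Ac_1||\Ac_2|}\le\max_i|\Ac_i|$ handles the action-space factor correctly.

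The one step you flag as the obstacle is indeed where the sketch, as written, does not close: the power of $M$. Under this paper's definition $M=\sup_\pi\max_s 1/d_\rho^\pi(s)$, you need \emph{two} independent changes of reference measure --- one to turn the $\max_s\|\pi_1(\cdot|s)-\pi_1'(\cdot|s)\|_1$ (or the $d_\mu^{\pi_1,\pi_2'}$-weighted player-$1$ sum) into a $d_\mu^{\pi_1',\pi_2'}$-weighted average, and a second to do the same for the player-$2$ displacement arising inside the occupancy-shift and $Q$-difference terms. Each costs a factor of $M$, so the natural output of your bookkeeping is $M^2$, not the claimed $2M$. To land on a single power you would need either to use the reversed form of the PDL (expanding $V^{\pi_1,\pi_2'}-V^{\pi_1',\pi_2'}$ against $d_\mu^{\pi_1',\pi_2'}$ directly so that the dominant term needs no measure change), or to accept that the discrepancy is absorbed by the footnoted ``conversion'' between this paper's $M$ and the different mismatch coefficient used in \cite{ding2022independent}. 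Since this constant propagates into Lemma~\ref{lem:general_perf_bound} and Theorem~\ref{thm:gen-mpg}, you should either carry out that reversed-PDL variant explicitly or state the bound with $M^2$; the $\order{1/\epsilon}$ conclusion survives either way.
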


We refer to \cite{ding2022independent} for complete proof for Lemmas \ref{lem:perf_diff_ding} and \ref{lem:2nd_ord_perf_diff_ding}. 
By using Lemmas \ref{lem:perf_diff_ding} and \ref{lem:2nd_ord_perf_diff_ding}, we can replace Lemma \ref{lem:phi_diff_mpg} in our paper with the following lemma.
\begin{lemma}\label{lem:general_perf_bound}
Given policy $\pi^k$ and marginalized advantage function $\bar{A}_i^{\pi^k}(s, a_i)$, for $\pi^{k+1}$ generated using independent NPG update, we have the following inequality,
\begin{align*}
    \Phi^{\pi^{k+1}}(\rho) - \Phi^{\pi^k}(\rho) \geq \left(\frac{1}{1-\gamma} - \frac{2M^3 \max_i |\Ac_i| n \eta}{(1-\gamma)^3}\right) \sum_s d_{\rho}^{\pi_i^{k+1}, \pi_{-i}^k}(s) \sum_{i=1}^n \langle \pi_i^{k+1}(\cdot|s), \bar{A}_i^{\pi^k}(s, \cdot)\rangle.
\end{align*}
\end{lemma}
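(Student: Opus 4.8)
The plan is to follow the template of the proof of Lemma~\ref{lem:phi_diff_mpg} in Section~\ref{sec:app-mpg}, but to replace the agent-by-agent telescoping argument --- which there relied on the explicit decomposition $h_i = r_i - \phi$ together with Lemma~\ref{lem:mpg-prop}, neither of which is available under the weaker Definition~\ref{def_general_MPG} --- by the exact first-order-plus-second-order expansion of Lemma~\ref{lem:perf_diff_ding}. Applying that lemma with $\Psi^{\pi} = \Phi^{\pi}(\rho)$, $\pi = \pi^k$, and $\pi' = \pi^{k+1}$ gives
\[
\Phi^{\pi^{k+1}}(\rho) - \Phi^{\pi^k}(\rho) = \sum_{i=1}^n \big(\Phi^{\pi_i^{k+1},\pi_{-i}^k}(\rho) - \Phi^{\pi^k}(\rho)\big) + \sum_{i=1}^n \sum_{j=i+1}^n \mathrm{Cross}_{ij},
\]
and it then remains to evaluate the first-order sum and to control the magnitude of each second-order term $\mathrm{Cross}_{ij}$.

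For the first-order sum, by Definition~\ref{def_general_MPG} applied to a deviation of agent $i$ alone (with the remaining agents frozen at $\pi_{-i}^k$) we have $\Phi^{\pi_i^{k+1},\pi_{-i}^k}(\rho) - \Phi^{\pi^k}(\rho) = V_i^{\pi_i^{k+1},\pi_{-i}^k}(\rho) - V_i^{\pi^k}(\rho)$. The performance difference lemma \cite{10.5555/645531.656005} applied to agent $i$'s value, followed by marginalization over $a_{-i}$ and the identity $\sum_{a_i}\pi_i^k(a_i|s)\bar A_i^{\pi^k}(s,a_i) = 0$, rewrites this as $\tfrac{1}{1-\gamma}\sum_s d_\rho^{\pi_i^{k+1},\pi_{-i}^k}(s)\,\langle \pi_i^{k+1}(\cdot|s),\bar A_i^{\pi^k}(s,\cdot)\rangle$. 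Summing over $i$ produces exactly the expression on the right-hand side of Lemma~\ref{lem:general_perf_bound}, with prefactor $\tfrac{1}{1-\gamma}$; this is the leading term.

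For the second-order terms, fix a pair $(i,j)$ and freeze every other agent at whatever profile appears in the corresponding cross term of Lemma~\ref{lem:perf_diff_ding}. Applying the single-deviation identity of Definition~\ref{def_general_MPG} for two different choices of $\pi_j$ and subtracting shows that the $\Phi$-cross difference $\mathrm{Cross}_{ij}$ coincides with the corresponding cross difference of $V_i$, which --- once the other agents are held fixed --- is the value function of a two-agent common-payoff Markov game with reward $r_i$. Lemma~\ref{lem:2nd_ord_perf_diff_ding} then bounds $|\mathrm{Cross}_{ij}|$ by a constant multiple of $\tfrac{M\max_\ell|\Ac_\ell|}{(1-\gamma)^4}\sum_s \tilde d(s)\big(\|\pi_i^{k+1}(\cdot|s)-\pi_i^k(\cdot|s)\|^2 + \|\pi_j^{k+1}(\cdot|s)-\pi_j^k(\cdot|s)\|^2\big)$ for some state-visitation distribution $\tilde d$; since every visitation distribution obeys $\tilde d(s)\le 1\le M\, d_\rho^{\pi_i^{k+1},\pi_{-i}^k}(s)$ under Assumption~\ref{ass:explor}, and each agent index occurs in exactly $n-1$ pairs, the sum of all cross terms is at most a constant multiple of $\tfrac{M^3 n\max_\ell|\Ac_\ell|}{(1-\gamma)^4}\sum_i\sum_s d_\rho^{\pi_i^{k+1},\pi_{-i}^k}(s)\,\|\pi_i^{k+1}(\cdot|s)-\pi_i^k(\cdot|s)\|^2$. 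Finally, exactly as in the proof of Lemma~\ref{lem:phi_diff_mpg}, Pinsker's inequality and the symmetrized-KL identity give $\|\pi_i^{k+1}(\cdot|s)-\pi_i^k(\cdot|s)\|^2 \le 2\big(\KL(\pi_i^{k+1}(\cdot|s)\,\|\,\pi_i^k(\cdot|s)) + \KL(\pi_i^k(\cdot|s)\,\|\,\pi_i^{k+1}(\cdot|s))\big) = \tfrac{2\eta}{1-\gamma}\,\langle \pi_i^{k+1}(\cdot|s),\bar A_i^{\pi^k}(s,\cdot)\rangle$, the last equality using the NPG update~\eqref{MPG-NPG} and $\sum_{a_i}\pi_i^k(a_i|s)\bar A_i^{\pi^k}(s,a_i) = 0$. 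Substituting makes the total error a constant multiple of $\eta$ times the leading term, and subtracting it yields the claimed inequality with coefficient $\tfrac{1}{1-\gamma} - \tfrac{2M^3 n\max_\ell|\Ac_\ell|\eta}{(1-\gamma)^3}$.

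I expect the second-order step to be the main obstacle: one has to verify carefully that the single-deviation identity of Definition~\ref{def_general_MPG} really does chain into the mixed second-difference identity needed to invoke Lemma~\ref{lem:2nd_ord_perf_diff_ding} (the subtle point being that the pair $\{i,j\}$ does face a genuine common-payoff sub-game once the remaining agents are held fixed), and one must keep track of the several visitation-distribution substitutions and the combinatorial factor $n$ so that the claimed powers of $M$ and $(1-\gamma)$ come out correctly. The remaining manipulations are routine and mirror Section~\ref{sec:app-mpg} step for step.
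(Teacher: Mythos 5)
Your proposal follows the same route as the paper: the paper's own argument for Lemma~\ref{lem:general_perf_bound} is only a sketch, and it prescribes exactly the steps you carry out — decompose $\Phi^{\pi^{k+1}}(\rho)-\Phi^{\pi^k}(\rho)$ via Lemma~\ref{lem:perf_diff_ding}, convert the single-deviation terms to value differences through Definition~\ref{def_general_MPG} and the performance difference lemma, control the pairwise cross terms with Lemma~\ref{lem:2nd_ord_perf_diff_ding}, and absorb them into the leading term via the Pinsker/symmetrized-KL identity from the proof of Lemma~\ref{lem:phi_diff_mpg}. Your reduction of each $\Phi$-cross-difference to a second-order difference of $V_i$ (by applying the single-deviation identity twice and subtracting) is the right way to make the sketch rigorous.

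One caveat, which you flag but do not resolve: if Lemma~\ref{lem:2nd_ord_perf_diff_ding} is applied literally with its $(1-\gamma)^{-4}$ prefactor, the KL substitution contributes an additional $\eta/(1-\gamma)$, so the error term comes out as order $M^2 n \max_i|\Ac_i|\,\eta/(1-\gamma)^5$ times the leading sum, not the $M^3 n \max_i|\Ac_i|\,\eta/(1-\gamma)^3$ appearing in the lemma statement; since $M \geq 2/(1-\gamma)^2$ is not guaranteed, the extra power of $M$ does not absorb the two missing powers of $(1-\gamma)$. This bookkeeping discrepancy is inherited from the paper's own sketch rather than introduced by your argument, and it does not affect the $\order{1/\epsilon}$ conclusion drawn from the lemma, but it should be stated explicitly rather than glossed over in the final substitution.
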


The proof of Lemma \ref{lem:general_perf_bound} closely follows proof of Lemma \ref{lem:phi_diff_mpg}. First the potential function difference $\Phi^{\pi^{k+1}}(\rho) - \Phi^{\pi^k}(\rho)$ is decomposed following Lemma \ref{lem:phi_diff_mpg}, with each term defined as a potential function that only differs by the local policy $\pi_i$. Different from \ref{lem:phi_diff_mpg}, the decomposition is based on Definition \ref{def_general_MPG} instead.
Then each term can be bounded using Lemma \ref{lem:perf_diff_ding} with the addition of Lemma \ref{lem:2nd_ord_perf_diff_ding} to handle the residual terms.

Combining Lemma \ref{lem:general_perf_bound} and Lemma \ref{lem:appendix_f}, we can get a similar convergence guarantee without the explicit definition of $\phi$, as stated in the following theorem:

\begin{theorem}
Consider a MPG in the sense of Definition \ref{def_general_MPG} {with isolated stationary points, where the policy update follows NPG update in} (\ref{MPG-NPG}). For any $K \geq 1$, choosing $\eta = \frac{(1-\gamma)^2}{4nM^3\max_i|\Ac_i|}$, we have
    \begin{align*}
    \frac{1}{K}\sum^{K-1}_{k=0} \text{NE-gap}(\pi^k) &\leq \frac{2 M \phi_{max} }{K (1-\gamma)} (1 + 
    \frac{8nM^3 \max_i|\Ac_i|}{c \delta^* (1-\gamma)} 
    + {\frac{ K' }{2M}}).
\end{align*}
\label{thm:gen-mpg}
\end{theorem}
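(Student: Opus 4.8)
The plan is to follow the proof of Theorem \ref{NPG_thm}, and then of Corollary \ref{NPG_thm_lim}, essentially line by line, with the single change that the potential-improvement estimate of Lemma \ref{lem:phi_diff_mpg} is replaced by its counterpart Lemma \ref{lem:general_perf_bound}, which is valid for the general formulation of Definition \ref{def_general_MPG}. As in Corollary \ref{NPG_thm_lim}, the argument tacitly invokes the analogue of Proposition \ref{prop:asym} (isolated stationary points force $\pi^k\to\pi^\infty$ to a Nash policy, hence $\delta^k\to\delta^*$) together with the analogue of Assumption \ref{ass:lim} ($\delta^*>0$), so that there is a finite $K'$ with $\delta^k\ge\delta^*/2$ for every $k\ge K'$.

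First I would record that the prescribed step size $\eta=\frac{(1-\gamma)^2}{4 n M^3\max_i|\Ac_i|}$ is precisely the value that collapses the prefactor of Lemma \ref{lem:general_perf_bound}:
\[
\frac{1}{1-\gamma}-\frac{2 M^3\max_i|\Ac_i|\,n\eta}{(1-\gamma)^3}=\frac{1}{2(1-\gamma)},
\]
just as $\eta=\frac{(1-\gamma)^2}{2\sqrt n\phi_{max}}$ does in Lemma \ref{lem:phi_diff_mpg}. Starting from the performance-difference upper bound on $\text{NE-gap}(\pi^k)$ and changing the occupancy measure from $d_\rho^{\pi_i^{*k},\pi_{-i}^k}$ to the distribution $d(\cdot)$ appearing in Lemma \ref{lem:general_perf_bound} (at the cost of the distribution-mismatch factor $M=\sup_\pi\max_s 1/d_\rho^\pi(s)$, which bounds the ratio regardless of which policies index the two measures, since $\max_{a_i}\bar A_i^{\pi^k}(s,a_i)\ge 0$), I obtain an upper bound of the form $\frac{M}{1-\gamma}\sum_s d(s)\,f^k(\infty)$, where $f^k$ is the function of Lemma \ref{lem:appendix_f}, with $f^k(\infty)=\sum_i\max_{a_i}\bar A_i^{\pi^k}(s,a_i)$ and $f^k(\eta)=\sum_i\langle\pi_i^{k+1}(\cdot|s),\bar A_i^{\pi^k}(s,\cdot)\rangle$. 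Applying Lemma \ref{lem:appendix_f} with $\alpha=\eta$ to pass from $f^k(\infty)$ to $f^k(\eta)$ at the price of the factor $1+\frac{1-\gamma}{c\delta^k\eta}=1+\frac{4 n M^3\max_i|\Ac_i|}{c\delta^k(1-\gamma)}$, and then using Lemma \ref{lem:general_perf_bound} with the collapsed prefactor to turn $\sum_s d(s)f^k(\eta)$ into $2(1-\gamma)\bigl(\Phi^{\pi^{k+1}}(\rho)-\Phi^{\pi^k}(\rho)\bigr)$, yields the per-step estimate
\[
\text{NE-gap}(\pi^k)\le 2M\Bigl(1+\tfrac{4 n M^3\max_i|\Ac_i|}{c\delta^k(1-\gamma)}\Bigr)\bigl(\Phi^{\pi^{k+1}}(\rho)-\Phi^{\pi^k}(\rho)\bigr).
\]

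It then remains to run the truncation exactly as in the proof of Corollary \ref{NPG_thm_lim}. For $k\ge K'$ I substitute $\delta^k\ge\delta^*/2$ to get $\text{NE-gap}(\pi^k)\le 2M\bigl(1+\frac{8 n M^3\max_i|\Ac_i|}{c\delta^*(1-\gamma)}\bigr)(\Phi^{\pi^{k+1}}(\rho)-\Phi^{\pi^k}(\rho))$ and telescope over $k=K',\dots,K-1$, bounding the accumulated potential increase by $\phi_{max}/(1-\gamma)$; for $k<K'$ I use the crude bound $\text{NE-gap}(\pi^k)\le\phi_{max}/(1-\gamma)$ coming from boundedness of $\phi$; dividing by $K$ and factoring out $\frac{2M\phi_{max}}{K(1-\gamma)}$ then recovers the claimed inequality, with the $K'/(2M)$ term contributed by the first $K'$ iterations. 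The only genuinely new ingredient is Lemma \ref{lem:general_perf_bound}, whose proof is meant to parallel that of Lemma \ref{lem:phi_diff_mpg} but to decompose $\Phi^{\pi^{k+1}}-\Phi^{\pi^k}$ through the multilinear telescoping identity of Lemma \ref{lem:perf_diff_ding} (based on Definition \ref{def_general_MPG} rather than on an explicit $\phi$) and to absorb the resulting second-order cross terms via Lemma \ref{lem:2nd_ord_perf_diff_ding}. I expect the main obstacle to be exactly this step: verifying that the substitution reproduces a clean prefactor of the form $\frac{1}{1-\gamma}-\frac{2M^3\max_i|\Ac_i|n\eta}{(1-\gamma)^3}$, that the cross terms are dominated by the $KL$-based argument after the usual $\|\cdot\|_1$/Pinsker manipulations, and that the (somewhat unusual) visitation measure carried by Lemma \ref{lem:general_perf_bound} remains compatible with the change-of-measure step above; the last point should go through because $M$ lower-bounds $d_\rho^\pi(s)$ uniformly over all policies, but the bookkeeping is where the care is needed.
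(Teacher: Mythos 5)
Your proposal matches the paper's intended argument: the paper itself only states that the proof ``follows Theorem \ref{NPG_thm} and Corollary \ref{NPG_thm_lim}, and the only difference in the process is the different choice of $\eta$ from Lemma \ref{lem:general_perf_bound},'' and your step-size computation collapsing the prefactor to $\frac{1}{2(1-\gamma)}$, the resulting factor $1+\frac{4nM^3\max_i|\Ac_i|}{c\,\delta^k(1-\gamma)}$ from Lemma \ref{lem:appendix_f}, and the $K'$-truncation contributing $\frac{K'}{2M}$ all reproduce exactly the stated bound. The caveat you flag---that Lemma \ref{lem:general_perf_bound} itself is only sketched---applies equally to the paper, which explicitly omits that proof for brevity.
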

The proof of Theorem \ref{thm:gen-mpg} follows Theorem \ref{NPG_thm} and Corollary \ref{NPG_thm_lim}, and the only difference in the process is the different choice of $\eta$ from Lemma \ref{lem:general_perf_bound}.
Compared to Corollary \ref{NPG_thm_lim}, Theorem \ref{thm:gen-mpg} incorporates additional terms on $M$ (distribution mismatch coefficient) and $|\Ac_i|$ (size of action space), but the $O(1/\epsilon)$ order is still maintained.}

\section{Supporting Lemmas}
\begin{lemma}\label{lem:ratio}
Given a reward vector $\mathbf{r} = [r_1, r_2, \cdots, r_n]$ where $r_1 > r_2 > r_3 > \cdots > r_n$ with weights $\pi_1 e^{\alpha r_1}, \pi_2 e^{\alpha r_2}, \cdots, \pi_n e^{\alpha r_n}$, we have
\begin{align*}
    \frac{\sum_{i=1}^n r_i \pi_i e^{\alpha r_i}}{\sum_{i=1}^n \pi_i e^{\alpha r_i}} \geq \frac{ r_1\pi_1 e^{\alpha r_1} + \sum_{i=2}^n r_i \pi_i e^{\alpha r_2} }{ \pi_1 e^{\alpha r_1} + \sum_{i=2}^n \pi_i e^{\alpha r_2} }.
\end{align*}
\end{lemma}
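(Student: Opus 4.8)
The plan is to recognize both sides as \emph{weighted averages} of the strictly decreasing sequence $r_1 > \cdots > r_n$, and to show that passing from the left-hand side to the right-hand side only shifts probability mass toward the smaller $r_i$'s, which can only decrease the average. Concretely, write $w_i := \pi_i e^{\alpha r_i}$ for the weights on the left, and $v_1 := \pi_1 e^{\alpha r_1} = w_1$, $v_i := \pi_i e^{\alpha r_2}$ for $i \geq 2$ for the weights on the right. Since $\alpha \geq 0$ (which is the regime in which this lemma is invoked, with exponent $\alpha$ in Lemma~\ref{lem:upper} and $\alpha/(1-\gamma)$ in Lemma~\ref{lem:appendix_f}) and $r_i \leq r_2$ for every $i \geq 2$, we have $v_i \geq w_i$ for all $i$, with equality at $i=1,2$. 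Assuming, as holds in every application (all $\pi_i$ are strictly positive under the softmax parametrization), that the two denominators $\sum_i w_i$ and $\sum_i v_i$ are strictly positive, the claimed inequality is equivalent, after clearing denominators, to
\[
\Big(\sum_i w_i r_i\Big)\Big(\sum_j v_j\Big) \;\geq\; \Big(\sum_i v_i r_i\Big)\Big(\sum_j w_j\Big), \qquad\text{i.e.}\qquad \sum_{i,j} w_i v_j\,(r_i - r_j) \;\geq\; 0,
\]
where the last equivalence comes from relabelling $i \leftrightarrow j$ in the second product.

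Next I would symmetrize this double sum: averaging it with its image under $i \leftrightarrow j$ gives $\sum_{i,j} w_i v_j (r_i - r_j) = \tfrac12 \sum_{i,j} (w_i v_j - w_j v_i)(r_i - r_j) = \sum_{i<j} (w_i v_j - w_j v_i)(r_i - r_j)$, the last step because each summand is invariant under $i \leftrightarrow j$. Hence it suffices to show every term with $i<j$ is nonnegative. Since the rewards are strictly decreasing, $r_i - r_j > 0$ for $i<j$, so it remains to verify $w_i v_j \geq w_j v_i$, equivalently that $w_i/v_i$ is nonincreasing in $i$. This is the single computation: $w_1/v_1 = 1 = w_2/v_2$, while for $i \geq 2$ we have $w_i/v_i = e^{\alpha(r_i - r_2)} \leq 1$, and this is nonincreasing in $i$ because $\alpha \geq 0$ and $r_i$ is strictly decreasing. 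Thus $w_i/v_i$ is nonincreasing over all of $\{1,\dots,n\}$, every summand is nonnegative, and the inequality follows.

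There is no serious obstacle here; the statement is an elementary rearrangement-type inequality, and the only points needing care are recording that $\alpha \geq 0$ (for $\alpha < 0$ the inequality reverses), the boundary of the monotone-ratio check at $i=1$ versus $i=2$ (where both ratios equal $1$), and noting the denominators are positive so clearing them is sign-preserving. As an alternative I could split the argument into a "mixing" step — observing that $\tfrac{r_1 w_1 + \mu W}{w_1 + W}$ is decreasing in the total off-index-$1$ mass $W$ and increasing in its average $\mu$ whenever $\mu < r_1$ — combined with the same monotone-ratio fact applied only to the sub-average over $\{2,\dots,n\}$; but the one-shot symmetrization above is shorter, so I would present that.
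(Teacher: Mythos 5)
Your proof is correct, but it takes a genuinely different route from the paper's. The paper argues by induction on $n$: it first replaces the weight $\pi_n e^{\alpha r_n}$ on the smallest reward by the larger weight $\pi_n e^{\alpha r_{n-1}}$ (which can only decrease the weighted average), then merges indices $n-1$ and $n$ into a single index carrying the reward $\frac{r_{n-1}\pi_{n-1}+r_n\pi_n}{\pi_{n-1}+\pi_n}$ and the combined weight $(\pi_{n-1}+\pi_n)e^{\alpha r_{n-1}}$, and repeats until only the exponents $e^{\alpha r_1}$ and $e^{\alpha r_2}$ remain. You instead clear denominators, reduce the claim to $\sum_{i<j}(w_iv_j-w_jv_i)(r_i-r_j)\ge 0$ by symmetrization, and verify termwise nonnegativity via the monotonicity of the likelihood ratio $w_i/v_i$ --- essentially a Chebyshev-sum / monotone-likelihood-ratio argument. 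Your version is a one-shot computation that avoids tracking the merged reward vector through the induction, and it has the virtue of making explicit two hypotheses the paper leaves implicit: that $\alpha\ge 0$ (the inequality reverses for $\alpha<0$, and the paper's own first inductive step also silently uses $\alpha\ge 0$ when it enlarges $e^{\alpha r_n}$ to $e^{\alpha r_{n-1}}$) and that the denominators are positive. The paper's merging argument, on the other hand, is self-contained at each step (each merge is the elementary fact that shifting mass onto the minimum lowers the average) and produces the intermediate ``collapsed'' reward vectors explicitly, though at the cost of a heavier bookkeeping burden. Both are complete proofs of the stated inequality.
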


\begin{proof}
The above claim can be proved by induction. We first have
\begin{align*}
    \frac{\sum_{i=1}^n r_i \pi_i e^{\alpha r_i}}{\sum_{i=1}^n \pi_i e^{\alpha r_i}} &\geq \frac{ \sum_{i = 1}^{n-1} r_i \pi_i e^{\alpha r_i} + r_n \pi_n e^{\alpha r_{n-1}} }{ \sum_{i=1}^{n-1} \pi_i e^{\alpha r_i} + \pi_n e^{\alpha r_{n-1}}} \\
    &= \frac{ \sum_{i =1}^{n-2} r_i \pi_i e^{\alpha r_i} + \frac{ r_{n-1}\pi_{n-1} + r_{n}\pi_{n} }{\pi_{n-1} + \pi_{n}} (\pi_{n-1} + \pi_{n}) e^{\alpha r_{n-1}} }{ \sum_{i = 1}^{n-2} \pi_i e^{\alpha r_i} + (\pi_{n-1} + \pi_{n}) e^{\alpha r_{n-1}} }.
\end{align*}
This gives a new reward vector
\begin{align*}
    r'_1 = r_1 > r'_2 = r_2  > \cdots > r'_{n-2} = r_{n-2} > r'_{n-1} = \frac{ r_{n-1}\pi_{n-1} + r_{n}\pi_{n} }{\pi_{n-1} + \pi_{n}},
\end{align*}
with weights
\begin{align*}
    \pi_1 e^{\alpha r_1}, \pi_2 e^{\alpha r_2}, \cdots, \pi_{n-2} e^{\alpha r_{n-2}} , (\pi_{n-1} + \pi_{n}) e^{\alpha r_{n-1}}. 
\end{align*}
Define $\pi_{n-1}' := \pi_{n-1} + \pi_n$ and by similar argument as above,
\begin{align*}
    RHS & = \frac{ \sum_{i = 1}^{n-2} r_i \pi_i e^{\alpha r_i} + r'_{n-1} \pi'_{n-1} e^{\alpha r_{n-1}} }{ \sum_{i=1}^{n-2} \pi_i e^{\alpha r_i} + \pi'_{n-1} e^{\alpha r_{n-1}}} \\
    &\geq \frac{ \sum_{i =1}^{n-3} r_i \pi_i e^{\alpha r_i} + \frac{ r_{n-2}\pi_{n-2} + r'_{n-1}\pi'_{n-1} }{\pi_{n-2} + \pi'_{n-1}} (\pi_{n-2} + \pi'_{n-1}) e^{\alpha r_{n-2}} }{ \sum_{i = 1}^{n-3} \pi_i e^{\alpha r_i} + (\pi_{n-2} + \pi_{n-1}') e^{\alpha r_{n-2}} } \\
    & = \frac{ \sum_{i =1}^{n-3} r_i \pi_i e^{\alpha r_i} + \frac{ r_{n-2}\pi_{n-2} + r_{n-1}\pi_{n-1} + r_{n}\pi_{n} }{\pi_{n-2} + \pi_{n-1} + \pi_{n}} (\pi_{n-2} + \pi_{n-1} + \pi_{n}) e^{\alpha r_{n-2}} }{ \sum_{i = 1}^{n-3} \pi_i e^{\alpha r_i} + (\pi_{n-2} + \pi_{n-1} + \pi_{n} ) e^{\alpha r_{n-2}}}.
\end{align*}
This gives a reward vector
\begin{align*}
    r''_1 = r_1 > r''_2 = r_2  > \cdots > r''_{n-2} = \frac{r_{n-2} \pi_{n-2} + r_{n-1}\pi_{n-1} + r_{n}\pi_{n} }{\pi_{n-2} + \pi_{n-1} + \pi_{n}},
\end{align*}
with weights
\begin{align*}
    \pi_1 e^{\alpha r_1}, \pi_2 e^{\alpha r_2}, \cdots, \pi_{n-3} e^{\alpha r_{n-3}}, (\pi_{n-2} + \pi_{n-1} + \pi_{n}) e^{\alpha r_{n-2}}. 
\end{align*}
By induction, we have 
\begin{align*}
    RHS \geq \frac{ r_1\pi_1 e^{\alpha r_1} + \sum_{i=2}^n r_i \pi_i e^{\alpha r_2} }{ \pi_1 e^{\alpha r_1} + \sum_{i=2}^n \pi_i e^{\alpha r_2} }.
\end{align*} 
\end{proof}

\change{
\section{Additional Numerical Experiments \label{app:addition-exp}}
We illustrate the impact of $\delta^*$ on the algorithm in the following example. 
We consider a 2-by-2 matrix game with the reward matrix
$$r = \begin{bmatrix}
    1&2\\3+\delta^*&3
\end{bmatrix}.$$
We run the NPG algorithm with the same initial policy under various values of $\delta^*$ ranging from $1e^{-3}$ to $10$. The NE-gap and the $L_1$ accuracy over number of running steps are shown in Figures \ref{fig:NE_gap_dif_delta}-\ref{fig:L1_dif_delta}. As illustrated in the figure, $\delta^*$ plays an important role in the convergence of the algorithm. Larger $\delta^*$ results in faster convergence, which corroborates our theoretical findings.

\begin{figure}
\begin{subfigure}{0.45\textwidth}
    \centering
    \includegraphics[width=\textwidth]{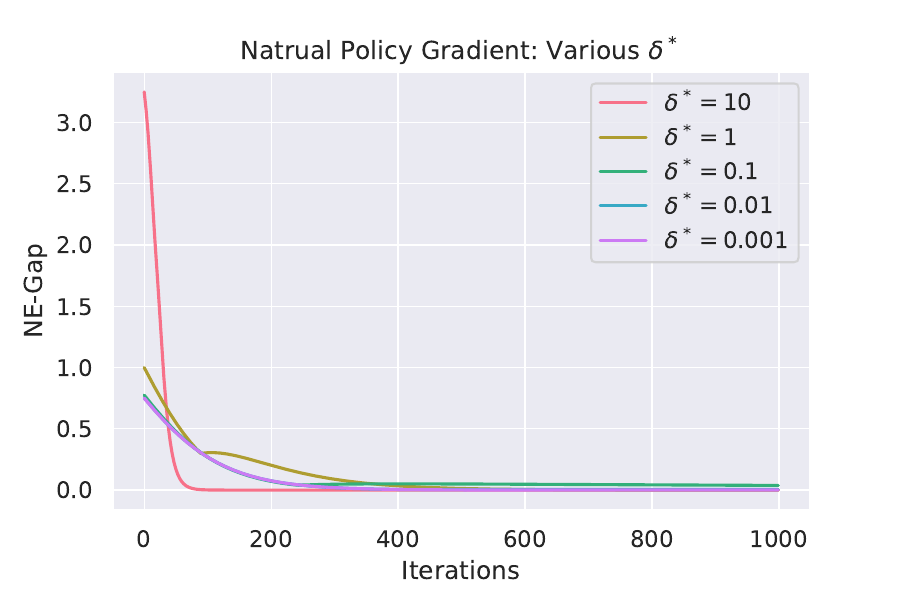}
    \caption{NE-gap of matrix game with varying $\delta^*$}
    \label{fig:NE_gap_dif_delta}
\end{subfigure}
\begin{subfigure}{0.5\textwidth}
    \centering
    \includegraphics[width=\textwidth]{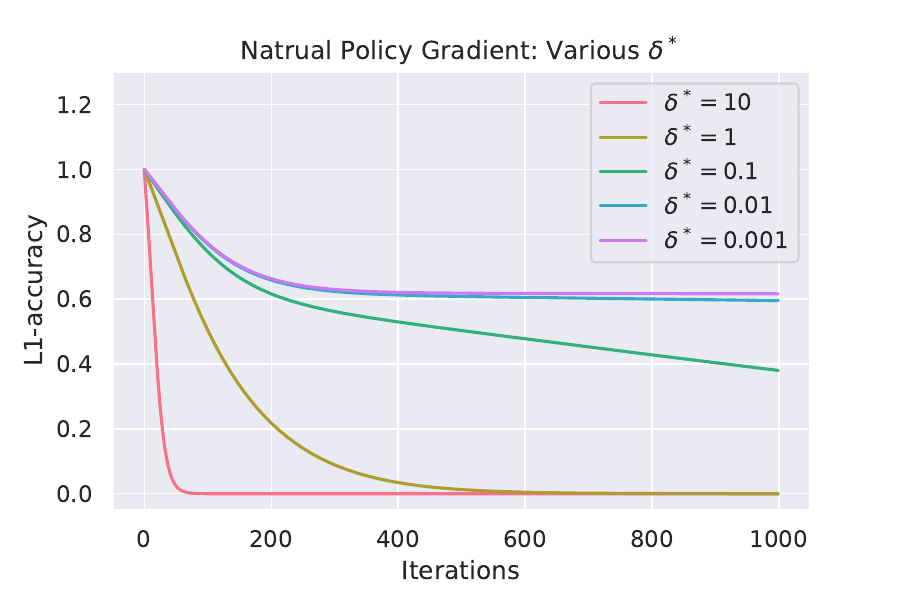}
    \caption{$L_1$ accuracy of matrix game with varying $\delta^*$}
    \label{fig:L1_dif_delta}
\end{subfigure}
\end{figure}
}

\end{document}